\title{Queueing Matching Bandits with Preference Feedback}
\author{%
  Jung-hun Kim\\
  Seoul National University\\
  Seoul, South Korea\\
\texttt{junghunkim@snu.ac.kr} \\
  \And
  Min-hwan Oh\\
  Seoul National University\\
  Seoul, South Korea\\
    \texttt{minoh@snu.ac.kr} \\
}
\begin{document}

\maketitle

\begin{abstract}
 In this study, we consider multi-class multi-server asymmetric queueing systems consisting of $N$ queues on one side and $K$ servers on the other side, where jobs randomly arrive in queues at each time. The service rate of each job-server assignment is unknown and modeled by a feature-based Multi-nomial Logit (MNL) function. At each time, a scheduler assigns jobs to servers, and each server stochastically serves at most one job based on its preferences over the assigned jobs. The primary goal of the algorithm is to stabilize the queues in the system while learning the service rates of servers. To achieve this goal, we propose algorithms based on UCB and Thompson Sampling, which achieve system stability with an average queue length bound of $\Ocal(\min\{N,K\}/\epsilon)$ for a large time horizon $T$, where $\epsilon$ is a traffic slackness of the system. Furthermore, the algorithms achieve sublinear regret bounds of $\tilde{\Ocal}(\min\{\sqrt{T}Q_{\max},T^{3/4}\})$, where $Q_{\max}$ represents the maximum queue length over agents and times. 
 Lastly, we provide experimental results to demonstrate the performance of our algorithms.
\end{abstract}

\section{Introduction}
 Multi-class multi-server queueing systems, which have been extensively studied by \cite{harrison1998heavy,glazebrook2001parallel,mandelbaum2004scheduling,ayesta2017scheduling,afeche2022optimal}, are motivated by real-world applications such as ride-hailing platforms, where riders are assigned to drivers. Other examples are online job markets, where applicants are recommended for employment, and online labor service markets, where tasks are recommended to freelance workers.  In these systems, there are two sides: queues (agents) on one side and servers (arms) on the other. At each time step, jobs stochastically arrive in multiple queues and are then assigned to multiple servers by a matchmaking or scheduler algorithm to stabilize the systems. Importantly, the previous work assumes that the service rates for each server are known for scheduling jobs.

However, real-world observations reveal that the service rate may not be known beforehand. Therefore, learning the service rates associated with the stochastic behavior of servers is essential for real-world applications. Furthermore, the behavior of servers may depend on their (relative) preferences over assigned jobs. For example, in ride-hailing platforms where there are riders and drivers, the preferences of drivers over riders (not necessarily based on personal information but rather a rider's pick-up location, destination, etc.) may not be known in advance to riders or even the systems, necessitating the learning of drivers' preferences through preference feedback over assigned riders.

Learning service rates in an online manner, based on partial feedback from each scheduling, is highly associated with multi-armed bandit problems \citep{lattimore2020bandit}. These problems are fundamental sequential learning tasks where, for queueing systems, a job is assigned to a server and receives feedback on whether the job is served or not. The utilization of bandit strategies for queueing systems has been recently studied by 
\citet{choudhury2021job,stahlbuhk2021learning,gaitonde2020stability}; \citet{freund2022efficient,hsu2022integrated,krishnasamy2016regret,krishnasamy2018learning,krishnasamy2021learning,sentenac2021decentralized,yang2023learning,huang2024lyapunov}, which primary focus is naturally on establishing stability of the systems while learning service rates.

However, there are still gaps between the previous models and the real-world applications. In all previous work, the inherent service rates for each job-server assignment are determined regardless of other jobs assigned to the same server (or it is not allowed to assign multiple jobs to the same server). This differs from real-world scenarios where service rate may depend on (relative) preference over multiple assigned jobs, as seen in online labor markets or ride-hailing platforms. Additionally, in \citet{freund2022efficient,krishnasamy2018learning,krishnasamy2021learning}, which is closely related work to our study regarding multi-class multi-server queueing systems with asymmetric service rates, it was allowed to assign a job in an empty queue to a server to obtain feedback (null request), which may not be realistic. Furthermore, all of the previous work considers a simple model without a generalizable structure or utilizing features for service rates.

Another line of related work is matching bandits, in which there are two sides (agents and arms), and the behavior of arms is based on their preferences \citep{liu2020assortment,sankararaman2020dominate,liu2021bandit,basu2021beyond,zhang2022matching,kong2023player}.  However, their focus is on static settings, whereas our work considers dynamic job arrivals in queues. Additionally,  it is worth mentioning online matching problems \citep{karp1990optimal,mehta2007adwords,mehta2013online,gamlath2019online,fuchs2005online,kesselheim2013optimal},  where the sole focus is on optimizing matching rather than learning underlying models from bandit feedback.  In contrast, our study concentrates on bandit problems related to learning latent utilities by managing the tradeoff between exploration and exploitation while establishing the stability of the queueing systems. Importantly, neither previous work on matching bandits nor online matching problems addresses the stability of queueing systems, which is our primary focus.

\begin{figure}
\centering  
\subfigure[Jobs randomly arrive in queues (agents) and there are unknown different values of utility between queues and servers (arms) closely related to preferences.]{\label{fig:a}\includegraphics[height=31mm]{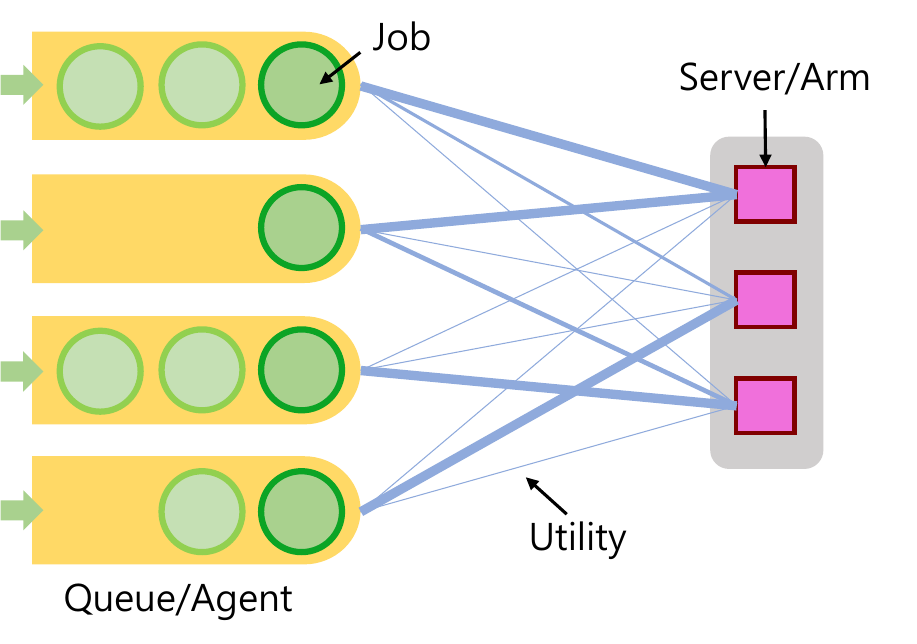}}
\hspace{0.4cm}
\subfigure[Each queue is assigned to a server based on a scheduler.  
] {\label{fig:b}\includegraphics[height=31mm]{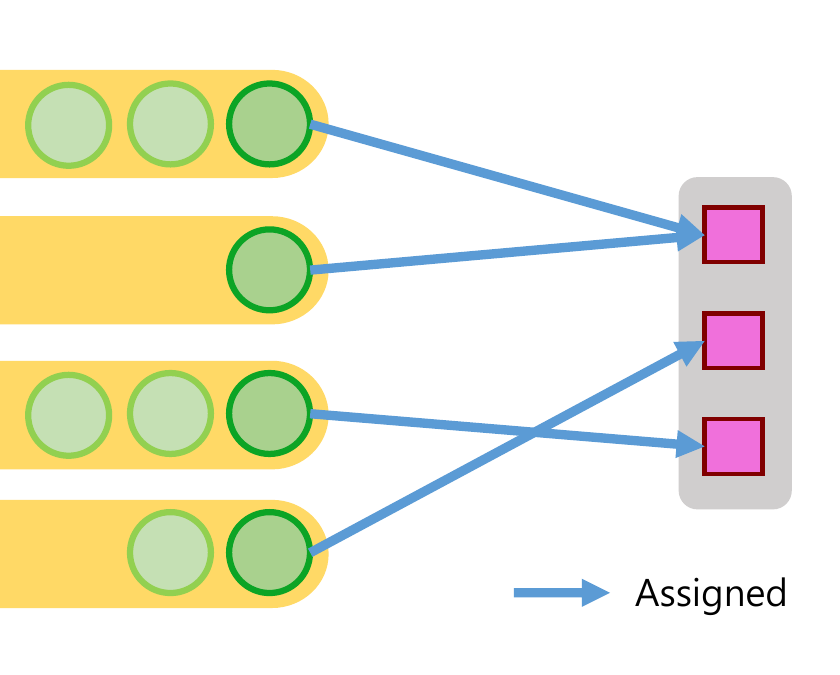}}
\hspace{0.4cm}
\subfigure[Each server accepts at most one of the assigned jobs based on its preference. A job of each accepted queue is served while jobs of rejected  queues remain.]{\label{fig:c}\includegraphics[height=31mm]{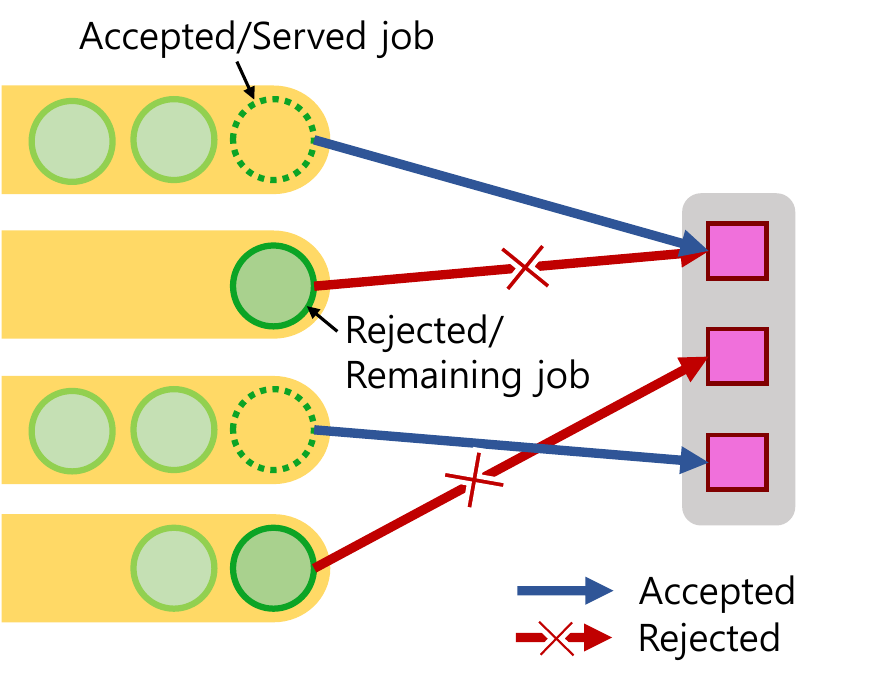}}
\caption{Illustration of queueing process with 4 queues/agents ($N = 4$) and 3 servers/arms ($K = 3$)}\label{fig:problem}
\end{figure}
In this work, we propose a novel and practical framework for queueing matching bandits under preference feedback. In the following, we describe our setting accompanied by an illustration in Figure~\ref{fig:problem}. (a) Multiple queues (agents) and servers (arms) are involved, with jobs arriving randomly in queues. Additionally, there are unknown utility values between queues and servers regarding preferences. (b) Nonempty queue are then assigned to servers based on a scheduling policy. 
(c) Subsequently, each server stochastically accepts at most one of the assigned queues based on its unknown preference and serves one unit of job of the accepted queue.
The rejected jobs remain in the queues. The processes of (a), (b), and (c) are repeated over time.
\paragraph{Summary of Our Contributions}
\begin{itemize}
\item We propose a novel and practical framework for queueing matching bandits, where $N$ agents and $K$ arms are involved, and jobs randomly arrive in agents' queues each round. Subsequently, a scheduler assigns agents to arms, and the service rates for the assigned agents depend on the arms' unknown preferences over them. These service rates are modeled using a feature-based multinomial logit function. To the best of our knowledge, our paper is the first to investigate either a feature-based service function or preference feedback using a multinomial logit function for service rates.
\item We propose algorithms based on Upper Confidence Bound (UCB) and Thompson Sampling (TS), which achieve stability with average queue length bounds of $\Ocal(\frac{\min\{N,K\}}{\epsilon })$ for a large time horizon  $T$,  under a traffic slackness of $\epsilon$.
\item {Furthermore, the algorithms achieve sublinear regret bounds of $\tilde{\Ocal}(\min\{\sqrt{T}Q_{\max},T^{3/4}\})$, where $Q_{\max}$ represents the maximum queue length over agents and times. It is worth mentioning that regret analysis 
has not been studied in the closely related queueing bandit literature such as \citet{sentenac2021decentralized,freund2022efficient,yang2023learning}.} 
\item Finally, we present experimental results demonstrating the stability and regret performance of our algorithms with comparison to previously suggested methods.
\end{itemize}
\section{Related Work}
\paragraph{Bandits for Queues.}
Learning the unknown service rates for queueing systems in an online manner under bandit feedback has recently gained widespread attention \citep{choudhury2021job,stahlbuhk2018learning,krishnasamy2018learning,stahlbuhk2021learning,hsu2022integrated,krishnasamy2021learning,sentenac2021decentralized,yang2023learning,huang2024lyapunov}. While \citet{stahlbuhk2018learning,krishnasamy2016regret,choudhury2021job} focused on a single queue with multiple servers, 
aiming to efficiently assign the queue to the optimal server while learning the service rates, 
subsequent study of \citet{krishnasamy2021learning} expanded this scope to encompass multiple queues and multiple servers. However, this extension relied on a strong structural assumption that each agent possesses a unique and distinctly optimal server without sharing the same optimal server. Consequently, the algorithms employed in such scenarios do not necessarily consider the dynamic queue lengths when scheduling jobs; instead, they concentrate solely on learning service rates to achieve optimal matching. As the closest work to our study, another line of work of  \citet{sentenac2021decentralized,freund2022efficient,yang2023learning} has focused more on dynamic queue lengths for multi-queue and multi-server scenarios without assuming the unique and distinct optimal servers for each agent. In this context, all of the previous work aimed to achieve queue length stability.


However, to the best of our knowledge, none of the previous works has focused on a structured model with features for service rates. Furthermore, previous studies either did not allow the assignment of different types of jobs to a server simultaneously \citep{krishnasamy2021learning,yang2023learning}, or, if they did, as in \citet{sentenac2021decentralized,freund2022efficient}, the inherent values of service rates for each job-server assignment remained constant regardless of the entire assignments.
Additionally, the server's behavior given multiple assigned jobs was either simple or deterministic, based on the uniform-randomly selected job among the assigned jobs \citep{sentenac2021decentralized}, or the job with the highest bid generated from the algorithm \citep{freund2022efficient}, respectively. However, these approaches may not reflect real-world scenarios, where the service rates depend on the relative preferences among the assigned jobs.
Lastly, in \cite{freund2022efficient,krishnasamy2021learning}, it was allowed to assign a job to a server from an empty queue for obtaining feedback, which may not be realistic.


In our study, we adopt a structured model incorporating features for unknown asymmetric service rates. We enable the assignment of multiple agents to the same server, where the server stochastically selects at most one of the assigned agents based on its undisclosed (relative) preference. Also, the assignment of an agent is available only when the queue of the agent is not empty. This scenario mirrors common situations in real-world applications such as ride-hailing platforms and online labor markets where available multiple riders or tasks can be assigned to a driver or a worker, respectively, and the driver or worker behaves stochastically depending on its unknown preference.

It is noteworthy that we focus not only on the stability of the systems regarding queue lengths but also on regret against an oracle, while the closely related works by \cite{sentenac2021decentralized, freund2022efficient, yang2023learning}, which considered multi-class multi-server queueing systems, only addressed stability analysis. Regret analysis has only been studied under some limited scenarios, such as
a single-server \citep{huang2024lyapunov}, a single-queue \citep{stahlbuhk2021learning}, or the strong assumption that each agent has a unique and well-separated optimal server~\citep{krishnasamy2021learning}.

\paragraph{Matching Bandits.}
Next, we investigate two-sided matching bandits, a topic initially explored by \citet{liu2020assortment} and subsequently studied by \citet{sankararaman2020dominate,liu2021bandit,basu2021beyond,zhang2022matching,kong2023player}. The primary goal is to minimize regret by attaining an optimal stable matching by learning agents' side preferences through stochastic reward feedback under static settings and deterministic behavior of arms with known preferences. 
Our study aligns with the model, wherein agents select arms based on preferences. However, our study sets itself apart from prior work on matching bandits in several key aspects. Firstly, we consider dynamic environments with job arrivals for agents. 
Secondly, we propose that arm behavior is stochastic, with unknown preferences, necessitating the learning of arms' preferences. Lastly, our main target is to stabilize the queue lengths in the systems rather than find stable matching for stable marriage~problems.

\paragraph{MNL Bandits.} Lastly, we examine MNL bandits which were initially proposed by \citet{agrawal2017mnl} and followed by  \citet{agrawal2017thompson, chen2023robust,oh2019thompson,oh2021multinomial}. In MNL bandits, the goal is to select assortments of arms to maximize reward, which is based on preferences over the arms in the selected assortment. In our study, we adopt the MNL model for arms' choice preferences in dynamic systems, which, to the best of our knowledge, is the first consideration of bandits for queueing systems.

\section{Problem Statement}\label{sec:prob}
There are $N$ agents (queues) and $K$ arms (servers).  At each time, a job for each agent $n\in [N]$ arrives randomly following a Bernoulli distribution with an unknown arrival rate $\lambda_n\in[0,1]$ \footnote{ Our study can be easily extended to the multi-unit case where random variable $A_n(t)$ lies in $[0,M]$ for some $M>0$ with mean $\lambda_n\in[0,M]$.}. Then at each time $t\in[T]$ where $T$ is the time horizon, each agent $n\in[N]$ is assigned to an arm $k_{n,t}^\pi\in[K]$ by a policy $\pi$. For notational simplicity, we use $k_{n,t}$ for $k_{n,t}^\pi$ when there is no confusion. Let $Q_n(t)$ be the length of the queue for jobs of agent $n\in[N]$ at the beginning of time slot $t$ in the system. We consider that at most $L$ agents can be assigned to each arm $k\in[K]$ at each time and $N\le KL$ to ensure that all agents can be assigned to arms. Then,  we define the set of agents (assortment) assigned to arm $k$ by policy $\pi$ at time $t$ as $S_{k,t}=\{n\in[N]:k_{n,t}=k, Q_n(t)\neq 0\}$, considering only available agents with nonempty queues.

Now we explain the structure of our model. Each agent $n$ has known $d$-dimensional feature information of $x_n\in\RR^d$, and each arm $k$ has latent (unknown) parameter $\theta_k\in\RR^d$ for preference utilities over agents. We adopt the Multi-nomial Logit (MNL) function commonly studied for preference feedback \citep{oh2019thompson,oh2021multinomial,agrawal2017mnl,agrawal2017thompson,chen2023robust}.
Then given assortment $S_{k,t}$, arm $k$ serves a job of agent $n\in S_{k,t}$ with a service probability (service rate) determined by the MNL model as \[\mu(n|S_{k,t},\theta_k)=\frac{\exp(x_n^\top \theta_k)}{1+\sum_{m\in S_{k,t}}\exp(x_m^\top \theta_k)}.\] We note that $\mu(n|S_{k,t},\theta_k)$ represents the preference of arm $k$ to agent $n$ over the assigned agents $S_{k,t}$. The service rate of each $n$ depends on the assigned agents rather than static by reflecting real-world scenarios, which is different from the conventional queueing problems. Following the MNL function, the arm is allowed not to serve any agents  (or allowed to serve null agent $n_0$) with the probability of 
$\mu(n_0|S_{k,t},\theta_k)=\frac{1}{1+\sum_{m\in S_{k,t}}\exp(x_m^\top \theta_k)}.$ Under the MNL model for the service rate, at each time, each arm accepts at most one agent and serves that agent's job. The queue lengths for the accepted agents are each reduced by one, while the queue lengths for the refused agents remain the same.

\paragraph{Objective Function.} Here we provide the goal of this problem. For describing the stochastic process in the systems, at each time $t$, let $A_n(t)\in\{0,1\}$ be a random variable with mean $\lambda_n$, which denotes whether a new job arrives in the queue of agent $n\in[N]$ (here, $1$ denotes arrival). Also, given that $n$ is assigned to arm $k_{n,t}$, let $D_n(t|S_{k_{n,t},t})\in\{0,1\}$ be a random variable with mean $\mu(n|S_{k_{n,t},t},\theta_{k_{n,t}})$, which represents whether the assigned agent $n$, given $S_{k_{n,t},t}$, is accepted by arm $k_{n,t}$  (here $1$ denotes acceptance).  Then the  queue length of the agent $n$ evolves as $Q_n(t+1)=(Q_n(t)+A_n(t)-D_n(t|S_{k_{n,t},t}))^+$ where $x^+$ denotes $\max\{x,0\}$ for $x\in\RR$. 

As in the previous work of queueing bandits for multiple types of agents and servers \citep{sentenac2021decentralized,freund2022efficient,yang2023learning,huang2024lyapunov}, which are closely related work to ours, our primary focus is on stabilizing the dynamic systems regarding queue lengths. For analyzing the stability of the systems, we define the average queue lengths over horizon time $T$ as 
 \[\Qcal(T)=\frac{1}{T}\sum_{t\in [T]}\sum_{n\in [N]}\mathbb{E}\left[Q_n(t)\right].\]
Then the goal of this problem is to design an online matching algorithm to assign the agents to arms for stabilizing the systems by bounding $\Qcal(T)$. We define the system stability as follows.
\begin{definition}\label{def:stability}
  The systems are denoted to be stable when $\lim _{T\rightarrow \infty}\Qcal(T) <\infty$. 
\end{definition}
The same stability definition was considered in \citet{neely2010queue,neely2010stability,freund2022efficient,yang2023learning,huang2024lyapunov}.  It is noteworthy that according to \citep{neely2010stability,neely2010queue,freund2022efficient}, the system satisfying this stability condition of Definition~\ref{def:stability} is denoted to be strongly stable.\footnote{As mentioned in  \cite{neely2010stability,freund2022efficient}, in general, this stability condition is stronger than the mean rate stability of $\lim_{T\rightarrow \infty}\frac{\mathbb{E}[\sum_{n\in[N]}Q_n(T)]}{T}=0$ but weaker than the uniform stability of $\mathbb{E}[\sum_{n\in[N]}Q_n(t)]\le C$ for all $t>0$ for some constant $C>0$. }

 For the analyses, we first present the regularity conditions.
\begin{assumption}\label{ass:bd}
$\|x_n\|_2\le 1$ for all $n\in[N]$ and $\|\theta_k\|_2\le 1$ for all $k\in[K]$.
\end{assumption}
\begin{assumption}
There exists $\kappa>0 $ such that  $\inf_{\theta\in\mathbb{R}^d :\|\theta\|_2\le 1}\mu(n|S,\theta)\mu(n_0|S,\theta)\ge \kappa $ for any $n\in S$ and $S\subset [N]$.\label{ass:kappa}
\end{assumption}
These regularity conditions are commonly taken into account in the logistic and MNL bandit literature \citep{faury2020improved,abeille2021instance,oh2019thompson,oh2021multinomial}. We note that, in the worst-case, $1/\kappa=O(L^2)$. 

We define the set of feasible disjoint assortments given any $\Ncal\subseteq [N]$ as 
 ${\mathcal{M} (\Ncal)=\{(S_{1},...,S_{K}):S_{k}\subseteq\Ncal,|S_k|\le L \ \ \forall k\in[K],S_{k}\cap S_{l}=\varnothing \: \forall k\neq l, \bigcup_{k\in [K]}S_{k}=\Ncal\}}$. Then, we consider the condition of traffic slackness for stability as follows.

 \begin{assumption}\label{ass:stable}
         For some traffic slackness $0<\epsilon<1$, there exists $\{S_{k}\}_{k\in[K]}\in \mathcal{M}([N])$ such that  $\lambda_n+\epsilon\le \mu(n|S_{k},\theta_k)$ for all $n\in S_{k}$ and $k\in[K]$.
 \end{assumption}

We note that similar slackness assumptions have been commonly considered in queueing bandits \citep{freund2022efficient,yang2023learning,huang2024lyapunov}. As $\epsilon$ decreases, achieving stability in our setting becomes more challenging. A discussion for a refined version of $\epsilon$ can be found in Appendix~\ref{app:epsilon}.


\section{Preliminary Study: An Oracle  Algorithm of MaxWeight}\label{sec:pre}

For queueing systems, 
MaxWeight~\cite{tassiulas1990stability} has been proven to have optimal throughput keeping the queues in networks stable under the known service rates~\citep{tassiulas1993dynamic,mckeown1999achieving,mandelbaum2004scheduling,stolyar2004maxweight}. 
Here, we analyze the oracle policy using MaxWeight under known $\theta_k$'s. We define the set of agents having non-empty queues at time $t$ as $\Ncal_t=\{n\in  [N]]: Q_n(t)\neq 0\}$. Then,
the oracle policy using MaxWeight in our setting is defined as 
$\{S_{k,t}\}_{k\in[K]}=\argmax_{\{S_k\}_{k\in[K]}\in \Mcal(\Ncal_t)}\sum_{k\in[K]}\sum_{n\in S_k}Q_n(t)\mu(n|S_k,\theta_k),$ where priority in the assortment scheduling is on queues with either large queue lengths or high service rates. 
We provide an analysis of stability of the MaxWeight oracle for known $\theta_k$'s as follows.
\begin{proposition} Given the prior knowledge of $\theta_k$ for all $k\in[K]$, the average queue length of MaxWeight is bounded as
    $\mathcal{Q}(T)=\Ocal\left(\frac{\min\{N,K\}}{\epsilon}\right)$,
    \label{prop:MW}
which implies that the algorithm achieves stability.
\end{proposition}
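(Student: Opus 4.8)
The plan is to analyze the one-step drift of the quadratic Lyapunov function $V(t) = \sum_{n\in[N]} Q_n(t)^2$ and show it has negative drift whenever the total queue length is large, which is the standard route for MaxWeight-type stability arguments. First I would write, using $Q_n(t+1) \le (Q_n(t) + A_n(t) - D_n(t\mid S_{k_{n,t},t}))^+$ and the elementary bound $((q+a-d)^+)^2 \le q^2 + (a-d)^2 + 2q(a-d)$ for $q\ge 0$, $a,d\in\{0,1\}$, that
\begin{align}
V(t+1) - V(t) \le \sum_{n\in[N]} (A_n(t) - D_n(t\mid S_{k_{n,t},t}))^2 + 2\sum_{n\in[N]} Q_n(t)\bigl(A_n(t) - D_n(t\mid S_{k_{n,t},t})\bigr).\nonumber
\end{align}
The first sum is at most $N$ deterministically. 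Taking conditional expectation given the queue state $\mathbf{Q}(t)$ and using $\mathbb{E}[A_n(t)] = \lambda_n$, $\mathbb{E}[D_n(t\mid S_{k_{n,t},t})\mid \mathbf{Q}(t)] = \mu(n\mid S_{k_{n,t},t},\theta_{k_{n,t}})$, I get
\begin{align}
\mathbb{E}[V(t+1)-V(t)\mid \mathbf{Q}(t)] \le N + 2\sum_{n\in[N]} Q_n(t)\lambda_n - 2\sum_{k\in[K]}\sum_{n\in S_{k,t}} Q_n(t)\,\mu(n\mid S_{k,t},\theta_k),\nonumber
\end{align}
where I used that agents with empty queues contribute nothing to either the $\lambda$-term (since $Q_n(t)=0$) or the service term (since they are excluded from the assortments).

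Next I would exploit optimality of the MaxWeight schedule. Since $\{S_{k,t}\}_{k}$ maximizes $\sum_k \sum_{n\in S_k} Q_n(t)\mu(n\mid S_k,\theta_k)$ over $\mathcal{M}(\mathcal{N}_t)$, and the slackness configuration $\{S_k^\star\}$ from Assumption~\ref{ass:stable} restricted to $\mathcal{N}_t$ is a feasible competitor (intersecting each $S_k^\star$ with $\mathcal{N}_t$ only removes agents, and by the nested structure of the MNL denominator, removing agents from an assortment does not decrease the service rate of any remaining agent, so $\mu(n\mid S_k^\star\cap\mathcal{N}_t,\theta_k)\ge \mu(n\mid S_k^\star,\theta_k)\ge \lambda_n+\epsilon$ for $n\in S_k^\star\cap\mathcal{N}_t$), I obtain
\begin{align}
\sum_{k\in[K]}\sum_{n\in S_{k,t}} Q_n(t)\,\mu(n\mid S_{k,t},\theta_k) \ge \sum_{n\in\mathcal{N}_t} Q_n(t)\bigl(\lambda_n + \epsilon\bigr) = \sum_{n\in[N]} Q_n(t)\lambda_n + \epsilon \sum_{n\in[N]} Q_n(t).\nonumber
\end{align}
Substituting back gives $\mathbb{E}[V(t+1)-V(t)\mid \mathbf{Q}(t)] \le N - 2\epsilon \sum_{n\in[N]} Q_n(t)$. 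Taking total expectation, summing over $t\in[T]$, telescoping, and using $V(1)=0$ and $V(T+1)\ge 0$ yields $\frac{1}{T}\sum_{t\in[T]}\sum_{n\in[N]}\mathbb{E}[Q_n(t)] \le \frac{N}{2\epsilon}$, hence $\mathcal{Q}(T) = \mathcal{O}(N/\epsilon)$ and in particular $\limsup_{T\to\infty}\mathcal{Q}(T) < \infty$.

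To sharpen $N$ to $\min\{N,K\}$ I would revisit the two places where $N$ entered. In the first-sum bound, instead of bounding each $(A_n - D_n)^2 \le 1$ crudely, I would note that in any round at most $K$ agents are actually served (one per arm), so $\sum_n D_n(t\mid\cdot) \le K$; combined with $\sum_n A_n(t)\le N$ a more careful accounting of the cross terms, or alternatively replacing the noise term by a bound of the form $N\wedge(\text{something involving }K)$ via the fact that $(a-d)^2 = a + d - 2ad \le a+d$, gives $\mathbb{E}\sum_n (A_n-D_n)^2 \le \sum_n\lambda_n + \mathbb{E}\sum_n D_n \le \mathcal{O}(\min\{N,K\})$ after noting $\sum_n \lambda_n \le \sum_n \mu(n\mid S_k^\star,\theta_k) \le K$ (each arm's MNL service probabilities over its assortment sum to at most $1$). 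I expect the main obstacle to be exactly this refinement: making the constant $\min\{N,K\}$ rather than $N$ rigorous requires carefully tracking that both the total arrival rate and the total service rate across the system are bounded by $K$ (because MNL choice probabilities per arm sum to less than one), and handling the cross-term $\mathbb{E}[A_n D_n\mid\mathbf{Q}(t)]$ correctly given that $A_n(t)$ and $D_n(t\mid\cdot)$ may not be independent of the scheduling decision — though since the schedule at time $t$ is $\mathbf{Q}(t)$-measurable and the arrival/service randomness is conditionally independent of it, this should go through cleanly.
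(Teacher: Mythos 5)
Your proposal is correct and follows essentially the same route as the paper's proof: a quadratic Lyapunov drift bound, comparison of the MaxWeight schedule against the slackness configuration of Assumption~\ref{ass:stable} restricted to the non-empty queues (using monotonicity of the MNL service rate under removal of agents), and the observation that both $\sum_n\lambda_n$ and $\sum_n \EE[D_n(t)]$ are bounded by $\min\{N,K\}$ (the $K$ coming from each arm's choice probabilities summing to at most one), which yields the $2\min\{N,K\}$ additive term and hence $\Qcal(T)\le 2\min\{N,K\}/\epsilon$ after telescoping. The refinement you flag as the "main obstacle" is handled in the paper exactly as you sketch it, via $A_n^2+D_n^2\le A_n+D_n$ for $\{0,1\}$-valued variables.
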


\vspace{-1mm}
\begin{proof}
 The proof is provided in Appendix~\ref{app:MW_prop}
\end{proof}
 

We note that the term of $\min\{N,K\}$ in the average queue length is a result of the system's variance. Additionally, as $\epsilon$ decreases, the stability of the system deteriorates due to the reduced traffic slackness. Further discussion regarding $\epsilon$ can be found in Appendix~\ref{app:epsilon}.
 
 Since the oracle algorithm requires prior knowledge of $\theta_k$'s for the service rate, we cannot use it directly in our bandit setting. In the following section, we propose algorithms incorporating an efficient learning procedure to achieve stability in our setting.

\section{Algorithms and  Analyses}

\subsection{UCB-based Algorithm}
 We first propose an algorithm based on the UCB strategy, \texttt{UCB-QMB} (Algorithm~\ref{alg:ucb}).  We define the negative log-likelihood as $f_{k,t}(\theta):=-\sum_{n\in S_{k,t}\cup \{n_0\}}y_{n,t}\log \mu(n|S_{k,t},\theta)$ where $y_{n,t}\in\{0,1\}$ is observed preference feedback  ($1$ denotes service acceptance, and $0$ denotes refusal) and define the gradient of the likelihood as
 \begin{align}
 g_{k,t}(\theta):=\nabla_\theta f_{k,t}(\theta)=\sum_{n\in S_{k,t}}(\mu(n|S_{k,t},\theta)-y_{n,t})x_{n}. \label{eq:g_def}    
 \end{align}
  Then, we construct the estimator of  $\hat{\theta}_{k,t}$ from online updates applying online newton step studied by \cite{hazan2007logarithmic,zhang2016online,jun2017scalable,faury2022jointly,oh2021multinomial} as
$\hat{\theta}_{k,t}=\argmin_{\theta\in\Theta}g_{k,t-1}(\hat{\theta}_{k,t-1})^\top \theta+\frac{1}{2}\|\theta-\hat{\theta}_{k,t-1}\|_{V_{k,t}}^2,$
where $\Theta=\{\theta\in \mathbb{R}^d: \|\theta
\|_2\le 1\}$ and $V_{k,t}=\lambda I_d+\frac{\kappa}{2}\sum_{s=1}^{t-1}\sum_{n\in S_{k,s}}x_nx_n^\top$.
Using the estimator, we define the UCB index for agent $n$ in assortment $S_{k}$ as 
\begin{align}
\tilde{\mu}_{t}^{UCB}(n|S_k,\hat{\theta}_{k,t})=\frac{\exp(h_{n,k,t}^{UCB})}{1+\sum_{m\in S_k}\exp(h_{m,k,t}^{UCB})},\label{eq:mu_ucb}    
\end{align}
 where $h_{n,k,t}^{UCB}:=x_n^\top\hat{\theta}_{k,t}+\beta_{t}\|x_n\|_{V_{k,t}^{-1}}$ with $\beta_t=C_1\sqrt{\lambda+\frac{d}{\kappa}\log(1+\frac{tLK}{d\lambda})}$  for some constant $C_1>0$. We utilize the MaxWeight with the UCB indexes by setting $\lambda=1$.

\begin{algorithm}[H]
\LinesNotNumbered
  \caption{UCB-Queueing Matching Bandit (\texttt{UCB-QMB})}\label{alg:ucb}
  \KwIn{$\lambda$, $\kappa$, $C_1>0$}

 \For{$t=1,\dots,T$}{
 \For{$k\in[K]$}{
$\hat{\theta}_{k,t}\leftarrow \argmin_{\theta\in\Theta}g_{k,t-1}(\hat{\theta}_{k,t-1})^\top \theta+\frac{1}{2}\|\theta-\hat{\theta}_{k,t-1}\|_{V_{k,t}}^2$ with \eqref{eq:g_def} } 

 $\displaystyle\{S_{k,t}\}_{k\in[K]}\leftarrow \argmax_{\{S_{k}\}_{k\in[K]}\in\Mcal(\Ncal_t)}\sum_{k\in[K]}\sum_{n\in S_k}Q_n(t)\tilde{\mu}^{UCB}_t(n|S_k,\hat{\theta}_{k,t})$   with \eqref{eq:mu_ucb}

Offer $\{S_{k,t}\}_{k\in[K]}$ and observe preference feedback $y_{n,t}\in\{0,1\}$ for all $n\in S_{k,t}$, $k\in[K]$
 }
\end{algorithm}
\subsubsection{Stability Analysis of \texttt{UCB-QMB}}
Here we provide an analysis for the stability of \texttt{UCB-QMB} (Algorithm~\ref{alg:ucb}).
\begin{theorem}\label{thm:Q_ucb}
    The average queue length of Algorithm~\ref{alg:ucb} is bounded as
    $\Qcal(T)=\Ocal\left(\frac{\min\{N,K\}}{\epsilon}+\frac{d^2N^2K^2}{\kappa^4\epsilon^6}\frac{\text{\normalfont polylog}(T)}{T}\right),$
   which implies that the algorithm achieves stability as 
 \[\lim_{T\rightarrow \infty}\Qcal(T)=\Ocal\left( \frac{\min\{N,K\}}{\epsilon}\right).\]
\end{theorem}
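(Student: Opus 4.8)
The plan is to run a Lyapunov-drift argument with the quadratic Lyapunov function $\Phi(t)=\sum_{n\in[N]}Q_n(t)^2$, exactly as in the proof of Proposition~\ref{prop:MW} for the MaxWeight oracle, but carefully accounting for the estimation error incurred because Algorithm~\ref{alg:ucb} schedules with respect to the optimistic UCB indices $\tilde\mu^{UCB}_t$ rather than the true service rates $\mu(\cdot|\cdot,\theta_k)$. First I would write the one-step drift bound $\mathbb{E}[\Phi(t+1)-\Phi(t)\mid \mathcal{F}_t]\le C_0 N - 2\sum_{n}Q_n(t)\big(\mu(n|S_{n,t},\theta_{k_{n,t}})-\lambda_n\big)$, where $C_0$ absorbs the bounded arrival/departure second moments, so that the whole problem reduces to lower-bounding the ``weighted service'' term $W_t:=\sum_{k}\sum_{n\in S_{k,t}}Q_n(t)\mu(n|S_{k,t},\theta_k)$ achieved by the algorithm's schedule.

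The core of the argument is to compare $W_t$ against the weighted service $W_t^\star$ of the oracle MaxWeight schedule on the same queue state. Since the UCB index is optimistic — on the good event $\mathcal{E}_t$ that $\theta_k$ lies in the confidence set for all $k$, we have $\tilde\mu^{UCB}_t(n|S_k,\hat\theta_{k,t})\ge \mu(n|S_k,\theta_k)$ for every feasible $S_k$ — the algorithm's maximization of $\sum_{k}\sum_{n\in S_k}Q_n(t)\tilde\mu^{UCB}_t$ dominates the true-rate value of the oracle schedule, i.e. $\sum_k\sum_{n\in S_{k,t}}Q_n(t)\tilde\mu^{UCB}_t(n|S_{k,t},\hat\theta_{k,t})\ge W_t^\star$. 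Then I would pay back the optimism by bounding the gap $\tilde\mu^{UCB}_t(n|S_{k,t},\hat\theta_{k,t})-\mu(n|S_{k,t},\theta_{k})$; using Assumptions~\ref{ass:bd}–\ref{ass:kappa}, a Lipschitz/mean-value bound on the MNL link gives this gap $\lesssim \beta_t\sum_{m\in S_{k,t}}\|x_m\|_{V_{k,t}^{-1}}$ (up to $\kappa$-dependent constants). Consequently $W_t\ge W_t^\star - \sum_k\sum_{n\in S_{k,t}}Q_n(t)\cdot O\big(\beta_t\sum_{m\in S_{k,t}}\|x_m\|_{V_{k,t}^{-1}}\big)$, and by Assumption~\ref{ass:stable} the oracle term satisfies $W_t^\star\ge \sum_n Q_n(t)(\lambda_n+\epsilon)\ge \epsilon\sum_n Q_n(t)$ (this is the standard step: the slack schedule $\{S_k\}$ is a feasible but possibly non-optimal choice for MaxWeight, so the MaxWeight value is at least its value). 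Plugging back, the drift becomes $\mathbb{E}[\Phi(t+1)-\Phi(t)\mid\mathcal{F}_t]\le C_0 N - 2\epsilon\sum_n Q_n(t) + 2\sum_n Q_n(t)\cdot O\big(\beta_t L\max_m\|x_m\|_{V_{k_{n,t},t}^{-1}}\big)$, plus a negligible term from the low-probability complement of $\mathcal{E}_t$ (handled by the choice of $C_1$ in $\beta_t$ and a union bound, since $Q_n(t)$ is at most $t$).

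Summing the drift over $t\in[T]$ and telescoping $\Phi$ gives $\frac{2\epsilon}{T}\sum_{t}\sum_n \mathbb{E}[Q_n(t)] \le C_0 N + \frac{2}{T}\sum_t \mathbb{E}\big[\sum_n Q_n(t)\cdot O(\beta_t L \|x_{\cdot}\|_{V^{-1}})\big] + o(1)$, so $\Qcal(T)\le \frac{C_0 N}{2\epsilon} + \frac{1}{\epsilon T}\,(\text{error sum})$. To turn the error sum into the claimed $\frac{d^2N^2K^2}{\kappa^4\epsilon^6}\frac{\mathrm{polylog}(T)}{T}$ term, I would (i) bound $Q_n(t)\le \epsilon^{-1}\cdot(\text{something that telescopes})$ — more precisely, use a self-bounding trick: split the error term using $Q_n(t)\cdot a_t \le \frac{\epsilon}{2} Q_n(t)^2$-type inequalities or, better, use $ab\le \frac14 a^2 + b^2$ to absorb a $\frac{\epsilon}{2}\sum_n Q_n(t)$ back into the negative drift and leave $O(\epsilon^{-1}\beta_t^2 L^2 \sum_{m}\|x_m\|^2_{V_{k,t}^{-1}})$; then (ii) apply the elliptical-potential lemma $\sum_{t}\sum_{n\in S_{k,t}}\|x_n\|^2_{V_{k,t}^{-1}}=O(\frac{d}{\kappa}\log T)$ per arm, and $\beta_t^2 = O(\frac{d}{\kappa}\log T)$, giving a per-arm contribution $O(\frac{d^2}{\kappa^2}\mathrm{polylog}(T))$ and hence, after dividing by $T$ and summing over $K$ arms and distributing the $\epsilon$ and $\kappa$ factors through the self-bounding steps, the stated bound. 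The main obstacle is step (i)–(ii): carefully extracting the $1/\epsilon$ factors via the self-bounding absorption and tracking how the worst-case $1/\kappa=O(L^2)$, the confidence radius $\beta_t$, the assortment size $L$, and the elliptical-potential sum combine into the exponents $\kappa^{-4}\epsilon^{-6}$ and the $N^2K^2$ factor — this is where the bookkeeping is delicate, whereas the optimism comparison and the drift skeleton are routine. Taking $T\to\infty$ kills the $1/T$ error term and yields $\lim_{T\to\infty}\Qcal(T)=O(\min\{N,K\}/\epsilon)$ once one also notes $N$ can be replaced by $\min\{N,K\}$ in $C_0 N$ via the same variance argument used in Proposition~\ref{prop:MW}.
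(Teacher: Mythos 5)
Your drift skeleton, the optimism comparison against the oracle MaxWeight schedule, and the mean-value bound $\tilde{\mu}^{UCB}_t-\mu\lesssim \beta_t\|x\|_{V_{k,t}^{-1}}$ all match the paper's proof. The genuine gap is in your step (i), the mechanism for converting the error term $\sum_t\sum_n Q_n(t)\,\beta_t\|x_n\|_{V_{k_{n,t},t}^{-1}}$ into something absorbable plus a $\mathrm{polylog}(T)$ remainder. Young's inequality applied to the product $Q_n(t)\cdot\beta_t\|x_n\|_{V^{-1}}$ necessarily produces a term proportional to $Q_n(t)^2$, e.g. $\frac{\epsilon}{4}Q_n(t)^2+\frac{1}{\epsilon}\beta_t^2\|x_n\|_{V^{-1}}^2$; but the negative drift available from the quadratic Lyapunov function is only \emph{linear} in the queue lengths, $-2\epsilon\sum_n Q_n(t)$, so the $Q_n(t)^2$ piece cannot be absorbed. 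Writing the split as ``$ab\le\frac14 a^2+b^2$ absorbs $\frac{\epsilon}{2}\sum_n Q_n(t)$'' silently replaces $Q_n(t)^2$ by $Q_n(t)$, which is only valid when $Q_n(t)\le 1$. The fallback of bounding $Q_n(t)\le t$ on the rounds where the per-round error exceeds $\epsilon$ also fails: those rounds can all occur late, giving a contribution of order $T\cdot e_T$ and hence a \emph{non-vanishing} (indeed $\mathrm{polylog}(T)$-growing) additive term in $\Qcal(T)$, which would break the claimed $\lim_{T\to\infty}\Qcal(T)=\Ocal(\min\{N,K\}/\epsilon)$.

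The paper closes this gap differently (following a technique of Freund et al.). It splits on the event $E_{n,t}^2=\{\max_{m\in S_{k_{n,t},t}}\|x_m\|_{V_{k_{n,t},t}^{-1}}\le C_2\epsilon/2\beta_t\}$: on $E_{n,t}^2$ the error term is directly $\le C_2\epsilon\, Q_n(t)$, i.e. linear with a coefficient strictly below $\epsilon$, hence absorbable into the negative drift with no squaring. On $(E_{n,t}^2)^c$ it does \emph{not} use $Q_n(t)\le t$; instead it uses the window-averaging inequality $Q_n(t)\le \frac1h\sum_{s\le t}Q_n(s)+h$ with $h=\lceil C_3 e_T/\epsilon\rceil$, where $e_T$ counts the rounds on which $(E_{n,t}^2)^c$ occurs, together with the counting bound $e_T\le NK(2/\kappa)(2\beta_T/C_2\epsilon)^2$ (each agent--arm pair can be ``under-explored'' only $\Ocal(\beta_T^2/\kappa\epsilon^2)$ times). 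This yields $(\epsilon/C_3)\sum_{t,n}\EE[Q_n(t)]+\Ocal(\EE[e_T^2]/\epsilon)$, the first part again absorbable and the second giving exactly the $\frac{d^2N^2K^2}{\kappa^4\epsilon^6}\mathrm{polylog}(T)$ additive constant, which becomes $\mathrm{polylog}(T)/T$ after the telescoping division by $2\epsilon T$. You need some version of this event-split-plus-counting argument (or an equivalent device that keeps the absorbed term linear in $Q_n(t)$); the elliptical-potential lemma alone, which controls $\sum_t\|x\|_{V^{-1}}^2$ but not $\sum_t Q_n(t)\|x\|_{V^{-1}}$, does not suffice here.
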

We note that Algorithm~\ref{alg:ucb} achieves the same average queue length bound of $\Ocal(\frac{\min\{N,K\}}{\epsilon})$ with the oracle of MaxWeight (Proposition~\ref{prop:MW}) when $T$ is large enough. 

\begin{proof}[Proof sketch] Here we provide a proof sketch and
the full version is provided in Appendix~\ref{app:thm_Q_UCB}.
We define the set of queues $\textbf{Q}(t)=[Q_n(t):n\in[N]]$ and a Lyapunov function as $\Vcal(\textbf{Q}(t))=\sum_{n\in[N]}Q_n(t)^2$. For simplicity, we use $D_n(t)$ for $D_n(t|S_{k_{n,t},t})$ and $D_n^*(t)$ for $D_n(t|S_{k_{n,t}^{*},t})$
when there is no confusion. Then we analyze the Lyapunov drift as follows:
\begin{align}
    &\sum_{t\in[T]}\Vcal(\textbf{Q}(t+1))-\Vcal(\textbf{Q}(t))\cr &=\sum_{t\in[T]}\sum_{n\in[N]}{(Q_n(t)+A_n(t)-D_n^*(t))}^2-Q_n(t)^2\cr &\quad+\sum_{t\in[T]}\sum_{n\in[N]}{(Q_n(t)+A_n(t)-D_n(t))^+}^2-\sum_{t\in[T]}\sum_{n\in[N]}{(Q_n(t)+A_n(t)-D_n^*(t))}^2. \label{eq:UCB_Lya_gap_main}
\end{align}
For the first two terms in Eq.\eqref{eq:UCB_Lya_gap_main}, with Assumption~\ref{ass:stable}, we can show that
\begin{align}
    \sum_{t\in[T]}\sum_{n\in[N]}\EE[{(Q_n(t)+A_n(t)-D_n^*(t))}^2-Q_n(t)^2]\le  -\sum_{t\in[T]}\sum_{n\in[N]}2\epsilon\EE[Q_n(t)] +2\min\{N,K\}T,\label{eq:V(t+1)*-V(t)_main}
\end{align}
For the last two terms in Eq.\eqref{eq:UCB_Lya_gap_main}, we have 
\begin{align}
&\mathbb{E}\Big[\sum_{t\in[T]}\sum_{n\in[N]}{(Q_n(t)+A_n(t)-D_n(t))^+}^2-\sum_{t\in[T]}\sum_{n\in[N]}{(Q_n(t)+A_n(t)-D_n^*(t))}^2\Big]
    \cr  
    &\le
2\mathbb{E}\Big[\sum_{t\in[T]}\sum_{n\in[N]}(\mu(n|S_{k_{n,t}^*,t},\theta_{k_{n,t}^*})-\mu(n|S_{k_{n,t},t},\theta_{k_{n,t}}))Q_n(t)\Big]+5\min\{N,K\}T,\label{eq:V-V*_bd_main} 
\end{align}
where the first term of the last inequality is closely related to the regret analysis of the bandit strategy. In the following, we focus on analyzing the last term in Eq.\eqref{eq:V-V*_bd_main}. We define events $E_{t}^1=\{\|\hat{\theta}_{k,t}-\theta_k^*\|_{V_{k,t}}\le \beta_{t} $ for all $ k\in[K] \}$ and
 $E_{n,t}^2=\{\max_{m\in S_{k,t}}\|x_m\|_{V_{k,t}^{-1}}\le C_2\epsilon /2\beta_t $ for $k=k_{n,t}^\pi\}$ for some constant $C_2>0$.  We can show that $E_t^1$ holds with high probability so here we only consider the case when $E_t^1$ holds. Then we have  
\begin{align}
&\mathbb{E}\Big[\sum_{t\in[T]}\sum_{n\in[N]}\mu(n|S_{k_{n,t}^*,t},\theta_{k_{n,t}^*})-\mu(n|S_{k_{n,t},t},\theta_{k_{n,t}}))Q_n(t)\Big]\cr &\le \sum_{t\in[T]}\sum_{n\in[N]}\mathbb{E}[(\mu(n|S_{k_{n,t}^*,t},\theta_{k_{n,t}^*})-\mu(n|S_{k_{n,t},t},\theta_{k_{n,t}}))Q_n(t)(\mathbbm{1}( E_{n,t}^2)+\mathbbm{1}(({E_{n,t}^2})^c))].\label{eq:UCB_D*-D_gap_main}
\end{align}
 Then for the first term of Eq.\eqref{eq:UCB_D*-D_gap_main}, from the UCB strategy and $E_{n,t}^2$, we can show that  
\begin{align}
\sum_{t\in[T]}\mathbb{E}\Big[\sum_{n\in[N]}(\mu(n|S_{k_{n,t}^*,t},\theta_{k_{n,t}^*})-\mu(n|S_{k_{n,t},t},\theta_{k_{n,t}}))Q_n(t)\mathbbm{1}(E_{n,t}^2)\Big]\le C_2\epsilon\sum_{t\in[T]}\sum_{n\in[N]}\mathbb{E}\left[Q_n(t)\right]. \label{eq:UCB_mu*-mu_E_1E_2_main}
\end{align}
Now we provide a bound for the second term of Eq.\eqref{eq:UCB_D*-D_gap_main}. 
 For some constant $C_3>0$, we can show that  
\begin{align}
&\sum_{t\in[T]}\sum_{n\in[N]}\mathbb{E}[(\mu(n|S_{k_{n,t}^*,t},\theta_{k_{n,t}^*})-\mu(n|S_{k_{n,t},t},\theta_{k_{n,t}}))Q_n(t)\mathbbm{1}(({E_{n,t}^2})^c)]\cr &\le \sum_{t\in[T]}\sum_{n\in[N]}(\epsilon/C_3)\mathbb{E}\left[ Q_n(t)\right]+\Ocal\left( \frac{N^2K^2\beta_T^4}{\kappa^2 \epsilon^5}\right). \label{eq:UCB_mu*-mu_E2_bd_main}
\end{align}
By putting the results of  Eqs.~\eqref{eq:UCB_Lya_gap_main}, \eqref{eq:V(t+1)*-V(t)_main}, \eqref{eq:V-V*_bd_main}, \eqref{eq:UCB_D*-D_gap_main}, \eqref{eq:UCB_mu*-mu_E_1E_2_main},  \eqref{eq:UCB_mu*-mu_E2_bd_main} altogether, we can obtain 
\begin{align*}
&\mathbb{E}\Big[\sum_{t\in[T]}\Vcal(\textbf{Q}(t+1))-\Vcal(\textbf{Q}(t))\Big]\cr  &\le 
    7\min\{N,K\}T +2(C_2+(1/C_3)-1)\epsilon\sum_{t\in[T]}\sum_{n\in[N]}\mathbb{E}\left[ Q_n(t)\right]+\Ocal(N\log(T))+\Ocal\left(\frac{N^2K^2\beta_T^4}{\kappa^2 \epsilon^5}\right).
\end{align*}
Finally, with positive constants $C_2,C_3>0$ satisfying $C_2+(1/C_3)<1$, from $\Vcal(\textbf{Q}(1))=0$ and $\Vcal(\textbf{Q}(T+1))\ge 0$, by using telescoping for the above inequality and rearrangement, we can conclude the proof by
$
\frac{1}{T}\sum_{t\in[T]}\sum_{n\in[N]}\mathbb{E}[Q_n(t)]=\Ocal(\frac{\min\{N,K\}}{\epsilon}+\frac{d^2N^2K^2}{\kappa^4\epsilon^6}\frac{\text{\normalfont polylog}(T)}{T}).$
\end{proof}
\subsubsection{Regret Analysis of \texttt{UCB-QMB}}

{In addition to the stability analysis, we examine the cumulative regret of \texttt{UCB-QMB} (Algorithm~\ref{alg:ucb}). The regret is defined as the discrepancy between the performance of the oracle policy of MaxWeight $\pi^*$, which operates with the knowledge of the true parameters  $\theta_k$'s, and that of our policy $\pi$.
 Given the queue lengths at each time~$t$, we denote the oracle assignments as
 \[\{S_{k,t}^*\}_{k\in[K]}=\argmax_{\{S_k\}_{k\in[K]}\in \Mcal(\Ncal_t)}\sum_{k\in[K]}\sum_{n\in S_k}Q_n(t)\mu(n|S_k,\theta_k).\] 
 We show that this oracle policy achieves stability in Proposition~\ref{prop:MW}.
 For simplicity, we use $k_{n,t}^*$ for $k_{n,t}^{\pi^*}$.  
 Then, the cumulative regret under $\pi$
 is defined as }
 \begin{align}
\Rcal^\pi(T)=\sum_{t\in[T]}\sum_{n\in[N]}\EE\left[(\mu(n|S_{k_{n,t}^*,t}^*,\theta_{k_{n,t}^*})-\mu(n|S_{k_{n,t},t},\theta_{k_{n,t}}))Q_n(t)\right].
     \label{eq:regret}
 \end{align}
 We define $Q_{\max}=\EE[\max_{t\in[T],n\in[N]}Q_n(t)]$. Then the algorithm achieves the following regret bound.
\begin{theorem} \label{thm:R_UCB} The policy $\pi$ of Algorithm~\ref{alg:ucb} achieves a regret bound of
    \[\Rcal^\pi(T) = \tilde{\Ocal}\bigg(\min\bigg\{\frac{d}{\kappa}\sqrt{KT}Q_{\max},\bigg(\frac{dNK\min\{N,K\}^3}{\kappa^2\epsilon^{3}}\bigg)^{1/4}T^{3/4}\bigg\}\bigg).\]
\end{theorem}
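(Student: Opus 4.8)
The plan is to prove the two terms of the minimum by a common reduction, followed by two different accounting schemes, mirroring a standard optimistic‑bandit analysis carried out for the MNL objective of \texttt{UCB-QMB}. First I would condition on the high‑probability confidence event $E_t^1=\{\|\hat\theta_{k,t}-\theta_k\|_{V_{k,t}}\le\beta_t\text{ for all }k\}$ established in the proof of Theorem~\ref{thm:Q_ucb}; its complement contributes only a lower‑order additive term to $\Rcal^\pi(T)$. On $E_t^1$ I would chain three inequalities: (i) \emph{aggregate optimism}, that for each $k$, $\sum_{n\in S_{k,t}^*}Q_n(t)\,\tilde\mu_t^{UCB}(n|S_{k,t}^*,\hat\theta_{k,t})\ge\sum_{n\in S_{k,t}^*}Q_n(t)\,\mu(n|S_{k,t}^*,\theta_k)$; (ii) \emph{greedy optimality} of $\{S_{k,t}\}_k$ for the UCB objective over $\Mcal(\Ncal_t)$, which contains $\{S_{k,t}^*\}_k$; and (iii) \emph{Lipschitzness} of the MNL choice map in the utility vector, which, with $0\le h_{n,k,t}^{UCB}-x_n^\top\theta_k\le 2\beta_t\|x_n\|_{V_{k,t}^{-1}}$ on $E_t^1$, gives $\tilde\mu_t^{UCB}(n|S,\hat\theta_{k,t})-\mu(n|S,\theta_k)\le 2\beta_t\sum_{m\in S}\|x_m\|_{V_{k,t}^{-1}}$. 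Chaining (i)--(iii) yields, with $W_{k,t}:=\sum_{m\in S_{k,t}}\|x_m\|_{V_{k,t}^{-1}}$,
\[
\Rcal^\pi(T)\ \le\ 2\,\EE\Big[\sum_{t\in[T]}\sum_{k\in[K]}\Big(\sum_{n\in S_{k,t}}Q_n(t)\Big)\beta_t\,W_{k,t}\Big]+\tilde{\Ocal}(1).
\]

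For the first (queue‑dependent) bound I would bound $\sum_{n\in S_{k,t}}Q_n(t)\le L\,\max_{t',n'}Q_{n'}(t')$ and note that the remaining quantity is deterministically controlled: the matrix‑determinant lemma gives $\sum_t\sum_{m\in S_{k,t}}\|x_m\|_{V_{k,t}^{-1}}^2=\tilde{\Ocal}(d/\kappa)$ for each $k$, and $\sum_t\sum_k|S_{k,t}|\le NT$, so Cauchy--Schwarz gives $\sum_t\sum_k W_{k,t}=\tilde{\Ocal}(\sqrt{dKNT/\kappa})$. Since this is a deterministic bound, the factor $Q_{\max}=\EE[\max_{t,n}Q_n(t)]$ can be pulled out of the expectation, and with $\beta_t\le\beta_T=\tilde{\Ocal}(\sqrt{d/\kappa})$ this produces $\Rcal^\pi(T)=\tilde{\Ocal}\big(\tfrac{d}{\kappa}\sqrt{KT}\,Q_{\max}\big)$, the $N,L$ factors being absorbed into $\tilde{\Ocal}$.

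For the second ($T^{3/4}$, queue‑free) bound I would instead truncate the per‑round gap at $1$ — each service rate lies in $[0,1]$ — so the summand is at most $\big(\sum_{n\in S_{k,t}}Q_n(t)\big)\min\{1,2\beta_tW_{k,t}\}$, and split the rounds at a threshold $\tau$. On ``small‑width'' rounds ($2\beta_tW_{k,t}\le\tau$) the contribution is $\le\tau\sum_t\sum_n\EE[Q_n(t)]=\tilde{\Ocal}(\tau\min\{N,K\}T/\epsilon)$ by Theorem~\ref{thm:Q_ucb}. On ``large‑width'' rounds the matrix‑determinant lemma bounds the number of such $(t,k)$ by $\tilde{\Ocal}(dK\beta_T^2/(\kappa\tau^2))$, and combining this with the queue second‑moment bound $\sup_t\EE[(\sum_n Q_n(t))^2]=\tilde{\Ocal}(\mathrm{poly}(N,K)/\epsilon^2)$ — which follows from the same Lyapunov‑drift estimate as Theorem~\ref{thm:Q_ucb} — through Cauchy--Schwarz bounds their contribution by $\tilde{\Ocal}\big(\tfrac{1}{\tau}\sqrt{\tfrac{dK}{\kappa}T}\cdot\tfrac{\mathrm{poly}(N,K)}{\epsilon}\big)$. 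Optimizing $\tau$ to balance the $\tau T/\epsilon$ and $\tau^{-1}\sqrt{T}$ contributions yields a $T^{3/4}$ rate with the stated dependence on $d,K,\kappa,\epsilon$.

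The delicate step is the aggregate optimism in (i). Unlike the linear or logistic case, the MNL choice probabilities are \emph{not} coordinatewise monotone in the utilities — inflating $x_n^\top\theta_k$ raises $\mu(n|S,\theta_k)$ but lowers $\mu(m|S,\theta_k)$ for $m\neq n$ — so the componentwise bound $h_{n,k,t}^{UCB}\ge x_n^\top\theta_k$ does not by itself make the UCB objective an upper bound on the true one. Proving (i) requires exploiting the optimality (``efficiency'') structure of the MaxWeight‑optimal assortments $\{S_{k,t}^*\}_k$, adapting the optimism arguments from the MNL‑bandit literature. This is also precisely why optimism, rather than a plain Lipschitz bound, is indispensable: it keeps the widths $\|x_m\|_{V_{k,t}^{-1}}$ tied to the assortments actually played, which is what makes the matrix‑determinant lemma applicable; a naive Lipschitz bound would instead involve widths of the counterfactual oracle assortments, for which no such potential bound is available. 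A secondary technical task is establishing the queue second‑moment bound invoked in the $T^{3/4}$ argument.
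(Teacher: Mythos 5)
Your overall architecture matches the paper's: condition on the confidence event $E_t^1$, chain optimism of the UCB index with the argmax property of the played assortments and a Lipschitz bound on the MNL map, then use the elliptical-potential lemma for the $\sqrt{T}Q_{\max}$ term and a width-threshold split combined with the queue-length bound of Theorem~\ref{thm:Q_ucb} for the $T^{3/4}$ term. However, there are two concrete gaps. First, your aggregation for the queue-dependent bound is too lossy: bounding the per-assortment error by $\bigl(\sum_{n\in S_{k,t}}Q_n(t)\bigr)\cdot 2\beta_t\sum_{m\in S_{k,t}}\|x_m\|_{V_{k,t}^{-1}}$, then using $\sum_{n\in S_{k,t}}Q_n(t)\le L\,Q_{\max}$ and $\sum_{t,k}|S_{k,t}|\le NT$ in Cauchy--Schwarz, yields $\tilde{\Ocal}\bigl(\tfrac{d}{\kappa}L\sqrt{KNT}\,Q_{\max}\bigr)$. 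The factors $L$ and $\sqrt{N}$ are polynomial and cannot be ``absorbed into $\tilde{\Ocal}$,'' so this does not prove the stated bound. The paper's Lemma~\ref{lem:ucb_mu_tild_mu_gap2} avoids both: because the MNL probabilities over an assortment sum to at most one, the entire weighted error of assortment $S_{k,t}$ is bounded by $2\beta_t\max_{n\in S_{k,t}}\|x_n\|_{V_{k,t}^{-1}}Q_n(t)$ --- a single max rather than a product of sums --- which is what produces a clean $Q_{\max}$ with no $L$ or $N$.

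Second, and more seriously, your $T^{3/4}$ argument invokes $\sup_t\EE[(\sum_nQ_n(t))^2]=\tilde{\Ocal}(\mathrm{poly}(N,K)/\epsilon^2)$, which is not a consequence of Theorem~\ref{thm:Q_ucb} (a bound on the \emph{first} moment, summed over $t$) and would require a separate higher-order Lyapunov-drift analysis that the paper never carries out; calling it a ``secondary technical task'' understates that it is the load-bearing step of your route. The paper's device for the rare large-width rounds is different and only needs the first moment: it uses the averaging inequality $Q_n(t)\le h^{-1}\sum_{s\le t}Q_n(s)+h$ with $h=\lceil 1/(\zeta\beta_T)\rceil$, together with the pull-count bound $e_T\le 2NK/(\kappa\zeta^2)$ on the number of large-width selections, to convert their contribution into $\zeta\beta_T\sum_{t,n}\EE[Q_n(t)]+\Ocal(NK/(\kappa\zeta^3\beta_T))$ before optimizing $\zeta$. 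You would need either to adopt this window-averaging trick or to actually prove the second-moment bound. On the optimism step (i), your caution is warranted --- coordinatewise inflation of utilities does not monotonically increase each MNL probability --- and the paper's own justification there (monotonicity of $x/(1+x)$ applied per item to $S_{k,t}^*$) is terse on exactly this point, so this is not a defect of your proposal relative to the paper, but it is also not something you have supplied.
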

 We emphasize that our algorithms achieve a sublinear regret bound, even in the worst-case scenario regarding queue lengths from the minimum in regret. In contrast, \cite{huang2024lyapunov} achieves a regret bound of $\tilde{O}(\max\{\sqrt{T}Q_{\max},T^{3/4}\})$ for a stationary setting, where the worst-case bound is not guaranteed to be sublinear from the maximum in regret.
\begin{proof}[Proof sketch] Here we provide a proof sketch and the full version is provided in Appendix~\ref{app:thm_R_UCB}. We first provide the proof for regret bound of $\Rcal^\pi(T)=\tilde{\Ocal}(\frac{d}{\kappa}\sqrt{KT}Q_{\max})$.    We define event $E_t^1=\{\|\hat{\theta}_{k,t}-\theta_k^*\|_{V_{k,t}}\le \beta_t \: \forall k\in[K]\}$ which holds with a high probability. Therefore, here we only consider the case when $E_t$ holds. Then we can show that
$ \sum_{n\in[N]}\EE[Q_n(t)(\mu(n|S_{k_{n,t}^*,t},\theta_{k_{n,t}^*})-\mu(n|S_{k_{n,t},t},\theta_{k_{n,t}}))]\le \sum_{n\in[N]}\EE[Q_n(t)(\tilde{\mu}^{UCB}_t(n|S_{k_{n,t},t})-\mu(n|S_{k_{n,t},t},\theta_{k_{n,t}}))]\le \sum_{k\in[K]}\mathbb{E}[2\beta_t\max_{n\in S_{k,t}}\|x_n\|_{V_{k,t}^{-1}}Q_n(t)]$. From the inequality, with $\sum_{t=1}^T \max_{n\in S_{k,t}}\|x_n\|_{V_{k,t}^{-1}}^2\le (4d/\kappa)\log(1+(TL/d\lambda))$, we have 
\begin{align}
\Rcal^\pi(T)&=\sum_{t\in[T]}\sum_{n\in[N]}\EE[Q_n(t)(\mu(n|S_{k_{n,t}^*,t},\theta_{k_{n,t}^*})-\mu(n|S_{k_{n,t},t},\theta_{k_{n,t}}))]\cr
&\hspace{-2mm}\le 2\EE\Big[\max_{t\in[T],n\in[N]}Q_n(t)\beta_{T}\sqrt{KT\sum_{t\in[T]}\sum_{k\in[K]}\max_{l\in S_{k,t}}\|x_{l}\|_{V_{k,t}^{-1}}^2}\Big]=\tilde{\Ocal}\Big(\frac{d}{\kappa}\sqrt{KT}Q_{\max}\Big).\label{eq:R_ucb_1}
\end{align}
Now we provide the proof for the worst-case regret bound of $\Rcal^\pi(T)=\tilde{\Ocal}\left(\left(\frac{dNK\min\{N,K\}^3}{\kappa^2\epsilon^{3}}\right)^{1/4}T^{3/4}\right)$ in the following. We additionally
define event $E_{n,t}^2=\{\max_{m\in S_{k,t}}\|x_m\|_{V_{k,t}^{-1}}\le \zeta $ for $k=k_{n,t}\}$ for some constant $C_2>0$. Under $E_t^1$, we have
\begin{align}
\Rcal^\pi(T)&=\sum_{t\in[T]}\sum_{n\in[N]}\EE[(\mu(n|S_{k_{n,t}^*,t},\theta_{k_{n,t}^*})-\mu(n|S_{k_{n,t},t},\theta_{k_{n,t}}))Q_n(t)]\cr &\le \sum_{t\in[T]}\sum_{n\in[N]}\mathbb{E}[(\mu(n|S_{k_{n,t}^*,t},\theta_{k_{n,t}^*})-\mu(n|S_{k_{n,t},t},\theta_{k_{n,t}}))Q_n(t)(\mathbbm{1}( E_{n,t}^2)+\mathbbm{1}(({E_{n,t}^2})^c))].\cr \label{eq:UCB_D*-D_gap_reg_main}
\end{align}
    Then for the first term of Eq.\eqref{eq:UCB_D*-D_gap_reg_main}, we have 
\begin{align}
&\sum_{t\in[T]}\mathbb{E}\left[\sum_{n\in[N]}(\mu(n|S_{k_{n,t}^*,t},\theta_{k_{n,t}^*})-\mu(n|S_{k_{n,t},t},\theta_{k_{n,t}}))Q_n(t)\mathbbm{1}( E_{n,t}^2)\right]\cr 
&\le \sum_{t\in[T]}\mathbb{E}\left[\sum_{n\in[N]}2\beta_{t}\|x_n\|_{V_{k_{n,t},t}^{-1}}Q_n(t)\mathbbm{1}( E_{n,t}^2)\right]\le \sum_{t\in[T]}\sum_{n\in[N]}2\beta_t \zeta\mathbb{E}\left[Q_n(t)\right],\label{eq:UCB_mu*-mu_E_1E_2_reg_main}
\end{align}
where the last inequality is obtained from $E_{n,t}^2$.
By analyzing the selected number of agent $n$ with $(E_{n,t}^2)^c$, we can show that  
\begin{align}
&\sum_{t\in[T]}\sum_{n\in[N]}\mathbb{E}[(\mu(n|S_{k_{n,t}^*,t},\theta_{k_{n,t}^*})-\mu(n|S_{k_{n,t},t},\theta_{k_{n,t}}))Q_n(t)\mathbbm{1}(({E_{n,t}^2})^c)]\cr &\le \sum_{t\in[T]}\sum_{n\in[N]}\zeta \beta_T\mathbb{E}\left[ Q_n(t)\right]+\Ocal\left( \frac{NK}{\kappa\zeta^3 \beta_T}\right). \label{eq:UCB_mu*-mu_E2_bd_reg_main}
\end{align}
By putting the results of  Eqs.~\eqref{eq:UCB_D*-D_gap_reg_main}, \eqref{eq:UCB_mu*-mu_E_1E_2_reg_main}, \eqref{eq:UCB_mu*-mu_E2_bd_reg_main}, and Theorem~\ref{thm:Q_ucb}, by setting $\zeta=(\epsilon NK/\min\{N,K\}T\kappa\beta_T^2)^{1/4}$, for large enough $T$, we have
\begin{align}
    \Rcal^\pi(T)&= \Ocal\left(\zeta\beta_T\sum_{t\in[T]}\sum_{n\in[N]}\mathbb{E}[Q_n(t)]+\frac{NK}{\kappa\zeta^3\beta_T}+N\log(T)\right)\cr &=
\tilde{\Ocal}\left(N\left(\frac{dNK\min\{N,K\}^3}{\kappa^2\epsilon^{3}}\right)^{1/4}T^{3/4}\right),
\end{align}
which conclude the proof combined with Eq.\eqref{eq:R_ucb_1}.
\end{proof}

\subsection{Thompson Sampling-based Algorithm}
Here, we propose an algorithm  based on Thompson Sampling, \texttt{TS-QMB} (Algorithm~\ref{alg:ts}). As in the previous algorithm, we construct the estimator as  
$\hat{\theta}_{k,t}=\argmin_{\theta\in\Theta}g_{k,t-1}(\hat{\theta}_{k,t-1})^\top \theta+\frac{1}{2}\|\theta-\hat{\theta}_{k,t-1}\|_{V_{k,t}}^2$. To facilitate exploration, we sample several $\tilde{\theta}_{k,t}^{(i)}$ for $i\in[M]$ from a Gaussian distribution of $\mathcal{N}(\hat{\theta}_{k,t},\beta_t^2V_{k,t}^{-1})$ and construct 
the Thompson Sampling (TS) index for assortment $S_{k}$ as  
 \begin{align}
 \tilde{\mu}_t^{TS}(n|S_k,\{\tilde{\theta}_{k,t}^{(i)}\}_{i\in[M]})=\frac{\exp(h_{n,k,t}^{TS})}{1+\sum_{m\in S_k}\exp(h_{m,k,t}^{TS})},    \label{eq:mu_TS}
 \end{align}
 where
 $h_{n,k,t}^{TS}=\max_{i\in[M]}x_n^\top \tilde{\theta}_{k,t}^{(i)}$ and  $\beta_t=C_1\sqrt{\lambda+\frac{d}{\kappa}\log(1+\frac{tLK}{d\lambda})}$ for some constant $C_1>0$. Then we utilize the MaxWeight with TS indexes.
We set $\lambda=1$ and  $M=\lceil 1-\frac{\log (KL)}{\log(1-1/4\sqrt{e\pi})}\rceil$. 
\begin{algorithm}[t]
\LinesNotNumbered
  \caption{Thompson Sampling-Queueing Matching Bandit (\texttt{TS-QMB})}\label{alg:ts}
  \KwIn{$\lambda$, $M$, $\kappa$, $C_1>0$}

 \For{$t=1,\dots,T$}{
 \For{$k\in[K]$}{




$\hat{\theta}_{k,t}\leftarrow \argmin_{\theta\in\Theta}g_{k,t-1}(\hat{\theta}_{k,t-1})^\top \theta+\frac{1}{2}\|\theta-\hat{\theta}_{k,t-1}\|_{V_{k,t}}^2$ with \eqref{eq:g_def}

Sample $\{\tilde{\theta}_{k,t}^{(i)}\}_{i\in[M]}$ independently from $\Ncal(\hat{\theta}_{k,t},\beta_t^2V_{k,t}^{-1})$
}

 $\displaystyle\{S_{k,t}\}_{k\in[K]}\leftarrow \argmax_{\{S_{k}\}_{k\in[K]}\in\Mcal(\Ncal_t)}\sum_{k\in[K]}\sum_{n\in S_k}Q_n(t)\tilde{\mu}_{t}^{TS}(n|S_k,\{\tilde{\theta}_{k,t}^{(i)}\}_{i\in[M]})$  with \eqref{eq:mu_TS}

Offer $\{S_{k,t}\}_{k\in[K]}$ and observe preference feedback $y_{n,t}\in\{0,1\}$ for all $n\in S_{k,t}$, $k\in[K]$
 }
\end{algorithm}
\subsubsection{Stability Analysis of \texttt{TS-QMB}}
Here we provide stability analysis for \texttt{TS-QMB} (Algorithm~\ref{alg:ts}).
\begin{theorem}\label{thm:Q_TS}
    The average queue length of Algorithm~\ref{alg:ts} is bounded as    $\Qcal(T)=\Ocal\left(\frac{\min\{N,K\}}{\epsilon}+\frac{d^4N^2K^2}{\kappa^4\epsilon^6}\frac{\text{\normalfont polylog}(T)}{T}\right),$
   which implies that the algorithm achieves stability as  \[\lim_{T\rightarrow \infty}\Qcal(T)=\Ocal\left( \frac{\min\{N,K\}}{\epsilon}\right).\]
\end{theorem}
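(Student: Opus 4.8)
The plan is to run the same Lyapunov-drift argument as in the proof of Theorem~\ref{thm:Q_ucb}, with $\Vcal(\textbf{Q}(t))=\sum_{n\in[N]}Q_n(t)^2$, and to localize the only genuine change to the one step where the deterministic UCB-optimism must be replaced by probabilistic Thompson-Sampling optimism. First I would split the telescoped drift $\sum_{t\in[T]}\big(\Vcal(\textbf{Q}(t+1))-\Vcal(\textbf{Q}(t))\big)$ as in Eq.~\eqref{eq:UCB_Lya_gap_main} into (i) the drift of the virtual system playing the oracle assignment $\{S_{k,t}^*\}_{k}$, which by Assumption~\ref{ass:stable} and the argument behind Eq.~\eqref{eq:V(t+1)*-V(t)_main} is at most $-2\epsilon\sum_{t}\sum_{n}\EE[Q_n(t)]+2\min\{N,K\}T$, and (ii) the difference between the true and virtual drifts, which after the elementary estimate in Eq.~\eqref{eq:V-V*_bd_main} is at most $2\sum_{t}\sum_{n}\EE[(\mu(n|S^*_{k_{n,t}^*,t},\theta_{k_{n,t}^*})-\mu(n|S_{k_{n,t},t},\theta_{k_{n,t}}))Q_n(t)]+5\min\{N,K\}T$. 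It then suffices to bound this regret-like term by $C\epsilon\sum_t\sum_n\EE[Q_n(t)]+\Ocal(\mathrm{poly})$ for a constant $C<1$; telescoping with $\Vcal(\textbf{Q}(1))=0$, $\Vcal(\textbf{Q}(T+1))\ge 0$ and rearranging then reproduces the claimed $\Qcal(T)$ bound as in Theorem~\ref{thm:Q_ucb}.

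To control the regret-like term I would condition on three events. The first is $E_t^1=\{\|\hat\theta_{k,t}-\theta_k\|_{V_{k,t}}\le\beta_t\ \forall k\in[K]\}$, which holds with high probability by the online-Newton-step concentration bound (the estimator is the same as in \texttt{UCB-QMB}). The second is a Gaussian concentration event $E_t^{TS}$ asserting $\|\tilde\theta_{k,t}^{(i)}-\hat\theta_{k,t}\|_{V_{k,t}}\le\gamma_t$ for all $k\in[K]$, $i\in[M]$, with $\gamma_t=\Theta\big(\beta_t\sqrt{d+\log(MKt)}\big)$, obtained from a $\chi^2$ tail bound and a union over the $MK$ samples; it is exactly this $\sqrt{d}$ inflation of the effective confidence radius (from $\beta_t$ to $\gamma_t$) that replaces $\beta_T^4=\tilde{\Ocal}(d^2/\kappa^2)$ in the \texttt{UCB-QMB} bound by $\gamma_T^4=\tilde{\Ocal}(d^4/\kappa^2)$, producing the $d^4$ in the statement. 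The third is the localization event $E_{n,t}^2=\{\max_{m\in S_{k_{n,t},t}}\|x_m\|_{V_{k_{n,t},t}^{-1}}\le C_2\epsilon/\gamma_t\}$. On $E_t^1\cap E_t^{TS}$ the mean-value-theorem computation on the MNL link used in the sketch of Theorem~\ref{thm:Q_ucb} gives the ``not too optimistic'' bound $\tilde\mu_t^{TS}(n|S_{k_{n,t},t})-\mu(n|S_{k_{n,t},t},\theta_{k_{n,t}})\le 2\gamma_t\|x_n\|_{V_{k_{n,t},t}^{-1}}$, while an anti-concentration argument — using that under $E_t^1$ the target $x_n^\top\theta_k$ lies within one standard deviation of the sampling mean, and that the coordinatewise maximum over $M$ independent samples boosts the optimism probability from a universal constant $p\approx 1/(4\sqrt{e\pi})$ to $1-(1-p)^M\ge 1-1/(KL)$, which is precisely the role of $M=\lceil 1-\log(KL)/\log(1-1/4\sqrt{e\pi})\rceil$ — yields the standard MNL Thompson-Sampling optimism estimate. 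Combined with the MaxWeight optimality of the chosen $\{S_{k,t}\}$ (it maximizes $\sum_k\sum_n Q_n(t)\tilde\mu_t^{TS}(n|S_k)$), this transfers the comparison from $S^*$ to $S_{k_{n,t},t}$ at a cost of only $\Ocal(\min\{N,K\})$ per round, and then $E_{n,t}^2$ bounds the leading part by $\Ocal(\epsilon)\sum_t\sum_n\EE[Q_n(t)]$, paralleling Eq.~\eqref{eq:UCB_mu*-mu_E_1E_2_main}.

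For the contribution of $(E_{n,t}^2)^c$ I would use $\mu(n|S^*)-\mu(n|S)\le 1$ together with the counting/potential argument from the full \texttt{UCB-QMB} proof: for each fixed pair $(n,k)$ the number of rounds with $\|x_n\|_{V_{k,t}^{-1}}>C_2\epsilon/\gamma_t$ is $\Ocal\big((\gamma_T/\epsilon)^2\big)$, so $e_T:=\sum_{t,n}\mathbbm{1}((E_{n,t}^2)^c)=\Ocal\big(NK\gamma_T^2/\epsilon^2\big)$; then using $Q_n(t)\le\tfrac{1}{h}\sum_{s\le t}Q_n(s)+h$ with $h=\lceil C_3 e_T/\epsilon\rceil$ turns this into $(\epsilon/C_3)\sum_t\sum_n\EE[Q_n(t)]+\Ocal(\EE[e_T^2]/\epsilon)=(\epsilon/C_3)\sum_t\sum_n\EE[Q_n(t)]+\Ocal\big(N^2K^2\gamma_T^4/(\kappa^2\epsilon^5)\big)$, which with $\gamma_T^4=\tilde{\Ocal}(d^4/\kappa^2)$ contributes $\tilde{\Ocal}\big(d^4N^2K^2/(\kappa^4\epsilon^5)\big)$ to the drift. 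Picking $C_2,C_3$ so the $\epsilon$-proportional terms carry coefficient $<1$, telescoping, and dividing by $T$ (where the extra $1/\epsilon$ appears) yields $\Qcal(T)=\Ocal\big(\min\{N,K\}/\epsilon+d^4N^2K^2\,\mathrm{polylog}(T)/(\kappa^4\epsilon^6 T)\big)$, hence stability. I expect the main obstacle to be the optimism step: since $\tilde\mu_t^{TS}(n|S^*)\ge\mu(n|S^*,\theta^*)$ fails deterministically, I must couple the anti-concentration lower bound (and its $M$-sample boosting) with the Gaussian upper bound $\gamma_t$ in a way that survives both the MaxWeight argmax over disjoint assortments and the conditioning on the queue vector $\textbf{Q}(t)$, while keeping the logarithmic and dimensional dependence in $\gamma_t$ tight enough that the final polynomial factor is no worse than $d^4N^2K^2/(\kappa^4\epsilon^6)$.
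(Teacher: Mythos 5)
Your proposal follows essentially the same route as the paper's proof: the same Lyapunov decomposition, the same three events (estimator concentration, Gaussian/TS concentration with the $\sqrt{d}$-inflated radius $\gamma_t$, and the localization event), the same anti-concentration-plus-$M$-sample-boosting optimism argument, the same counting bound $e_T=\Ocal(NK\gamma_T^2/\kappa\epsilon^2)$ for the bad localization rounds, and the same final telescoping. The one step you flag as the main obstacle — making probabilistic optimism survive the MaxWeight argmax and the conditioning on $\textbf{Q}(t)$ — is resolved in the paper by the standard device of introducing the optimistic parameter set $\tilde{\Theta}_t^{opt}$, bounding the gap on the optimism event by the $\sup$--$\inf$ diameter $2\gamma_t\max_{n\in S_{k,t}}\|x_n\|_{V_{k,t}^{-1}}$ of the confidence set, and then removing the conditioning on optimism by paying the constant factor $8\sqrt{e\pi}$ (the reciprocal of the optimism probability), which is absorbed into the requirement $C_2\ge 17\sqrt{e\pi}$.
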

\begin{proof}
    The proof is provided in Appendix~\ref{app:thm_Q_TS}
\end{proof}

\subsubsection{Regret Analysis of \texttt{TS-QMB}} 
We provide a regret analysis of \texttt{TS-QMB} (Algorithm~\ref{alg:ts}) for the regret definition of \eqref{eq:regret} in the following.
\begin{theorem} \label{thm:R_TS}The policy 
$\pi$ of Algorithm~\ref{alg:ts} achieves a regret bound of 
    \[\Rcal^\pi(T)= \tilde{\Ocal}\left(\min\left\{\frac{d^{3/2}}{\kappa}\sqrt{KT}Q_{\max},\left(\frac{d^2NK\min\{N,K\}^3}{\kappa^2\epsilon^{3}}\right)^{1/4}T^{3/4}\right\}\right).\]
\end{theorem}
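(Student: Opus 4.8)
The plan is to follow the two-part structure of the proof of Theorem~\ref{thm:R_UCB}, replacing the deterministic optimism of the UCB indices by the concentration/anti-concentration behaviour of the Gaussian Thompson samples, and tracking the extra $\sqrt d$ factor in the effective confidence radius that the perturbation $\Ncal(\hat\theta_{k,t},\beta_t^2V_{k,t}^{-1})$ introduces. I would start from the same estimator event $E_t^1=\{\|\hat\theta_{k,t}-\theta_k^*\|_{V_{k,t}}\le\beta_t\ \forall k\in[K]\}$, which holds with probability $\ge 1-1/t^2$ since the estimator is identical to that of \texttt{UCB-QMB}. A Gaussian/chi-square tail bound then shows that, on an event $\tilde E_t$ of probability $\ge 1-1/t^2$, every draw obeys $\|\tilde\theta_{k,t}^{(i)}-\hat\theta_{k,t}\|_{V_{k,t}}\le\nu_t:=C\beta_t\sqrt{d+\log(MKt)}$ for all $i\in[M]$, $k\in[K]$; hence on $E_t^1\cap\tilde E_t$ one has $|h_{n,k,t}^{TS}-x_n^\top\theta_k^*|\le\nu_t\|x_n\|_{V_{k,t}^{-1}}$, and a mean-value argument for the softmax (as in the analysis of \texttt{UCB-QMB}, now with radius $\nu_t$ in place of $\beta_t$) gives $|\tilde\mu_t^{TS}(n|S,\cdot)-\mu(n|S,\theta_k^*)|=\Ocal(\nu_t\|x_n\|_{V_{k,t}^{-1}})$ for all $n\in S$.

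The Thompson-specific ingredient is optimism with constant probability, which is exactly where the $M$ samples and the choice $M=\lceil 1-\log(KL)/\log(1-\tfrac{1}{4\sqrt{e\pi}})\rceil$ enter. Gaussian anti-concentration gives $\PP\big(x_n^\top\tilde\theta_{k,t}^{(i)}\ge x_n^\top\hat\theta_{k,t}+\beta_t\|x_n\|_{V_{k,t}^{-1}}\mid\mathcal{F}_{t-1}\big)\ge\tfrac{1}{4\sqrt{e\pi}}$ for one draw, so with $M$ independent draws $\PP\big(h_{n,k,t}^{TS}\ge x_n^\top\hat\theta_{k,t}+\beta_t\|x_n\|_{V_{k,t}^{-1}}\big)\ge 1-(1-\tfrac{1}{4\sqrt{e\pi}})^M\ge 1-\tfrac{1}{KL}(1-\tfrac{1}{4\sqrt{e\pi}})$; a union bound over the at most $KL$ agent--arm pairs of the oracle partition $\{S_{k,t}^*\}$ yields an event $\mathcal{E}_t^{\mathrm{opt}}$, of conditional probability at least the absolute constant $c_0:=\tfrac{1}{4\sqrt{e\pi}}$, on which $h_{n,k_{n,t}^*,t}^{TS}\ge x_n^\top\theta_{k_{n,t}^*}^*$ for all $n$. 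On $E_t^1\cap\mathcal{E}_t^{\mathrm{opt}}$, the MNL-optimism step of the proof of Theorem~\ref{thm:R_UCB}, together with the algorithm maximizing $\sum_k\sum_{n\in S_k}Q_n(t)\tilde\mu_t^{TS}(n|S_k,\cdot)$ over $\Mcal(\Ncal_t)$, gives $\sum_k\sum_{n\in S_{k,t}}Q_n(t)\tilde\mu_t^{TS}(n|S_{k,t},\cdot)\ge\sum_k\sum_{n\in S_{k,t}^*}Q_n(t)\mu(n|S_{k,t}^*,\theta_{k_{n,t}^*})$. I would then apply the standard Thompson-Sampling regret decomposition (in the style of Agrawal--Goyal / Abeille--Lazaric): split the round-$t$ regret into a ``TS term'' $\sum_n Q_n(t)(\mu(n|S_{k_{n,t}^*,t}^*,\theta^*)-\tilde\mu_t^{TS}(n|S_{k_{n,t}^*,t}^*,\cdot))$, nonpositive on $\mathcal{E}_t^{\mathrm{opt}}$ and with positive part absorbed into a multiplicative $1/c_0$, a ``prediction term'' $\sum_k\sum_{n\in S_{k,t}}Q_n(t)(\tilde\mu_t^{TS}(n|S_{k,t},\cdot)-\mu(n|S_{k,t},\theta^*))=\Ocal\big(\nu_t\sum_k\max_{n\in S_{k,t}}\|x_n\|_{V_{k,t}^{-1}}Q_n(t)\big)$, and lower-order $\Ocal(N\log T)$ terms from $(E_t^1)^c\cup\tilde E_t^c$ via $Q_n(t)\le t$.

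The two summations then replicate the proof of Theorem~\ref{thm:R_UCB} with $\beta_T$ replaced by $\nu_T=\tilde{\Ocal}(\beta_T\sqrt d)=\tilde{\Ocal}(\sqrt{d^2/\kappa})$. For the first bound, Cauchy--Schwarz and the elliptical-potential estimate $\sum_{t\in[T]}\max_{n\in S_{k,t}}\|x_n\|_{V_{k,t}^{-1}}^2=\tilde{\Ocal}(d/\kappa)$ give $\Rcal^\pi(T)=\tilde{\Ocal}\big(Q_{\max}\,\nu_T\sqrt{KT\cdot d/\kappa}\big)=\tilde{\Ocal}\big(\tfrac{d^{3/2}}{\kappa}\sqrt{KT}\,Q_{\max}\big)$. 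For the worst-case bound, introduce $E_{n,t}^2=\{\max_{m\in S_{k_{n,t},t}}\|x_m\|_{V_{k_{n,t},t}^{-1}}\le\zeta\}$: on $E_{n,t}^2$ the per-round term is $\le 2\nu_t\zeta Q_n(t)$, summing to $\Ocal(\zeta\nu_T\sum_t\sum_n\EE[Q_n(t)])$; off $E_{n,t}^2$ bound the gap by $1$, count $\sum_t\sum_n\mathbbm{1}((E_{n,t}^2)^c)=\Ocal(NK\nu_T^2/(\kappa\zeta^2))$, and convert $Q_n(t)$ to $\tfrac1h\sum_{s\le t}Q_n(s)+h$ by the averaging trick of Theorem~\ref{thm:R_UCB}, obtaining $\Ocal\big(\zeta\nu_T\sum_t\sum_n\EE[Q_n(t)]+NK/(\kappa\zeta^3\nu_T)+N\log T\big)$. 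Plugging in $\sum_t\sum_n\EE[Q_n(t)]=\Ocal(T\min\{N,K\}/\epsilon)$ from Theorem~\ref{thm:Q_TS} and optimizing $\zeta=(\epsilon K/(T\kappa\nu_T^2))^{1/4}$ (which balances the two $T^{3/4}$ terms) gives $\Rcal^\pi(T)=\tilde{\Ocal}(N(d^2K/(\kappa^2\epsilon^3))^{1/4}T^{3/4})$; taking the minimum of the two bounds finishes the proof.

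The main obstacle is the middle step: making the constant-probability optimism argument rigorous under the MNL nonlinearity and the partition constraint $\Mcal(\Ncal_t)$. One must (i) argue that simultaneous per-agent over-estimation of the utilities genuinely makes the queue-weighted MaxWeight value of the oracle's partition optimistic --- delicate because the MNL ``attraction''-to-value map is not monotone in general, so this invokes the threshold/structure lemma underlying the analysis of Theorem~\ref{thm:R_UCB} --- and (ii) execute the Thompson-Sampling decomposition so that the constant (not vanishing) failure probability $1-c_0$ of $\mathcal{E}_t^{\mathrm{opt}}$ contributes only through the multiplicative $1/c_0$ and the benign additive $\Ocal(N\log T)$, never through a term linear in $T$. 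Everything downstream is bookkeeping already carried out for \texttt{UCB-QMB}.
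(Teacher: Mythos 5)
Your proposal is correct and follows essentially the same route as the paper's proof in Appendix~\ref{app:thm_R_TS}: the same events ($E_t^1$, the sample-concentration event with radius $\gamma_t=\beta_t\sqrt{d\log(MKt)}$ matching your $\nu_t$, and the $\zeta$-threshold event), the same constant-probability optimism via Gaussian anti-concentration with $c_0=1/4\sqrt{e\pi}$ and the choice of $M$, the same $1/c_0$-multiplicative TS decomposition, and the same elliptical-potential / violation-counting / averaging arguments with $\zeta=(\epsilon K/T\kappa(\beta_T+\gamma_T)^2)^{1/4}$. The ``main obstacle'' you flag is exactly where the paper's Lemma~\ref{lem:sum_muQ>sum_muQ} and the $\tilde{\Theta}_t^{opt}$/$\sup$--$\inf$ construction do the work, so your plan matches the paper's resolution of that step as well.
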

\begin{proof}
The proof is provided in Appendix~\ref{app:thm_R_TS}
\end{proof}

We note that the performance of Algorithm~\ref{alg:ucb} and Algorithm~\ref{alg:ts} in the analysis results shows similar trends. However, the TS-based Algorithm~\ref{alg:ts} incurs a loss with respect to $d$ compared to the UCB-based Algorithm~\ref{alg:ucb}, as commonly seen in previous TS-based algorithms \citep{agrawal2013thompson,abeille2017linear,oh2019thompson}.

Here, we briefly discuss the combinatorial optimization of $\argmax_{\{S_k\}_{k\in[K]}\in \Mcal(\Ncal_t)}\sum_{k\in[K]} f_k(S_k)$ for some function $f_k: S\subset[N]\rightarrow \mathbb{R}$ in our algorithms.
The exact optimization can be expensive due to its NP-hard nature. 
To address this, we can utilize the technique of  $\alpha$-approximation oracle with $0\le \alpha\le 1$, first introduced in \citet{kakade2007playing}, which is deferred to Appendix~\ref{app:alpha}. 
\section{Experiments}\label{sec:exp}
\begin{figure*}[ht]
\centering
\includegraphics[width=\linewidth]{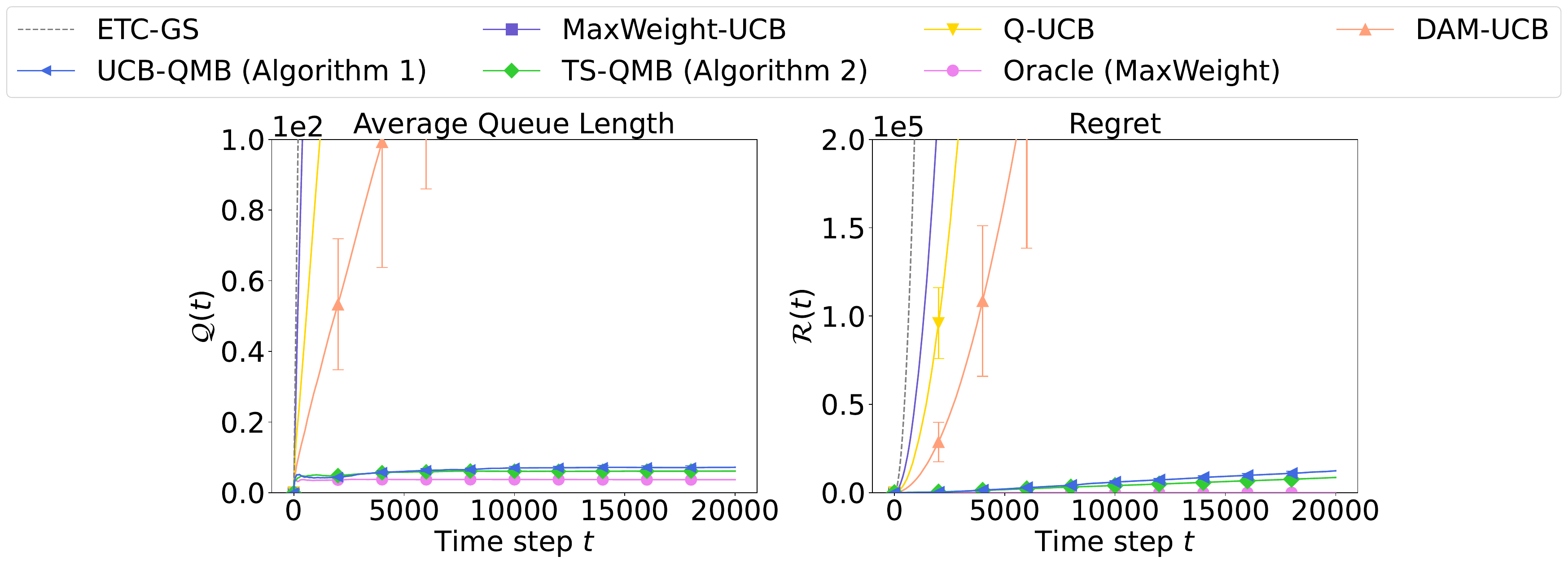}
\caption{ Experimental results for (left) average queue length and (right) regret}
\label{fig:alg}
\end{figure*}

Here, we provide experimental results to demonstrate the performance of our algorithms.\footnote{The source code is available at \url{https://github.com/junghunkim7786/Queueing-Matching-Bandits-with-Preference-Feedback}} For the synthetic experiments, we consider $N=4$, $K=2$, $L=2$, and $d=2$. Each element in $x_n$ and $\theta_k$ is uniformly generated from $[0,1]$ and then normalized, and $\lambda_n$'s are determined with $\epsilon=0.1$.
Even though no dedicated benchmark exists for our queueing matching scenario, we compare our algorithms with previously suggested ones for  queueing bandits or matching bandits:  \texttt{Q-UCB} \citep{krishnasamy2021learning}, \texttt{DAM-UCB} \citep{freund2022efficient}, and \texttt{MaxWeight-UCB} \citep{yang2023learning} for multi-queue multi-server bandits with asymmetric service rates and \texttt{ETC-GS} \citep{liu2020competing} for matching bandits. In Figure~\ref{fig:alg}, we can observe that our algorithms (Algorithms~\ref{alg:ucb} and \ref{alg:ts}) outperform the previously suggested ones
 except for the oracle (MaxWeight) operated under known (latent) service rates. We demonstrate that our algorithms achieve stability, similar to the oracle in the left figure, which matches the results of our stability analysis (Theorems~\ref{thm:Q_ucb} and \ref{thm:Q_TS}). Regarding regret shown in the right figure, the previously suggested algorithms exhibit superlinear performance due to the increasing $Q_n(t)$, while our algorithms show relatively small regret (Theorems~\ref{thm:R_UCB} and~\ref{thm:R_TS}). Additional experiments can be found in Appendix~\ref{app:exp}. 

\section{Conclusion}
In this paper, we introduce a novel framework for queueing matching bandits with preference feedback. To achieve stability in this framework, we propose UCB and TS-based algorithms utilizing the MaxWeight strategy. The algorithms achieve system stability with an average queue length bound of $\Ocal(\min\{N,K\}/\epsilon)$. Furthermore, the algorithms achieve sublinear regret bounds of $\tilde{\Ocal}(\min\{\sqrt{T}Q_{\max},T^{3/4}\})$.   Lastly, we demonstrate our algorithms using synthetic datasets.

\section{Acknowledgements}

The authors thank Milan Vojnović for helpful discussions and Hyunjun Choi for providing useful code. JK was supported by the Global-LAMP Program of the National Research Foundation of Korea (NRF) grant funded by the Ministry of Education (No. RS-2023-00301976).
MO was supported by the NRF grant funded by the Korea government(MSIT) (No. 2022R1C1C1006859 and 2022R1A4A1030579)
 and by AI-Bio Research Grant through Seoul National University. 



\bibliography{mybib}

\newpage
\appendix
\section{Appendix}

\subsection{Limitations \& Discussion}\label{sec:lim}
We leave several questions open for future research. Firstly, it remains an open problem to establish lower bounds for queue lengths and regret. However, we believe that constructing a lower bound would be much more challenging compared to other parametric bandit problems. Additionally, improving the dependency on $\kappa$ for queue length and regret bounds from the structure of MNL  would be an interesting avenue for future work. 


Regarding computation efficiency, we note that updating estimators in our algorithms is computationally efficient because they are based on online updates with convex optimization for updating estimators. Concerning combinatorial optimization in our algorithms, we can alleviate computational costs using $\alpha$-approximation oracle algorithms, as discussed in Appendix~\ref{app:alpha}. Note that almost all combinatorial bandit problems, including our proposed matching bandit framework, involve a combinatorial optimization step which often relies on some type of approximation optimization oracles.
Hence, this is not a particular limitation specific to our study.
However, developing an improved approximation oracle would also be an interesting direction for future research.
\subsection{Discussion on Traffic Slackness Parameter $\epsilon$}\label{app:epsilon}
As in the closely related works of \citep{freund2022efficient,yang2023learning,huang2024lyapunov}, the traffic slackness remains constant regardless of the number of agents $N$ and the number of servers $K$ as $\epsilon=\epsilon_0$ for some $0<\epsilon_0<1$. However, this may not align with intuition, as a large $K$ might be beneficial in terms of traffic slackness, while a large $N$ could have the opposite effect. To address this, we can consider $\epsilon=\epsilon_0\frac{\min(N,K)}{N}$. This reflects that when $N\geq K$, due to the lack of servers, increasing $K$ is critical for increasing traffic slackness, while increasing $N$ could decrease it. When $N<K$ implying there are enough servers, however, the value of $N$ doesn't impact the traffic slackness because each agent can be assigned to at most one server at each time, and there are sufficient servers to handle the agents.

 If we consider the traffic slackness as  $\epsilon=\epsilon_0\frac{\min\{N,K\}}{N}$, the oracle strategy of MaxWeight achieves $\Qcal(T)=\Ocal(\frac{N}{\epsilon_0})$ from Proposition~\ref{prop:MW}. This result shows that as $N$ increases, causing the traffic slackness to decrease, the average queue length increases. Meanwhile, the positive influence of $K$ on traffic slackness is neutralized in the average queue length bound due to system variance.
\subsection{Proof of Proposition~\ref{prop:MW}}\label{app:MW_prop}
   Define the set of queues $\textbf{Q}(t)=[Q_n(t):n\in[N]]$ and a Lyapunov function as $\Vcal(\textbf{Q}(t))=\sum_{n\in[N]}Q_n(t)^2$. For simplicity, we use $D_n(t)$ for $D_n(t|S_{k_{n,t},t})$ when there is no confusion. We observe that $\sum_{n\in [N]}\mathbb{E}[A_n(t)]\le \sum_{n\in[N]}\lambda_n \le \sum_{n\in[N]}\mu(n|S_k,\theta_k)\le K$ for some $S_k$ from Assumption~\ref{ass:stable} and $\sum_{n\in [N]}\lambda_n\le N$. This  implies $\sum_{n\in[N]}[A_n(t)]\le \min\{N,K\}$. We also have $\sum_{n\in[N]}\EE[D_n(t)]=\sum_{n\in[N]}\EE[\mu(n|S_{k,t},\theta_k)]\le \min\{K,N\}$. Then we have
\begin{align}
&\mathbb{E}[\Vcal(\textbf{Q}(t+1))-\Vcal(\textbf{Q}(t))]\cr &=\mathbb{E}\left[\sum_{n\in[N]}{(Q_n(t)-A_n(t)+D_n(t))^+}^2-Q_n(t)^2\right]\cr &\le \mathbb{E}\left[\sum_{n\in[N]}(Q_n(t)-A_n(t)+D_n(t))^2-Q_n(t)^2\right]
\cr &\le\mathbb{E}\left[2\sum_{n\in[N]}Q_n(t)A_n(t)-2\sum_{n\in[N]}Q_n(t)D_n(t)-2\sum_{n\in[N]}A_n(t)D_n(t)+\sum_{n\in[N]} A_n(t)^2+\sum_{n\in[N]}D_n(t)^2\right]  \cr &
\le \mathbb{E}\left[2\sum_{n\in[N]}Q_n(t)A_n(t)-2\sum_{n\in[N]}Q_n(t)D_n(t)+\sum_{n\in[N]} A_n(t)+\sum_{n\in[N]}D_n(t)\right]\cr&\le 
\mathbb{E}\left[2\sum_{n\in[N]}Q_n(t)A_n(t)-2\sum_{n\in[N]}Q_n(t)D_n(t)+2\min\{N,K\}\right]\label{eq:genie_V_gap_bd}
\end{align}

From Assumption~\ref{ass:stable}, we define the corresponding assortments as $\{S_k'\}_{k\in[K]}\in\Mcal(\Ncal)$, which satisfies $\lambda_n+\epsilon\le \mu(n|S_k',\theta_k)$ for all $n\in S_k'$ and $k\in[K]$. Then we define the set of non-empty queues in $S_k'$ at time $t$ as $S_{k,t}'=\{n\in S_k':Q_n(t)\neq0\}$. From the property of the MNL function, we can observe that $\mu(n|S_k',\theta_k)\le \mu(n|S_{k,t}',\theta_k)$ for all $n\in S_{k,t}'$. We also note that $\{S_{k,t}'\}_{k\in[K]}\in \Mcal(\Ncal_t)$. Then we have
\begin{align*}
    \sum_{n\in[N]}(\lambda_n+\epsilon)Q_n(t)&\le \sum_{k\in[K]}\sum_{n\in S_k'} \mu(n|S_k',\theta_k)Q_n(t)\cr&\le \sum_{k\in[K]}\sum_{n\in S_{k,t}'} \mu(n|S_{k,t}',\theta_k)Q_n(t)\cr &\le \sum_{k\in[K]}\sum_{n \in S_{k,t}}\mu(n|S_{k,t},\theta_k)Q_n(t),
\end{align*} where the second inequality is obtained from $\mu(n|S_k',\theta_k)Q_n(t)\le \mu(n|S_{k,t}',\theta_k)Q_n(t)$ when $n\in\Ncal_t$, and otherwise $\mu(n|S_k',\theta_k)Q_n(t)=\mu(n|S_{k,t}',\theta_k)Q_n(t)=0$.

This implies
\begin{align}
\mathbb{E}\left[\sum_{k\in[K]}\sum_{n\in S_{k,t}}Q_n(t)D_n(t)\right]&=\EE\left[\mathbb{E}\left[\sum_{k\in[K]}\sum_{n\in S_{k,t}}Q_n(t)D_n(t)\Bigg|\textbf{Q}(t)\right]\right]\cr&=\mathbb{E}\left[\sum_{k\in[K]}\sum_{n\in S_{k,t}}Q_n(t)\mu(n|S_{k,t},\theta_k)\right]\cr&\ge\mathbb{E}\left[\sum_{k\in[K]}\sum_{n\in S_{k,t}'}Q_n(t)\mu(n|S_{k,t}',\theta_k)\right]\cr&\ge \sum_{n\in [N]}(\lambda_n+\epsilon)\EE[Q_n(t)].\label{eq:genie_QD_lower_bd}
\end{align}
From Eqs.\eqref{eq:genie_V_gap_bd} and \eqref{eq:genie_QD_lower_bd}, we have
\begin{align*}
    &\mathbb{E}[\Vcal(\textbf{Q}(t+1))-\Vcal(\textbf{Q}(t))] \cr &\le\mathbb{E}\left[2\sum_{n\in[N]}Q_n(t)A_n(t)-2\sum_{n\in[N]}Q_n(t)D_n(t)\right]+2\min\{N,K\} \cr &\le\mathbb{E}\left[\left[2\sum_{n\in[N]}Q_n(t)A_n(t)-2\sum_{n\in[N]}Q_n(t)D_n(t)\bigg|\textbf{Q}(t)\right]\right]+2\min\{N,K\} \cr & \le\mathbb{E}\left[2\sum_{n\in[N]}\lambda_nQ_n(t)-2(\lambda_n+\epsilon)Q_n(t)\right]+2\min\{N,K\}\cr &\le -\mathbb{E}\left[2\epsilon\sum_{n\in[N]}Q_n(t)\right]+2\min\{N,K\},
\end{align*}
which implies from $\Vcal(\textbf{Q}(T+1))\ge0$ and $\Vcal(\textbf{Q}(1))=0$,
\begin{align*}
    \sum_{t\in[T]}\mathbb{E}[\Vcal(\textbf{Q}(t+1)-\Vcal(\textbf{Q}(t))]&= \Vcal(\textbf{Q}(T+1))\cr &\le -\sum_{t\in[T]}\mathbb{E}[2\epsilon\sum_{n\in[N]}Q_n(t)]+2\min\{N,K\}T. 
\end{align*}
 Finally, we can conclude that
    $\mathcal{Q}(T)=(1/T)\sum_{t\in[T]}\mathbb{E}[\sum_{n\in[N]}Q_n(t)]\le 2\min\{N,K\}/\epsilon$.

\subsection{Proof of Theorem~\ref{thm:Q_ucb}}\label{app:thm_Q_UCB}
We first define the set of queues $\textbf{Q}(t)=[Q_n(t):n\in[N]]$ and a Lyapunov function as $\Vcal(\textbf{Q}(t))=\sum_{n\in[N]}Q_n(t)^2$. For simplicity, we use $D_n(t)$ for $D_n(t|S_{k_{n,t},t})$ and $D_n^*(t)$ for $D_n(t|S_{k_{n,t}^*,t})$  
when there is no confusion. Then we analyze the Lyapunov drift as follows.
\begin{align}
    &\sum_{t\in[T]}\Vcal(\textbf{Q}(t+1))-\Vcal(\textbf{Q}(t))\cr &=\sum_{t\in[T]}\sum_{n\in[N]}{(Q_n(t)+A_n(t)-D_n(t))^+}^2-Q_n(t)^2\cr &=\sum_{t\in[T]}\sum_{n\in[N]}{(Q_n(t)+A_n(t)-D_n^*(t))}^2-Q_n(t)^2\cr &\quad+\sum_{t\in[T]}\sum_{n\in[N]}{(Q_n(t)+A_n(t)-D_n(t))^+}^2-\sum_{t\in[T]}\sum_{n\in[N]}{(Q_n(t)+A_n(t)-D_n^*(t))}^2. \label{eq:UCB_Lya_gap}
\end{align}
We observe that $\sum_{n\in [N]}\mathbb{E}[A_n(t)]\le \sum_{n\in[N]}\lambda_n \le \sum_{n\in[N]}\mu(n|S_k,\theta_k)\le K$ for some $S_k$ from Assumption~\ref{ass:stable} and $\sum_{n\in [N]}\lambda_n\le N$. This  implies $\sum_{n\in[N]}[A_n(t)]\le \min\{N,K\}$. We also have $\sum_{n\in[N]}\EE[D_n(t)]=\sum_{n\in[N]}\EE[\mu(n|S_{k,t},\theta_k)]\le \min\{K,N\}$. 
 For the first two terms in Eq.\eqref{eq:UCB_Lya_gap}, by following the same procedure of Eqs.\eqref{eq:genie_V_gap_bd} and  \eqref{eq:genie_QD_lower_bd}, we can obtain
\begin{align}
    &\sum_{t\in[T]}\sum_{n\in[N]}\EE[{(Q_n(t)+A_n(t)-D_n^*(t))}^2-Q_n(t)^2]\cr  &\le   \sum_{t\in[T]}\sum_{n\in[N]}2\EE[(\lambda_n-\mu(n|S_{k,t}^*,\theta_k))Q_n(t)] +2\min\{N,K\}T\cr 
    &\le  -\sum_{t\in[T]}\sum_{n\in[N]}2\epsilon\EE[Q_n(t)] +2\min\{N,K\}T,\label{eq:V(t+1)*-V(t)}
\end{align}
where the last inequality is obtained using Assumption~\ref{ass:stable}.
For the last two terms in Eq.\eqref{eq:UCB_Lya_gap}, we  have
\begin{align}
    &\sum_{t\in[T]}\sum_{n\in[N]}{(Q_n(t)+A_n(t)-D_n(t))^+}^2-\sum_{t\in[T]}\sum_{n\in[N]}{(Q_n(t)+A_n(t)-D_n^*(t))}^2
    \cr &\le \sum_{t\in[T]}\sum_{n\in[N]}(D_n^*(t)-D_n(t))(2Q_n(t)+2A_n(t)-D_n^*(t)-D_n(t))  
    \cr &\le 2\sum_{t\in[T]}\sum_{n\in[N]}A_n(t)(D_n^*(t)-D_n(t))-\sum_{t\in[T]}\sum_{n\in[N]} D_n^*(t)^2+\sum_{t\in[T]}\sum_{n\in[N]}D_n(t)^2\cr &\quad +2\sum_{t\in[T]}\sum_{n\in[N]}(D_n^*(t)-D_n(t))Q_n(t)\cr 
    &\le 4\sum_{t\in[T]}\sum_{n\in[N]}A_n(t) +2\sum_{t\in[T]}\sum_{n\in[N]}(D_n^*(t)-D_n(t))Q_n(t)+\min\{N,K\}T\cr&\le 2\sum_{t\in[T]}\sum_{n\in[N]}(D_n^*(t)-D_n(t))Q_n(t)+5\min\{N,K\}T.\label{eq:V-V*_bd} 
\end{align}
Now we provide a bound for Eq.\eqref{eq:V-V*_bd}. We first provide some lemmas for the concentration of estimators.

Using the above lemma, we can show the following lemma of the concentration.


\begin{lemma}[Lemma 9 in \citet{oh2021multinomial}]\label{lem:ucb_confi} 
     For $t\ge 1$ and some constant $C_1>0$, with probability at least  $1-1/t^2$, for all $k\in[K]$ we have 
    \begin{align*}
        \|\hat{\theta}_{k,t}-\theta_k\|_{V_{k,t}}\le \beta_t.
\end{align*}
\end{lemma}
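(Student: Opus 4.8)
# Proof Proposal for Lemma~\ref{lem:ucb_confi}

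The plan is to establish the concentration bound $\|\hat{\theta}_{k,t}-\theta_k\|_{V_{k,t}} \le \beta_t$ with probability at least $1 - 1/t^2$ by adapting the online Newton step analysis for the MNL model, following the argument of Lemma~9 in \citet{oh2021multinomial}. Since the result is cited directly, the main work is to verify that our setting fits their framework: the estimator $\hat{\theta}_{k,t}$ is defined by the online update $\hat{\theta}_{k,t}=\argmin_{\theta\in\Theta}g_{k,t-1}(\hat{\theta}_{k,t-1})^\top \theta+\frac{1}{2}\|\theta-\hat{\theta}_{k,t-1}\|_{V_{k,t}}^2$ with $V_{k,t}=\lambda I_d+\frac{\kappa}{2}\sum_{s=1}^{t-1}\sum_{n\in S_{k,s}}x_nx_n^\top$, which is exactly the online Newton step with the curvature-corrected Gram matrix used in that reference. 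First I would fix an arm $k$ and track the per-step quantity $z_{k,t} := \|\hat{\theta}_{k,t}-\theta_k\|_{V_{k,t}}^2$.

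The key steps, in order, are as follows. (1) Expand the online Newton step optimality condition to get a one-step recursion relating $z_{k,t+1}$ to $z_{k,t}$ plus a "noise" term involving $\langle g_{k,t}(\hat{\theta}_{k,t}) - \bar{g}_{k,t}(\theta_k), \hat{\theta}_{k,t}-\theta_k\rangle$ where $\bar{g}$ denotes the conditional expectation of the gradient (the prediction error of the MNL mean), and an error term handled by self-concordance/strong-convexity of the MNL negative log-likelihood. (2) Use Assumption~\ref{ass:kappa}, i.e. $\inf_{\|\theta\|_2\le 1}\mu(n|S,\theta)\mu(n_0|S,\theta)\ge\kappa$, to lower-bound the local curvature of $f_{k,t}$, which is what makes the $\frac{\kappa}{2}$ factor in $V_{k,t}$ the right normalization — this turns the second-order term into something controllable by $\|x_n\|_{V_{k,t}^{-1}}^2$. (3) Sum the recursion over $s=1,\dots,t-1$; the telescoping gives $z_{k,t}$ bounded by $\lambda\|\theta_k\|_2^2 \le \lambda$ (Assumption~\ref{ass:bd}) plus a martingale sum, plus a deterministic sum of the form $\sum_s \sum_{n\in S_{k,s}}\|x_n\|_{V_{k,s}^{-1}}^2$, which by the elliptical-potential / log-determinant lemma is $O\!\big(\frac{d}{\kappa}\log(1+\frac{tLK}{d\lambda})\big)$. (4) Control the martingale sum via a Freedman-type / self-normalized concentration inequality, giving a deviation of the same order with probability $1-1/t^2$; here the $LK$ inside the log (versus just $L$ in the single-arm reference) comes from the total number of agent-arm observations across all $K$ arms up to time $t$, and a union bound over $k\in[K]$ is absorbed into the log term (or, matching the stated constant, into a slightly enlarged $C_1$). (5) Collect terms to conclude $\|\hat{\theta}_{k,t}-\theta_k\|_{V_{k,t}} \le C_1\sqrt{\lambda+\frac{d}{\kappa}\log(1+\frac{tLK}{d\lambda})} = \beta_t$ simultaneously for all $k$, with probability at least $1-1/t^2$.

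The main obstacle I expect is step (1)–(2): carefully controlling the second-order remainder in the Taylor expansion of the MNL negative log-likelihood so that the curvature lower bound $\kappa$ exactly offsets it, ensuring the recursion does not lose a factor of $1/\kappa$ in the wrong place and that the self-normalized term stays $O(d/\kappa)$ rather than $O(d/\kappa^2)$. This is precisely the delicate part of the online Newton step analysis in \citet{oh2021multinomial}; since we are invoking their lemma essentially verbatim (with $L$ replaced by $LK$ to account for all arms and all slots per arm, and the $\lambda$ term carried through because we keep a general regularizer before setting $\lambda=1$), the proof reduces to checking that our estimator, our assumptions (Assumptions~\ref{ass:bd} and~\ref{ass:kappa}), and our definition of $V_{k,t}$ and $\beta_t$ line up with their hypotheses, and then citing the bound. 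A secondary, more routine point is the union bound over arms and the bookkeeping of the $1/t^2$ failure probability, which follows by choosing $C_1$ large enough in the tail bound.
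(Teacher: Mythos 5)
Your proposal follows essentially the same route as the paper's proof, which likewise runs the online Newton step recursion of \citet{oh2021multinomial} (their Lemmas 10--13) to get a one-step inequality, telescopes, controls the martingale term $\sum_s(\bar{g}_{k,s}(\hat{\theta}_{k,s})-g_{k,s}(\hat{\theta}_{k,s}))^\top(\hat{\theta}_{k,s}-\theta_k)$ by a Freedman-type bound (their Lemma 14), and bounds the deterministic sum via the elliptical potential lemma, finishing with $\delta=1/t^2$ and a union bound over $k$. The only cosmetic discrepancy is that the noise term in the paper is evaluated at $\hat{\theta}_{k,s}$ in both the gradient and its conditional expectation, rather than at $\theta_k$, but this does not change the argument.
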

\begin{proof}
For the completeness, we provide the proof in Appendix~\ref{app:proof_ucb_confi}.
\end{proof}
 Define event $E_{t}^1=\{\|\hat{\theta}_{k,t}-\theta_k^*\|_{V_{k,t}}\le \beta_{t} $ for all $ k\in[K] \}$ where $\beta_t=C_1\sqrt{\lambda+\frac{d}{\kappa}\log(1+tLK/d\lambda)}$, which holds with high probability as $\mathbb{P}(E_{t}^1)\ge 1-1/t^2$ from the above lemma. We also
define $E_{n,t}^2=\{\max_{m\in S_{k,t}}\|x_m\|_{V_{k,t}^{-1}}\le C_2\epsilon /2\beta_t $ for $k=k_{n,t}\}$ for some constant $C_2>0$. Then, we have 
\begin{align}
&\sum_{t\in[T]}\sum_{n\in[N]}\mathbb{E}[(D_n^*(t)-D_n(t)Q_n(t)]\cr&=\sum_{t\in[T]}\sum_{n\in[N]}\mathbb{E}[\mathbb{E}[(D_n^*(t)-D_n(t)Q_n(t)|Q_n(t)]]\cr &=\mathbb{E}[\sum_{t\in[T]}\sum_{n\in[N]}\mu(n|S_{k_{n,t}^*,t},\theta_{k_{n,t}^*})-\mu(n|S_{k_{n,t},t},\theta_{k_{n,t}}))Q_n(t)]\cr &\le \sum_{t\in[T]}\sum_{n\in[N]}\mathbb{E}[(\mu(n|S_{k_{n,t}^*,t},\theta_{k_{n,t}^*})-\mu(n|S_{k_{n,t},t},\theta_{k_{n,t}}))Q_n(t)\mathbbm{1}(E_{t}^1\cap E_{n,t}^2)]\cr &\quad+\sum_{t\in[T]}\sum_{n\in[N]}\mathbb{E}[(\mu(n|S_{k_{n,t}^*,t},\theta_{k_{n,t}^*})-\mu(n|S_{k_{n,t},t},\theta_{k_{n,t}}))Q_n(t)(\mathbbm{1}(({E_{t}^1})^c)+\mathbbm{1}(({E_{n,t}^2})^c))].\cr\label{eq:UCB_D*-D_gap}
\end{align}


Now we provide a lemma for bounding the first term in  Eq.\eqref{eq:UCB_D*-D_gap}.
\begin{lemma} Under $E_t^1$, for any $n\in [N]$, we have
    $\tilde{\mu}_{t}^{UCB}(n|S_{k_{n,t}},\hat{\theta}_{k_{n,t},t})-\mu(n|S_{k_{n,t},t},\theta_{k_{n,t}})\le 2\beta_t\|x_n\|_{V_{k_{n,t},t}^{-1}}$.\label{lem:ucb_mu_tild_mu_gap}
\end{lemma}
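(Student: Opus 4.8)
The plan is to reduce the difference of two MNL probabilities to a one-dimensional comparison and then invoke the mean value theorem. Throughout write $k = k_{n,t}$, denote the true utilities by $u_{m,k,t} := x_m^\top \theta_k$, and recall $h_{m,k,t}^{UCB} = x_m^\top \hat{\theta}_{k,t} + \beta_t \|x_m\|_{V_{k,t}^{-1}}$. First I would establish a coordinatewise sandwich on the inflated utilities. On $E_t^1$ we have $\|\hat{\theta}_{k,t} - \theta_k\|_{V_{k,t}} \le \beta_t$, so Cauchy--Schwarz in the $V_{k,t}$-geometry gives $|x_m^\top(\hat{\theta}_{k,t} - \theta_k)| \le \beta_t \|x_m\|_{V_{k,t}^{-1}}$ for every $m$, hence $0 \le h_{m,k,t}^{UCB} - u_{m,k,t} \le 2\beta_t \|x_m\|_{V_{k,t}^{-1}}$. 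In particular the UCB utilities dominate the true utilities in every coordinate.

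Next I would exploit monotonicity of the MNL link to collapse the many-coordinate perturbation to a single one. The map $(v_m)_{m\in S_{k,t}} \mapsto \exp(v_n)/(1 + \sum_{m\in S_{k,t}}\exp(v_m))$ is non-increasing in each $v_m$ with $m \ne n$; since $h_{m,k,t}^{UCB} \ge u_{m,k,t}$ for all such $m$, replacing those UCB utilities by the true ones can only increase the value, so $\tilde{\mu}_t^{UCB}(n|S_{k,t},\hat{\theta}_{k,t}) \le \phi(h_{n,k,t}^{UCB})$, where $\phi(z) := \exp(z)/(1 + \exp(z) + C)$ and $C := \sum_{m\in S_{k,t},\, m\ne n}\exp(u_{m,k,t}) \ge 0$, while $\mu(n|S_{k,t},\theta_k) = \phi(u_{n,k,t})$. (Equivalently, one can write the full multivariate mean value expansion and drop the nonpositive terms coming from $m \ne n$, as in the parallel lemma in the retracted draft; the monotonicity phrasing is the cleaner route.) Then the mean value theorem gives $\phi(h_{n,k,t}^{UCB}) - \phi(u_{n,k,t}) = \phi'(\bar z)(h_{n,k,t}^{UCB} - u_{n,k,t})$ for some $\bar z$ between the two arguments, and since $\phi'(z) = \phi(z)(1-\phi(z)) \le 1$, combining with the first step yields $\tilde{\mu}_t^{UCB}(n|S_{k,t},\hat{\theta}_{k,t}) - \mu(n|S_{k,t},\theta_k) \le h_{n,k,t}^{UCB} - u_{n,k,t} \le 2\beta_t \|x_n\|_{V_{k,t}^{-1}}$, which is the claim.

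The argument is essentially routine; the only point that needs care is the monotonicity reduction to the scalar function $\phi$, since the estimation error perturbs all coordinates of the utility vector simultaneously, not just the $n$-th one. That observation — that inflating the competing utilities only shrinks agent $n$'s acceptance probability — is what makes the one-dimensional mean value step legitimate, and it is the step I would state most carefully. Everything else (Cauchy--Schwarz, the derivative bound $\phi' \le 1$) is immediate.
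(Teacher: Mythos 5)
Your proposal is correct and follows essentially the same route as the paper: the coordinatewise sandwich $0\le h_{m,k,t}^{UCB}-u_{m,k,t}\le 2\beta_t\|x_m\|_{V_{k,t}^{-1}}$ from $E_t^1$, followed by a mean value argument in the $n$-th utility with the derivative bounded by $1$. Your explicit monotonicity step (inflating the competing utilities $m\ne n$ only shrinks agent $n$'s acceptance probability, reducing the comparison to the scalar map $\phi$) is in fact a more careful rendering of what the paper's one-variable mean value display does implicitly, so the argument is, if anything, tighter than the original.
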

\begin{proof}
    Let $u_{n,k,t}=x_n^\top\theta_k$. Under $E_{t}^1$, for any $n\in[N]$ and $k\in[K]$ we have $ x_n^\top \hat{\theta}_{k,t}-\beta_{t}\|x_n\|_{V_{k,t}^{-1}} \le  x_n^\top \theta_{k}\le x_n^\top \hat{\theta}_{k,t}+\beta_{t}\|x_n\|_{V_{k,t}^{-1}}$, which implies $0\le h_{n,k,t}^{UCB}-u_{n,k,t}\le 2\beta_t\|x_n\|_{V_{k,t}^{-1}} $. Then by the mean value theorem, there exists $\bar{u}_{n,k,t}=(1-c)h^{UCB}_{n,k,t}+cu_{n,k,t}$ for some $c\in(0,1)$ satisfying, for any $n\in S_k$, $S_k\subset [N]$, and $k\in[K]$,
         \begin{align*}
             \tilde{\mu}^{UCB}_t(n|S_k,\hat{\theta}_{k,t})-\mu(n|S_k,\theta_k)&=\frac{\exp(h_{n,k,t}^{UCB})}{1+\sum_{m\in S_k}\exp(h_{m,k,t}^{UCB})}-\frac{\exp(u_{n,k,t})}{1+\sum_{m\in S_k}\exp(u_{m,k,t})}\cr &=\nabla_{v_n}\left(\frac{\exp(v_n)}{1+\sum_{m\in S_k}\exp(v_m)}\right)\Big|_{v_n=\bar{u}_{n,k,t}}(h^{UCB}_{n,k,t}-u_{n,k,t})
             \cr &\le  \frac{\exp(\bar{u}_{n,k,t})(h^{UCB}_{n,k,t}-u_{n,k,t})}{1+\sum_{n\in S_k}\exp(\bar{u}_{n,k,t})}  
             \cr 
             &\le h^{UCB}_{n,k,t}-u_{n,k,t}\cr &\le  2\beta_t \|x_n\|_{V_{k,t}^{-1}}.
             \end{align*}
\end{proof}
Since $x/(1+x)$ is a non-decreasing function for $x>-1$ and $ x_n^\top \theta_{k_{n,t}^*}\le x_n^\top \hat{\theta}_{k_{n,t}^*,t}+\beta_{t}\|x_n\|_{V_{k_{n,t}^*,t}^{-1}}$ under $E_t^1$, we have $\mu(n|S_{k_{n,t}^*,t},\theta_{k_{n,t}^*})\le \tilde{\mu}_{t}^{UCB}(n|S_{k_{n,t}^*},\hat{\theta}_{k_{n,t}^*,t})$. Then for the first term of Eq.\eqref{eq:UCB_D*-D_gap}, we have 
\begin{align}
&\sum_{t\in[T]}\mathbb{E}\left[\sum_{n\in[N]}(\mu(n|S_{k_{n,t}^*,t},\theta_{k_{n,t}^*})-\mu(n|S_{k_{n,t},t},\theta_{k_{n,t}}))Q_n(t)\mathbbm{1}(E_{t}^1\cap E_{n,t}^2)\right]\cr 
&\le \sum_{t\in[T]}\mathbb{E}\left[\sum_{n\in[N]}(\tilde{\mu}_{t}^{UCB}(n|S_{k_{n,t}^*},\hat{\theta}_{k_{n,t}^*,t})-\mu(n|S_{k_{n,t},t},\theta_{k_{n,t}}))Q_n(t)\mathbbm{1}(E_{t}^1\cap E_{n,t}^2)\right]\cr 
&\le \sum_{t\in[T]}\mathbb{E}\left[\sum_{n\in[N]}(\tilde{\mu}_{t}^{UCB}(n|S_{k_{n,t}},\hat{\theta}_{k_{n,t},t})-\mu(n|S_{k_{n,t},t},\theta_{k_{n,t}}))Q_n(t)\mathbbm{1}(E_{t}^1\cap E_{n,t}^2)\right]\cr 
&\le \sum_{t\in[T]}\mathbb{E}\left[\sum_{n\in[N]}2\beta_{t}\|x_n\|_{V_{k_{n,t},t}^{-1}}Q_n(t)\mathbbm{1}(E_{t}^1\cap E_{n,t}^2)\right]\cr
&\le C_2\epsilon\sum_{t\in[T]}\sum_{n\in[N]}\mathbb{E}\left[Q_n(t)\right],\label{eq:UCB_mu*-mu_E_1E_2}
\end{align}

where the second inequality comes from the UCB strategy of the algorithm, the last second inequality is obtained from Lemma~\ref{lem:ucb_mu_tild_mu_gap}, and the last inequality is obtained from $E_{n,t}^2$.

Now we provide a bound for the second term of Eq.\eqref{eq:UCB_D*-D_gap}. We first have
\begin{align}
&\sum_{t\in[T]}\sum_{n\in[N]}\mathbb{E}\left[(\mu(n|S_{k_{n,t}^*,t},\theta_{k_{n,t}^*})-\mu(n|S_{k_{n,t},t},\theta_{k_{n,t}}))Q_n(t)\mathbbm{1}(({E_{t}^1})^c)\right]\cr
&\le\sum_{t\in[T]}\sum_{n\in[N]}\mathbb{E}\left[Q_n(t)\mathbbm{1}(({E_{t}^1})^c)\right]\cr
&\le\sum_{t\in[T]}\sum_{n\in[N]}t\mathbb{P}(({E_{t}^1})^c)\cr&=\Ocal\left( \sum_{t\in[T]}\sum_{n\in[N]}t(1/t^2)\right)=\Ocal(N\log(T)),    \label{eq:UCB_mu*-mu_E_1}
\end{align}
where the second inequality is obtained from $Q_n(t)\le t$.

Here we utilize some techniques introduced in \citet{freund2022efficient}. Let $\mathcal{T}_n$ be the set of time steps $t\in[T]$ such that $Q_n(t)\neq 0$ and let $e_T=\sum_{n\in[N]}\sum_{t\in \Tcal_n}\mathbbm{1}(({E_{n,t}^2})^c)$ and $h=\lceil C_3e_T/\epsilon\rceil$ for some constant $C_3>0$. Then if $t\le h$, we have $Q_n(t)\le t\le h$. Otherwise, we have
\begin{align*}
    Q_n(t)\le \sum_{s=t-h+1}^t (1/h)(Q_n(s)+(t-s))\le(1/h)\sum_{s=1}^t Q_n(s)+ (1/h)h^2=(1/h)\sum_{s=1}^t Q_n(s)+ h.
\end{align*}

Then, from the above inequality,  we have
\begin{align}
&\sum_{t\in[T]}\sum_{n\in[N]}\mathbb{E}[(\mu(n|S_{k_{n,t}^*,t},\theta_{k_{n,t}^*})-\mu(n|S_{k_{n,t},t},\theta_{k_{n,t}}))Q_n(t)\mathbbm{1}(({E_{n,t}^2})^c)]\cr 
&=\sum_{n\in[N]}\sum_{t\in \Tcal_n}\mathbb{E}[(\mu(n|S_{k_{n,t}^*,t},\theta_{k_{n,t}^*})-\mu(n|S_{k_{n,t},t},\theta_{k_{n,t}}))Q_n(t)\mathbbm{1}(({E_{n,t}^2})^c)]\cr 
&\le\sum_{n\in[N]}\sum_{t\in \Tcal_n}\mathbb{E}[Q_n(t)\mathbbm{1}(({E_{n,t}^2})^c)]\cr &\le \sum_{n\in[N]}\sum_{t\in \Tcal_n}\mathbb{E}[ ((\epsilon/C_3e_T)\sum_{s=1}^TQ_n(s)+2C_3 e_T/\epsilon)\mathbbm{1}(({E_{n,t}^2})^c)]\cr &\le \mathbb{E}\left[ (\epsilon/C_3)\sum_{n\in[N]}\sum_{s=1}^TQ_n(s)\right]+\mathbb{E}\left[2C_3 e_T^2/\epsilon\right] \cr &= \sum_{t\in[T]}\sum_{n\in[N]}(\epsilon/C_3)\mathbb{E}\left[ Q_n(t)\right]+2C_3 \EE[e_T^2]/\epsilon. \label{eq:UCB_mu*-mu_gap_E2_step1}    
\end{align}
Now we provide a bound for $\EE[e_T^2]$. 
Define $N_{n,k}(t)=\sum_{s=1}^{t-1}\mathbbm{1}(n\in S_{k,s})$ and $\tilde{V}_{k_{n,t},t}=\frac{\kappa}{2}\sum_{s=1}^{t-1}\sum_{n\in S_{k,s}}x_nx_n^\top$. Then, we have 
\begin{align*}
    e_T&=\sum_{n\in[N]}\sum_{t\in \Tcal_n}\mathbbm{1}((E_{n,t}^2)^c)
    \cr &\le \sum_{n\in[N]}\sum_{t\in \Tcal_n}\mathbbm{1}(\|x_n\|_{V_{k_{n,t},t}^{-1} }\ge C_2\epsilon /2\beta_t)
    \cr &\le \sum_{n\in[N]}\sum_{t\in \Tcal_n}\mathbbm{1}(\|x_n\|_{\tilde{V}_{k_{n,t},t}^{-1} }\ge C_2\epsilon /2\beta_t)
    \cr &\le \sum_{n\in[N]}\sum_{t\in \Tcal_n}\mathbbm{1}(1/N_{n,k_{n,t}}(t)\ge (\kappa/2)(C_2\epsilon /2\beta_t)^2)\cr 
    &\le \sum_{n\in[N]}\sum_{t\in \Tcal_n}\mathbbm{1}(  N_{n,k_{n,t}}(t)\le (2/\kappa)(2\beta_t/C_2\epsilon )^2)\cr
    &\le \sum_{n\in[N]}\sum_{k\in[K]}\sum_{t\in \Tcal_n}\mathbbm{1}(  N_{n,k}(t)\le (2/\kappa)(2\beta_T/C_2\epsilon )^2\text{ and } k_{n,t}=k)\cr
    &\le NK (2/\kappa)(2\beta_T/C_2\epsilon )^2.
\end{align*}
From the above we have
$\mathbb{E}[e_T^2]\le  64N^2K^2\beta_T^4/ \kappa^2(C_2\epsilon)^4$. Then from Eq.\eqref{eq:UCB_mu*-mu_gap_E2_step1}, we have 
\begin{align}
&\sum_{t\in[T]}\sum_{n\in[N]}\mathbb{E}[(\mu(n|S_{k_{n,t}^*,t},\theta_{k_{n,t}^*})-\mu(n|S_{k_{n,t},t},\theta_{k_{n,t}}))Q_n(t)\mathbbm{1}(({E_{n,t}^2})^c)]\cr &\le \sum_{t\in[T]}\sum_{n\in[N]}(\epsilon/C_3)\mathbb{E}\left[ Q_n(t)\right]+\Ocal( N^2K^2\beta_T^4/ \kappa^2\epsilon^5). \label{eq:UCB_mu*-mu_E2_bd}
\end{align}
By putting the results of  Eqs.~\eqref{eq:UCB_Lya_gap}, \eqref{eq:V(t+1)*-V(t)}, \eqref{eq:V-V*_bd}, \eqref{eq:UCB_D*-D_gap}, \eqref{eq:UCB_mu*-mu_E_1E_2}, \eqref{eq:UCB_mu*-mu_E_1}, \eqref{eq:UCB_mu*-mu_E2_bd} altogether, we can obtain
\begin{align*}
&\mathbb{E}\left[\sum_{t\in[T]}\Vcal(\textbf{Q}(t+1))-\Vcal(\textbf{Q}(t))\right]\cr&\le  7\min\{N,K\}T -2\epsilon \sum_{t\in[T]}\sum_{n\in[N]}\mathbb{E}\left[ Q_n(t)\right]+2\sum_{t\in[T]}\sum_{n\in[N]}\EE\left[(D_n(t|S_{k_{n,t}^*,t})-D_n(t|S_{k_{n,t},t}))Q_n(t)\right]\cr 
    &\le 7\min\{N,K\}T -2\epsilon\sum_{t\in[T]}\sum_{n\in[N]}\mathbb{E}\left[ Q_n(t)\right]+2C_2\epsilon\sum_{t\in[T]}\sum_{n\in[N]}\mathbb{E}\left[Q_n(t)\right]\cr &\quad+\Ocal(N\log(T))+ (2\epsilon/C_3)\sum_{t\in[T]}\sum_{n\in[N]}\mathbb{E}\left[ Q_n(t)\right]+\Ocal(N^2K^2\beta_T^4/ \kappa^2\epsilon^5)\cr &\le 
    7\min\{N,K\}T +2(C_2+(1/C_3)-1)\epsilon\sum_{t\in[T]}\sum_{n\in[N]}\mathbb{E}\left[ Q_n(t)\right]+\Ocal(N\log(T))+\Ocal( N^2K^2\beta_T^4/ \kappa^2\epsilon^5).
\end{align*}
Finally, with positive constants $C_2,C_3>0$ satisfying $C_2+(1/C_3)<1$, from $\Vcal(\textbf{Q}(1))=0$ and $\Vcal(\textbf{Q}(T+1))\ge 0$, by using telescoping for the above inequality, we can conclude the proof by
\begin{align}
\frac{1}{T}\sum_{t\in[T]}\sum_{n\in[N]}\mathbb{E}[Q_n(t)]=\Ocal\left(\frac{\min\{N,K\}}{\epsilon}+\frac{1}{T}\frac{d^2N^2K^2}{\kappa^4\epsilon^6}\text{polylog}(T)\right). \label{eq:thm1_bd}
\end{align}

  \subsection{Proof of Theorem~\ref{thm:R_UCB}}\label{app:thm_R_UCB}

  We first provide the proof for regret bound of $\Rcal^\pi(T)=\tilde{\Ocal}\left(\frac{d}{\kappa}\sqrt{KT}Q_{\max}\right)$.

   We define event $E_t=\{\|\hat{\theta}_{k,t}-\theta_k^*\|_{V_{k,t}}\le \beta_t \: \forall k\in[K]\}$ which holds at least probability of $1-1/t^2$ from Lemma~\ref{lem:ucb_confi}.
   
   \begin{lemma} \label{lem:ucb_mu_tild_mu_gap2} Under $E_t^1$, for any $S_k \subset[N]$, we have
    \[\sum_{n\in S_k}(\tilde{\mu}^{UCB}_t(n|S_k,\hat{\theta}_{k,t})-\mu(n|S_k,\theta_k))Q_n(t)\le 2\beta_t\max_{n\in S_k}\|x_n\|_{V_{k,t}^{-1}}Q_n(t).\]
\end{lemma}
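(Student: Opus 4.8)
The plan is to regard the queue-weighted choice mass as a smooth function of the logit vector and to control its increment by the fundamental theorem of calculus, exploiting the sign of the confidence width to collapse the sum over $S_k$ into a single $\max$. Write $h_n := x_n^\top\hat{\theta}_{k,t}+\beta_t\|x_n\|_{V_{k,t}^{-1}}$ for the UCB logit and $u_n := x_n^\top\theta_k$ for the true logit. Exactly as in the proof of Lemma~\ref{lem:ucb_mu_tild_mu_gap}, the event $E_t^1$ gives $x_n^\top\hat{\theta}_{k,t}-\beta_t\|x_n\|_{V_{k,t}^{-1}}\le u_n\le x_n^\top\hat{\theta}_{k,t}+\beta_t\|x_n\|_{V_{k,t}^{-1}}$, so that
\begin{align}
0\le h_n-u_n\le 2\beta_t\|x_n\|_{V_{k,t}^{-1}}\qquad\text{for all } n\in S_k. \nonumber
\end{align}
I would then define, for a logit vector $\mathbf{v}=(v_n)_{n\in S_k}$, the softmax weights $p_n(\mathbf{v})=\exp(v_n)/(1+\sum_{m\in S_k}\exp(v_m))$ and the queue-weighted mass $g(\mathbf{v})=\sum_{n\in S_k}p_n(\mathbf{v})Q_n(t)$, so that $\tilde{\mu}_t^{UCB}(n|S_k,\hat{\theta}_{k,t})=p_n(\mathbf{h})$ and $\mu(n|S_k,\theta_k)=p_n(\mathbf{u})$, and the left-hand side of the lemma is exactly $g(\mathbf{h})-g(\mathbf{u})$. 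Note that a naive term-by-term use of Lemma~\ref{lem:ucb_mu_tild_mu_gap} would only give the weaker $\sum_{n\in S_k}2\beta_t\|x_n\|_{V_{k,t}^{-1}}Q_n(t)$; obtaining a single $\max$ (over both factors $\|x_n\|_{V_{k,t}^{-1}}$ and $Q_n(t)$) is what removes an assortment-size factor in the downstream regret bound.

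Next I would differentiate $g$. Using $\partial p_n/\partial v_j=p_n(\mathbbm{1}(n=j)-p_j)$, the gradient is
\begin{align}
\frac{\partial g}{\partial v_j}(\mathbf{v})=p_j(\mathbf{v})\big(Q_j(t)-\bar{Q}(\mathbf{v})\big),\qquad \bar{Q}(\mathbf{v}):=\sum_{n\in S_k}p_n(\mathbf{v})Q_n(t). \nonumber
\end{align}
Since $g$ is smooth on all of $\mathbb{R}^{|S_k|}$, the fundamental theorem of calculus along the segment from $\mathbf{u}$ to $\mathbf{h}$ gives $g(\mathbf{h})-g(\mathbf{u})=\int_0^1\sum_{j\in S_k}p_j(\mathbf{w}_s)(Q_j(t)-\bar{Q}(\mathbf{w}_s))(h_j-u_j)\,ds$ with $\mathbf{w}_s=\mathbf{u}+s(\mathbf{h}-\mathbf{u})$, and it then suffices to bound the integrand uniformly in $s$.

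Finally I would use the sign structure. Because $h_j-u_j\ge 0$ for every $j$, each term with $Q_j(t)\le\bar{Q}(\mathbf{w}_s)$ contributes nonpositively and can be dropped; on the remaining terms I bound $h_j-u_j\le 2\beta_t\max_{n\in S_k}\|x_n\|_{V_{k,t}^{-1}}=:\eta$, yielding
\begin{align}
\sum_{j\in S_k}p_j(\mathbf{w}_s)\big(Q_j(t)-\bar{Q}(\mathbf{w}_s)\big)(h_j-u_j)\le \eta\sum_{j:\,Q_j(t)>\bar{Q}(\mathbf{w}_s)}p_j(\mathbf{w}_s)Q_j(t)\le \eta\sum_{j\in S_k}p_j(\mathbf{w}_s)Q_j(t). \nonumber
\end{align}
Since $Q_j(t)\ge 0$ and $\sum_{j\in S_k}p_j(\mathbf{w}_s)\le 1$, the last sum is at most $\max_{n\in S_k}Q_n(t)$, so the integrand is at most $\eta\max_{n\in S_k}Q_n(t)$ for every $s$, and integrating over $s\in[0,1]$ gives $g(\mathbf{h})-g(\mathbf{u})\le 2\beta_t\max_{n\in S_k}\|x_n\|_{V_{k,t}^{-1}}\max_{n\in S_k}Q_n(t)$, the asserted bound. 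The main obstacle is exactly this last reduction: the gradient of the queue-weighted softmax couples all coordinates through the average $\bar{Q}$, so one cannot argue coordinatewise as in Lemma~\ref{lem:ucb_mu_tild_mu_gap}; the resolution is to exploit $h_j-u_j\ge 0$ to discard the below-average queues and then use that the softmax weights form a sub-probability vector to collapse the remaining sum into a single $\max_{n}Q_n(t)$.
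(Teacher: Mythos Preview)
Your approach is essentially the paper's: both linearize $g(\mathbf{h})-g(\mathbf{u})$ via calculus (you use the fundamental theorem along the segment, the paper the mean value theorem at a single interior point) and then exploit that the softmax weights form a sub-probability vector. The one loose end is your final bound: you arrive at $2\beta_t\bigl(\max_{n}\|x_n\|_{V_{k,t}^{-1}}\bigr)\bigl(\max_{n} Q_n(t)\bigr)$, whereas the lemma's right-hand side is the tighter $2\beta_t\max_{n}\bigl(\|x_n\|_{V_{k,t}^{-1}}Q_n(t)\bigr)$, a single maximum over the product (both factors carry the index $n$, and the paper's own derivation confirms this reading). Your detour through the split $Q_j>\bar Q$ versus $Q_j\le\bar Q$ forces you to decouple $\delta_j$ from $Q_j$ and hence lands on the weaker product of maxima.

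The simpler route, which is what the paper does, avoids the split entirely: from your gradient identity,
\[
\sum_{j}p_j(Q_j-\bar Q)\,\delta_j=\sum_j p_j Q_j\delta_j-\bar Q\sum_j p_j\delta_j\le\sum_j p_j Q_j\delta_j\le\max_{j}\bigl(Q_j\delta_j\bigr)\le 2\beta_t\max_{j}\bigl(\|x_j\|_{V_{k,t}^{-1}}Q_j(t)\bigr),
\]
since the subtracted term is nonnegative (all $\delta_j,Q_j,p_j\ge0$) and $\sum_j p_j\le 1$. Your weaker bound would still suffice for the downstream regret argument, because the paper immediately coarsens to $Q_{\max}\cdot\max_n\|x_n\|_{V_{k,t}^{-1}}$ anyway, but it does not prove the lemma exactly as stated.
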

\begin{proof}
    Let $u_{n,k,t}=x_n^\top\theta_k$. Under $E_{t}^1$, for any $n\in[N]$ and $k\in[K]$ we have $ x_n^\top \hat{\theta}_{k,t}-\beta_{t}\|x_n\|_{V_{k,t}^{-1}} \le  x_n^\top \theta_{k}\le x_n^\top \hat{\theta}_{k,t}+\beta_{t}\|x_n\|_{V_{k,t}^{-1}}$, which implies $0\le h_{n,k,t}^{UCB}-u_{n,k,t}\le 2\beta_t\|x_n\|_{V_{k,t}^{-1}} $. Then by the mean value theorem, there exists $\bar{u}_{n,k,t}=(1-c)h^{UCB}_{n,k,t}+cu_{n,k,t}$ for some $c\in(0,1)$ satisfying, for any $S\subset [N]$,
         \begin{align*}
             &\sum_{n\in S_k}(\tilde{\mu}^{UCB}_t(n|S_k,\hat{\theta}_{k,t})-\mu(n|S_k,\theta_k))Q_n(t)\cr &=\sum_{n\in S_k}\left(\frac{\exp(h_{n,k,t}^{UCB})}{1+\sum_{m\in S_k}\exp(h_{m,k,t}^{UCB})}-\frac{\exp(u_{n,k,t})}{1+\sum_{m\in S_k}\exp(u_{m,k,t})}\right)Q_n(t)\cr &=\sum_{n\in S_k}\nabla_{v_n}\left(\frac{\exp(v_n)}{1+\sum_{m\in S_k}\exp(v_m)}\right)\Big|_{v_n=\bar{u}_{n,k,t}}(h^{UCB}_{n,k,t}-u_{n,k,t})Q_n(t)\cr 
&= \frac{(1+\sum_{n\in S_k}\exp(\bar{u}_{n,k,t}))(\sum_{n\in S_k}\exp(\bar{u}_{n,k,t})(h_{n,k,t}^{UCB}-u_{n,k,t})Q_n(t))}{(1+\sum_{n\in S_k}\exp(\bar{u}_{n,k,t}))^2}  
             \cr 
             &\quad -\frac{(\sum_{n\in S_k}\exp(\bar{u}_{n,k,t}))(\sum_{n\in S_k}\exp(\bar{u}_{n,k,t})(h_{n,k,t}^{UCB}-u_{n,k,t})Q_n(t))}{(1+\sum_{n\in S_k}\exp(\bar{u}_{n,k,t}))^2}  \cr 
             &\le  \sum_{n\in S_k}\frac{\exp(\bar{u}_{n,k,t})}{1+\sum_{m\in S_k}\exp(\bar{u}_{m,k,t})}(h_{n,k,t}^{UCB}-u_{n,k,t})Q_n(t)\cr &\le \max_{n\in S_k} (h_{n,k,t}^{UCB}-u_{n,k,t})Q_n(t)\cr
            &\le  2\beta_t \max_{n\in S_k}\|x_n\|_{V_{k,t}^{-1}}Q_n(t).
             \end{align*}
\end{proof}

   Under $E_t$, from Lemma~\ref{lem:ucb_mu_tild_mu_gap2},  we have 
\begin{align}
   \sum_{n\in S_{k,t}}(\tilde{\mu}^{UCB}_t(n|S_{k,t},\hat{\theta}_{k,t})-\mu(n|S_{k,t},\theta_k))Q_n(t)\le 2\beta_t\max_{n\in S_{k,t}}\|x_n\|_{V_{k,t}^{-1}}Q_n(t), \label{eq:ucb_mu_tilde_mu_gap}
\end{align}
and  since $x/(1+x)$ is a non-decreasing function for $x>-1$ and $ x_n^\top \theta_{k_{n,t}^*}\le x_n^\top \hat{\theta}_{k_{n,t}^*,t}'+\beta_{t}\|x_n\|_{V_{k_{n,t}^*,t}^{-1}}$ under $E_t^1$, we have
\begin{align}
    \mu(n|S_{k_{n,t}^*,t},\theta_{k_{n,t}^*})\le \tilde{\mu}_{t}^{UCB}(n|S_{k_{n,t}^*},\hat{\theta}_{k_{n,t}^*,t}).\label{eq:mu<mu_ucb}
\end{align}

Now we provide an elliptical potential lemma.
  \begin{lemma}\label{lem:sum_max_x_norm} For any $k\in[K]$, we have
        $$\sum_{t=1}^T \max_{n\in S_{k,t}}\|x_n\|_{V_{k,t}^{-1}}^2\le (4d/\kappa)\log(1+(TL/d\lambda)).$$
    \end{lemma}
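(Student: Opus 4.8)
The plan is to reduce the claimed bound to the standard elliptical potential (log-determinant) lemma applied to the matrix $V_{k,t}$, which accumulates $\frac{\kappa}{2}x_nx_n^\top$ over all agents $n\in S_{k,s}$ served in rounds $s<t$. First I would fix $k$ and, for each $t$, pick the maximizing agent $n_t^\star := \arg\max_{n\in S_{k,t}}\|x_n\|_{V_{k,t}^{-1}}$, so that the left-hand side equals $\sum_{t=1}^T \|x_{n_t^\star}\|_{V_{k,t}^{-1}}^2$. Because $V_{k,t+1} = V_{k,t} + \frac{\kappa}{2}\sum_{n\in S_{k,t}}x_nx_n^\top \succeq V_{k,t} + \frac{\kappa}{2}x_{n_t^\star}x_{n_t^\star}^\top$, the determinant increases at least as fast as it would if only $x_{n_t^\star}$ were added at each step. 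Using $\|x_{n_t^\star}\|_2\le 1$ (Assumption~\ref{ass:bd}) and $\lambda\ge 1$, each term satisfies $\frac{\kappa}{2}\|x_{n_t^\star}\|_{V_{k,t}^{-1}}^2 \le \frac{\kappa}{2\lambda}\le \frac{\kappa}{2}$, so the elementary inequality $z\le 2\log(1+z)$ valid for $0\le z\le 1$ (here with a constant absorbing $\kappa/2\le 1/2$, since $\kappa\le 1/4$ as $1/\kappa = O(L^2)\ge O(1)$) gives $\frac{\kappa}{2}\|x_{n_t^\star}\|_{V_{k,t}^{-1}}^2 \le \log\!\big(1 + \frac{\kappa}{2}\|x_{n_t^\star}\|_{V_{k,t}^{-1}}^2\big)$.

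The second step is the telescoping determinant argument: by the matrix determinant lemma,
\[
\log\det V_{k,t+1} \ge \log\det\!\Big(V_{k,t} + \tfrac{\kappa}{2}x_{n_t^\star}x_{n_t^\star}^\top\Big) = \log\det V_{k,t} + \log\!\Big(1+\tfrac{\kappa}{2}\|x_{n_t^\star}\|_{V_{k,t}^{-1}}^2\Big),
\]
so summing over $t\in[T]$ yields
\[
\sum_{t=1}^T \log\!\Big(1+\tfrac{\kappa}{2}\|x_{n_t^\star}\|_{V_{k,t}^{-1}}^2\Big) \le \log\det V_{k,T+1} - \log\det V_{k,1} = \log\frac{\det V_{k,T+1}}{\det(\lambda I_d)}.
\]
Combining with the per-term bound from the first step gives $\frac{\kappa}{2}\sum_{t=1}^T \|x_{n_t^\star}\|_{V_{k,t}^{-1}}^2 \le \log\frac{\det V_{k,T+1}}{\lambda^d}$, hence $\sum_{t=1}^T \max_{n\in S_{k,t}}\|x_n\|_{V_{k,t}^{-1}}^2 \le \frac{2}{\kappa}\log\frac{\det V_{k,T+1}}{\lambda^d}$.

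The final step bounds $\det V_{k,T+1}$. Since at most $L$ agents are assigned to arm $k$ per round and each $\|x_n\|_2\le 1$, we have $\mathrm{tr}(V_{k,T+1}) \le \lambda d + \frac{\kappa}{2}TL$, so by AM–GM on the eigenvalues $\det V_{k,T+1} \le \big(\frac{\mathrm{tr}(V_{k,T+1})}{d}\big)^d \le \big(\lambda + \frac{\kappa T L}{2d}\big)^d$. Therefore
\[
\sum_{t=1}^T \max_{n\in S_{k,t}}\|x_n\|_{V_{k,t}^{-1}}^2 \le \frac{2d}{\kappa}\log\!\Big(1 + \frac{\kappa T L}{2d\lambda}\Big) \le \frac{4d}{\kappa}\log\!\Big(1+\frac{TL}{d\lambda}\Big),
\]
using $\kappa\le 2$ to absorb the constant. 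I do not anticipate a genuine obstacle here; the only mild subtlety is being careful that the $\frac{\kappa}{2}$ weighting inside $V_{k,t}$ is threaded consistently through the determinant telescoping and the $z\le 2\log(1+z)$ step, and that the elementary inequality is applied only where the argument is bounded by one — which is guaranteed by $\|x_n\|_2\le 1$ and $\lambda\ge 1$.
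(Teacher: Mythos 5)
Your proposal follows essentially the same route as the paper's proof: a telescoping determinant (elliptical potential) argument on $V_{k,t}$ with the $\kappa/2$ weighting, the inequality $z\le 2\log(1+z)$ for $z\in[0,1]$, and a determinant--trace bound to finish; the only cosmetic differences are that you retain a single rank-one update $x_{n_t^\star}x_{n_t^\star}^\top$ per round where the paper keeps the whole assortment sum inside the determinant, and you derive $\det V_{k,T+1}\le(\lambda+\kappa TL/2d)^d$ by AM--GM rather than citing Lemma 10 of Abbasi-Yadkori et al. One step is mis-stated: the displayed claim $\tfrac{\kappa}{2}\|x_{n_t^\star}\|_{V_{k,t}^{-1}}^2\le\log\bigl(1+\tfrac{\kappa}{2}\|x_{n_t^\star}\|_{V_{k,t}^{-1}}^2\bigr)$ is false for any positive argument (no constant can be ``absorbed'' to turn $z\le 2\log(1+z)$ into $z\le\log(1+z)$), so your intermediate bound $\tfrac{2d}{\kappa}\log(\cdot)$ is not justified; however, applying $z\le 2\log(1+z)$ correctly gives $\tfrac{\kappa}{2}\sum_t\|x_{n_t^\star}\|_{V_{k,t}^{-1}}^2\le 2\log\bigl(\det V_{k,T+1}/\lambda^d\bigr)$, which still yields the stated $\tfrac{4d}{\kappa}\log(1+TL/(d\lambda))$, so the final conclusion survives the fix.
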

    \begin{proof}
          First, we can show that
\begin{align*}
    &\det({V}_{k,t})\cr &=\det\left({V}_{k,t-1}+(\kappa/2)\sum_{n\in S_{k,t-1}}x_nx_n^\top\right)\cr 
    &=\det({V}_{k,t-1})\det\left(I_d+(\kappa/2)\sum_{n\in S_{k,t-1}}{V}_{k,t-1}^{-1/2}x_n({V}_{k,t-1}^{-1/2}x_n)^\top\right)\cr 
        &\ge\det({V}_{k,t-1})\left(1+(\kappa/2)\sum_{n\in S_{k,t-1}}\|x_n\|_{{V}_{k,t-1}^{-1}}^2\right)\cr 
    &\ge \det(\lambda I_d)\prod_{s=1}^{t-1}\left(1+(\kappa/2)\sum_{n\in S_{k,s}}\|x_n\|_{{V}_{k,s}^{-1}}^2\right)= \lambda^d\prod_{s=1}^{t-1}\left(1+(\kappa/2)\sum_{n\in S_{k,s}}\|x_n\|_{{V}_{k,s}^{-1}}^2\right)
\end{align*}
From the above, using the fact that $x\le 2\log(1+x)$ for any $x\in[0,1]$ and $(\kappa/2)\|x_n\|_{\tilde{V}_{k,s}^{-1}}^2\le (\kappa/2)\|x_n\|_2^2/\lambda\le 1$ from $ \kappa<1$, we have
\begin{align}
 \sum_{t\in[T]} \max_{n\in S_{k,t}}(\kappa/2)\|x_n\|_{{V}_{k,t}^{-1}}^2 &\le \sum_{t\in[T]}\min\left\{\max_{n\in S_{k,t}}(\kappa/2)\|x_n\|_{{V}_{k,t-1}^{-1}}^2,1\right\}\cr &\le 2\sum_{t\in[T]}\log\left(1+\sum_{n\in S_{k,t}}(\kappa/2)\|x_n\|_{{V}_{k,t-1}^{-1}}^2\right)\cr &=2\log \prod_{t\in[T]}\left(1+\sum_{n\in S_{k,t}}(\kappa/2)\|x_n\|_{{V}_{k,t-1}^{-1}}^2\right)\cr &\le 2\log\left(\frac{\det({V}_{k,T+1})}{\lambda^d}\right).\label{eq:z_norm_sum_bd} 
\end{align}
From Lemma 10 in \citet{abbasi2011improved} with $\kappa<1$, we can show that \[\det({V}_{k,T+1})\le (\lambda +(TL/d))^d.\]
Then from the above inequality and Eq.\eqref{eq:z_norm_sum_bd}, we can conclude the proof.
    \end{proof}
    
Then from Eqs.\eqref{eq:ucb_mu_tilde_mu_gap}, \eqref{eq:mu<mu_ucb}, and Lemma~\ref{lem:sum_max_x_norm}, we can conclude that 
\begin{align*}
    \Rcal^\pi(T)&=\sum_{t\in[T]}\sum_{n\in[N]}\EE[Q_n(t)(\mu(n|S_{k_{n,t}^*,t},\theta_{k_{n,t}^*})-\mu(n|S_{k_{n,t},t},\theta_{k_{n,t}}))]\cr &=\sum_{t\in[T]}\sum_{n\in[N]}\EE[Q_n(t)(\mu(n|S_{k_{n,t}^*,t},\theta_{k_{n,t}^*})-\mu(n|S_{k_{n,t},t},\theta_{k_{n,t}}))\mathbbm{1}(E_t^1)]\cr&\quad+\sum_{t\in[T]}\sum_{n\in[N]}\EE[Q_n(t)(\mu(n|S_{k_{n,t}^*,t},\theta_{k_{n,t}^*})-\mu(n|S_{k_{n,t},t},\theta_{k_{n,t}}))\mathbbm{1}((E_t^1)^c)]\cr &=\sum_{t\in[T]}\sum_{n\in[N]}\EE[Q_n(t)(\mu(n|S_{k_{n,t}^*,t},\theta_{k_{n,t}^*})-\mu(n|S_{k_{n,t},t},\theta_{k_{n,t}}))\mathbbm{1}(E_t^1)]+\sum_{t\in[T]}\sum_{n\in[N]}t\mathbb{P}((E_t^1)^c)]  \cr &\le 
\sum_{t\in[T]}\sum_{n\in[N]}\EE[Q_n(t)(\tilde{\mu}^{UCB}_t(n|S_{k_{n,t}^*,t},\hat{\theta}_{k_{n,t}^*,t})-\mu(n|S_{k_{n,t},t},\theta_{k_{n,t}}))\mathbbm{1}(E_t^1)]+\Ocal(N/T)\cr 
&\le \sum_{t\in[T]}\sum_{n\in[N]}\EE[Q_n(t)(\tilde{\mu}^{UCB}_t(n|S_{k_{n,t},t},\hat{\theta}_{k_{n,t},t})-\mu(n|S_{k_{n,t},t},\theta_{k_{n,t}}))\mathbbm{1}(E_t^1)]+\Ocal(N/T)\cr 
&= \sum_{t\in[T]}\EE[\sum_{k\in[K]}\sum_{n\in S_{k,t}}Q_n(t)(\tilde{\mu}^{UCB}_t(n|S_{k,t},\hat{\theta}_{k,t})-\mu(n|S_{k,t},\theta_k))\mathbbm{1}(E_t^1)]+\Ocal(N/T)\cr 
&\le 2\EE\left[\beta_{T}\sum_{t\in[T]}\sum_{k\in[K]}\max_{n\in S_{k,t}}\|x_{n}\|_{V_{k,t}^{-1}}Q_n(t)\right]+\Ocal(N/T)\cr
&\le 2\EE\left[\max_{t\in[T],n\in[N]}Q_n(t)\beta_{T}\sum_{t\in[T]}\sum_{k\in[K]}\max_{n\in S_{k,t}}\|x_{n}\|_{V_{k,t}^{-1}}\right]+\Ocal(N/T)\cr
&\le 2\EE\left[\max_{t\in[T],n\in[N]}Q_n(t)\beta_{T}\sqrt{KT\sum_{t\in[T]}\sum_{k\in[K]}\max_{n\in S_{k,t}}\|x_{n}\|_{V_{k,t}^{-1}}^2}\right]+\Ocal(N/T)\cr
&=\tilde{\Ocal}\left(\frac{d}{\kappa}\sqrt{KT}Q_{\max}\right),
\end{align*}
where the last equality comes from Lemma~\ref{lem:sum_max_x_norm}.

Now we provide the worst-case regret bound of $\Rcal^\pi(T)=\tilde{\Ocal}\left(\left(\frac{dNK\min\{N,K\}^3}{\kappa^2\epsilon^{3}}\right)^{1/4}T^{3/4}\right)$ in the following.

We  define event $E_{t}^1=\{\|\hat{\theta}_{k,t}-\theta_k^*\|_{V_{k,t}}\le \beta_{t} $ for all $ k\in[K] \}$ where $\beta_t=C_1\sqrt{\lambda+\frac{d}{\kappa}\log(1+tLK/d\lambda)}$, which holds with high probability as $\mathbb{P}(E_{t}^1)\ge 1-1/t^2$ from Lemma~\ref{lem:ucb_confi}. We also
define $E_{n,t}^2=\{\max_{m\in S_{k,t}}\|x_m\|_{V_{k,t}^{-1}}\le \zeta $ for $k=k_{n,t}\}$ for some constant $C_2>0$.

Then we have

\begin{align}
\Rcal^\pi(T)&=\sum_{t\in[T]}\sum_{n\in[N]}\EE[(\mu(n|S_{k_{n,t}^*,t},\theta_{k_{n,t}^*})-\mu(n|S_{k_{n,t},t},\theta_{k_{n,t}}))Q_n(t)]\cr &\le \sum_{t\in[T]}\sum_{n\in[N]}\mathbb{E}[(\mu(n|S_{k_{n,t}^*,t},\theta_{k_{n,t}^*})-\mu(n|S_{k_{n,t},t},\theta_{k_{n,t}}))Q_n(t)\mathbbm{1}(E_{t}^1\cap E_{n,t}^2)]\cr &\quad+\sum_{t\in[T]}\sum_{n\in[N]}\mathbb{E}[(\mu(n|S_{k_{n,t}^*,t},\theta_{k_{n,t}^*})-\mu(n|S_{k_{n,t},t},\theta_{k_{n,t}}))Q_n(t)(\mathbbm{1}(({E_{t}^1})^c)+\mathbbm{1}(({E_{n,t}^2})^c))].\cr\label{eq:UCB_D*-D_gap_reg}
\end{align}

Since $x/(1+x)$ is a non-decreasing function for $x>-1$ and $ x_n^\top \theta_{k_{n,t}^*}\le x_n^\top \hat{\theta}_{k_{n,t}^*,t}+\beta_{t}\|x_n\|_{V_{k_{n,t}^*,t}^{-1}}$ under $E_t^1$, we have $\mu(n|S_{k_{n,t}^*,t},\theta_{k_{n,t}^*})\le \tilde{\mu}_{t}^{UCB}(n|S_{k_{n,t}^*},\hat{\theta}_{k_{n,t}^*,t})$. Then for the first term of Eq.\eqref{eq:UCB_D*-D_gap_reg}, we have 
\begin{align}
&\sum_{t\in[T]}\mathbb{E}\left[\sum_{n\in[N]}(\mu(n|S_{k_{n,t}^*,t},\theta_{k_{n,t}^*})-\mu(n|S_{k_{n,t},t},\theta_{k_{n,t}}))Q_n(t)\mathbbm{1}(E_{t}^1\cap E_{n,t}^2)\right]\cr 
&\le \sum_{t\in[T]}\mathbb{E}\left[\sum_{n\in[N]}(\tilde{\mu}_{t}^{UCB}(n|S_{k_{n,t}^*},\hat{\theta}_{k_{n,t}^*,t})-\mu(n|S_{k_{n,t},t},\theta_{k_{n,t}}))Q_n(t)\mathbbm{1}(E_{t}^1\cap E_{n,t}^2)\right]\cr 
&\le \sum_{t\in[T]}\mathbb{E}\left[\sum_{n\in[N]}(\tilde{\mu}_{t}^{UCB}(n|S_{k_{n,t}},\hat{\theta}_{k_{n,t},t})-\mu(n|S_{k_{n,t},t},\theta_{k_{n,t}}))Q_n(t)\mathbbm{1}(E_{t}^1\cap E_{n,t}^2)\right]\cr 
&\le \sum_{t\in[T]}\mathbb{E}\left[\sum_{n\in[N]}2\beta_{t}\|x_n\|_{V_{k_{n,t},t}^{-1}}Q_n(t)\mathbbm{1}(E_{t}^1\cap E_{n,t}^2)\right]\cr
&\le \sum_{t\in[T]}\sum_{n\in[N]}2\beta_t \zeta\mathbb{E}\left[Q_n(t)\right],\label{eq:UCB_mu*-mu_E_1E_2_reg}
\end{align}

where the second inequality comes from the UCB strategy of the algorithm, the last second inequality is obtained from Lemma~\ref{lem:ucb_mu_tild_mu_gap}, and the last inequality is obtained from $E_{n,t}^2$.

Now we provide a bound for the second term of Eq.\eqref{eq:UCB_D*-D_gap_reg}. From Eq. \eqref{eq:UCB_mu*-mu_E_1}, we have
\begin{align}
&\sum_{t\in[T]}\sum_{n\in[N]}\mathbb{E}\left[(\mu(n|S_{k_{n,t}^*,t},\theta_{k_{n,t}^*})-\mu(n|S_{k_{n,t},t},\theta_{k_{n,t}}))Q_n(t)\mathbbm{1}(({E_{t}^1})^c)\right]\cr&=\Ocal\left( \sum_{t\in[T]}\sum_{n\in[N]}t(1/t^2)\right)=\Ocal(N\log(T)).    \label{eq:UCB_mu*-mu_E_1_reg}
\end{align}

 Let $\mathcal{T}_n$ be the set of time steps $t\in[T]$ such that $Q_n(t)\neq 0$ and let $e_T=\sum_{n\in[N]}\sum_{t\in \Tcal_n}\mathbbm{1}(({E_{n,t}^2})^c)$ and $h=\lceil1/\zeta\beta_T\rceil$. Then if $t\le h$, we have $Q_n(t)\le t\le h$. Otherwise, we have
\begin{align*}
    Q_n(t)\le \sum_{s=t-h+1}^t (1/h)(Q_n(s)+(t-s))\le(1/h)\sum_{s=1}^t Q_n(s)+ (1/h)h^2=(1/h)\sum_{s=1}^t Q_n(s)+ h.
\end{align*}
Then, by following the steps in Eq.\eqref{eq:UCB_mu*-mu_gap_E2_step1}, we have 
\begin{align}
&\sum_{t\in[T]}\sum_{n\in[N]}\mathbb{E}[(\mu(n|S_{k_{n,t}^*,t},\theta_{k_{n,t}^*})-\mu(n|S_{k_{n,t},t},\theta_{k_{n,t}}))Q_n(t)\mathbbm{1}(({E_{n,t}^2})^c)]\cr&\le \sum_{t\in[T]}\sum_{n\in[N]}\zeta \beta_T\mathbb{E}\left[ Q_n(t)\right]+2 \EE[e_T]/\zeta \beta_T. \label{eq:UCB_mu*-mu_gap_E2_step1_reg}    
\end{align}

Now we provide a bound for $\EE[e_T]$. Define $N_{n,k}(t)=\sum_{s=1}^{t-1}\mathbbm{1}(n\in S_{k,s})$ and $\tilde{V}_{k,t}=\frac{\kappa}{2}\sum_{s=1}^{t-1}\sum_{n\in S_{k,s}}x_nx_n^\top$. Then, we have 
\begin{align}
    e_T&=\sum_{n\in[N]}\sum_{t\in \Tcal_n}\mathbbm{1}((E_{n,t}^2)^c)\cr &\le \sum_{n\in[N]}\sum_{t\in \Tcal_n}\mathbbm{1}(\|x_n\|_{V_{k_{n,t},t}^{-1} }\ge \zeta)\cr &\le \sum_{n\in[N]}\sum_{t\in \Tcal_n}\mathbbm{1}(\|x_n\|_{\tilde{V}_{k_{n,t},t}^{-1} }\ge \zeta)\cr &\le \sum_{n\in[N]}\sum_{t\in \Tcal_n}\mathbbm{1}(1/N_{n,k_{n,t}}(t)\ge(\kappa/2) \zeta^2)\cr 
    &\le \sum_{n\in[N]}\sum_{t\in \Tcal_n}\mathbbm{1}(  N_{n,k_{n,t}}(t)\le 2/\kappa\zeta^2)\cr
    &\le \sum_{n\in[N]}\sum_{k\in[K]}\sum_{t\in \Tcal_n}\mathbbm{1}(  N_{n,k}(t)\le 2/\kappa\zeta^2\text{ and } k_{n,t}=k)\cr
    &\le2 NK /\kappa\zeta^2. \label{eq:UCB_mu*-mu_gap_E2_step1-2_reg} 
\end{align}

Then from Eqs.\eqref{eq:UCB_mu*-mu_gap_E2_step1_reg}, \eqref{eq:UCB_mu*-mu_gap_E2_step1-2_reg}, we have 
\begin{align}
&\sum_{t\in[T]}\sum_{n\in[N]}\mathbb{E}[(\mu(n|S_{k_{n,t}^*,t},\theta_{k_{n,t}^*})-\mu(n|S_{k_{n,t},t},\theta_{k_{n,t}}))Q_n(t)\mathbbm{1}(({E_{n,t}^2})^c)]\cr &\le \sum_{t\in[T]}\sum_{n\in[N]}\zeta \beta_T\mathbb{E}\left[ Q_n(t)\right]+\Ocal( NK/\kappa\zeta^3 \beta_T). \label{eq:UCB_mu*-mu_E2_bd_reg}
\end{align}

By putting the results of  Eqs.~\eqref{eq:UCB_D*-D_gap_reg}, \eqref{eq:UCB_mu*-mu_E_1E_2_reg}, \eqref{eq:UCB_mu*-mu_E_1_reg}, \eqref{eq:UCB_mu*-mu_E2_bd_reg}, and Theorem~\ref{thm:Q_ucb}, by setting $\zeta=(\epsilon NK/\min\{N,K\}T\kappa\beta_T^2)^{1/4}$, for large enough $T$, we can obtain

\begin{align}
    \Rcal^\pi(T)&= \Ocal\left(\zeta\beta_T\sum_{t\in[T]}\sum_{n\in[N]}\mathbb{E}[Q_n(t)]+\frac{NK}{\kappa\zeta^3\beta_T}+N\log(T)\right)\cr &= \Ocal\left(\frac{\zeta\beta_T\min\{N,K\}T}{\epsilon}+\frac{NK}{\kappa\zeta^3\beta_T}+N\log(T)\right)\cr &= \Ocal\left(\frac{\beta_T^{1/2}T^{3/4}({NK\min\{N,K\}^3})^{1/4}}{\kappa^{1/4}\epsilon^{3/4}}+N\log(T)\right)\cr &= \tilde{\Ocal}\left(\left(\frac{dNK\min\{N,K\}^3}{\kappa^2\epsilon^{3}}\right)^{1/4}T^{3/4}\right)
\end{align}

\subsection{Proof of Theorm~\ref{thm:Q_TS}}\label{app:thm_Q_TS}

We first define the set of queues $\textbf{Q}(t)=[Q_n(t):n\in[N]]$ and a Lyapunov function as $\Vcal(\textbf{Q}(t))=\sum_{n\in[N]}Q_n(t)^2$. For simplicity, we use $D_n(t)$ for $D_n(t|S_{k_{n,t},t})$, $D_n^*(t)$ for $D_n(t|S_{k_{n,t}^*,t})$, and $\tilde{\mu}_{t}^{TS}(n|S_{k,t})$ for $\tilde{\mu}_{t}^{TS}(n|S_{k,t},\{\tilde{\theta}_{k,t}^{(i)}\}_{i\in[M]})$ when there is no confusion. Then we analyze the Lyapunov drift as follows.
\begin{align}
    &\sum_{t\in[T]}\EE\left[\Vcal(\textbf{Q}(t+1))-\Vcal(\textbf{Q}(t))\right]\cr &=\sum_{t\in[T]}\sum_{n\in[N]}\EE\left[{(Q_n(t)+A_n(t)-D_n(t))^+}^2-Q_n(t)^2\right]\cr &=\sum_{t\in[T]}\sum_{n\in[N]}\EE\left[(Q_n(t)+A_n(t)-D_n^*(t))^2-Q_n(t)^2\right]\cr &\quad+\sum_{t\in[T]}\sum_{n\in[N]}\EE\left[{(Q_n(t)+A_n(t)-D_n(t))^+}^2-(Q_n(t)+A_n(t)-D_n^*(t))^2\right]\cr & \le 7\min\{N,K\}T-\sum_{t\in[T]}\sum_{n\in[N]}2\epsilon\EE[Q_n(t)] +2\sum_{t\in[T]}\sum_{n\in[N]}\EE\left[(D_n^*(t)-D_n(t))Q_n(t)\right],\label{eq:Ly_drift_TS_step1}
\end{align}
where the last inequality can be obtained by following Eqs.\eqref{eq:V(t+1)*-V(t)} and \eqref{eq:V-V*_bd}. 

Define event $E_{t}^1=\{\|\hat{\theta}_{k,t}-\theta_k\|_{V_{k,t}}\le \beta_{t} $ for all $ k\in[K] \}$ where $\beta_t=C_1\sqrt{\lambda+\frac{d}{\kappa}\log(1+tLK/d\lambda)}$, which holds with high probability as $\mathbb{P}(E_{t}^1)\ge 1-1/t^2$ from Lemma~\ref{lem:ucb_confi}. We let $\gamma_t=\beta_t\sqrt{d\log(MKt)}$ and  filtration $\mathcal{F}_{t-1}$ be the $\sigma$-algebra generated by random variables before time $t$.
\begin{lemma}[Lemma 10 in \citet{oh2019thompson}]\label{lem:h-xtheta_gap}
   For any given $\mathcal{F}_{t-1}$, with probability at least $1-\Ocal(1/t^2)$, for all $n\in[N]$ and $k\in[K]$, we have \[|h_{n,k,t}^{TS}-x_n^\top \hat{\theta}_{k,t}|\le \gamma_t\|x_n\|_{V_{k,t}^{-1}}.\]
\end{lemma}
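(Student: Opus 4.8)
The plan is to argue conditionally on $\mathcal{F}_{t-1}$, under which $\hat\theta_{k,t}$ and $V_{k,t}$ are deterministic, so that the only randomness is carried by the samples $\tilde\theta_{k,t}^{(i)}\sim\mathcal{N}(\hat\theta_{k,t},\beta_t^2 V_{k,t}^{-1})$, drawn independently over $i\in[M]$ and $k\in[K]$. The first step is the observation that, for each fixed triple $(n,k,i)$, the scalar $Z_{n,k}^{(i)}:=x_n^\top\tilde\theta_{k,t}^{(i)}-x_n^\top\hat\theta_{k,t}$ is a linear image of a Gaussian vector, hence $Z_{n,k}^{(i)}\sim\mathcal{N}\big(0,\,\beta_t^2\,x_n^\top V_{k,t}^{-1}x_n\big)=\mathcal{N}\big(0,\,\beta_t^2\|x_n\|_{V_{k,t}^{-1}}^2\big)$; and that, by the definition of the TS index, $h_{n,k,t}^{TS}-x_n^\top\hat\theta_{k,t}=\max_{i\in[M]}Z_{n,k}^{(i)}$.

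Next I would control the deviation of this maximum on both sides. Writing $\sigma_{n,k}:=\beta_t\|x_n\|_{V_{k,t}^{-1}}$, so that $\gamma_t\|x_n\|_{V_{k,t}^{-1}}=\sigma_{n,k}\sqrt{d\log(MKt)}$, the upper tail follows from a union bound over $i\in[M]$ together with the standard Gaussian inequality $\Pr[Z>u\sigma]\le e^{-u^2/2}$ applied with $u=\sqrt{d\log(MKt)}$, which gives $\Pr\big[\max_{i}Z_{n,k}^{(i)}>\sigma_{n,k}\sqrt{d\log(MKt)}\big]\le M\,(MKt)^{-d/2}$. For the lower tail, the event $\max_i Z_{n,k}^{(i)}<-\sigma_{n,k}\sqrt{d\log(MKt)}$ forces all $M$ independent samples to be that small, so its probability is at most $(MKt)^{-Md/2}$, which is dominated by the upper-tail bound. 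Thus for each pair $(n,k)$ the event $|h_{n,k,t}^{TS}-x_n^\top\hat\theta_{k,t}|\le\gamma_t\|x_n\|_{V_{k,t}^{-1}}$ fails with probability at most $2M(MKt)^{-d/2}$.

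The final step is a union bound over the $NK$ pairs $(n,k)$, turning the total failure probability into $2NKM(MKt)^{-d/2}$, and checking that this is $\Ocal(1/t^2)$; here one uses $N\le KL$ and the fact that $M=\lceil 1-\log(KL)/\log(1-1/(4\sqrt{e\pi}))\rceil$ is only logarithmic in $KL$, so the $\log(MKt)$ inside $\gamma_t$ suffices to absorb the $n$, $k$, and $i$ union bounds. I expect this last piece of bookkeeping — verifying that the particular choice of $\gamma_t$ dominates the triple union bound uniformly over $n,k$ and over the relevant range of $t$ — to be the only delicate point; the conditioning argument and the Gaussian maximal concentration in the earlier steps are routine.
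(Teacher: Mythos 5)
The paper does not actually prove this lemma itself --- it is imported verbatim as Lemma~10 of \citet{oh2019thompson} --- so the comparison is against the standard argument behind that result. Your reduction to $h_{n,k,t}^{TS}-x_n^\top\hat{\theta}_{k,t}=\max_{i\in[M]}Z_{n,k}^{(i)}$ with $Z_{n,k}^{(i)}\sim\mathcal{N}(0,\beta_t^2\|x_n\|_{V_{k,t}^{-1}}^2)$ is correct, as are the individual tail bounds. The gap is exactly at the point you flagged as ``bookkeeping'': with the one-dimensional tail $\Pr[Z>u\sigma]\le e^{-u^2/2}$ and $u=\sqrt{d\log(MKt)}$, the failure probability per pair is of order $M(MKt)^{-d/2}$, and after the union bound over $(n,k)$ you get $O\bigl(NKM(MKt)^{-d/2}\bigr)$. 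This is $O(1/t^2)$ only when $d\ge 4$; for $d=2$ (the value used in the paper's experiments) it decays only like $1/t$, and the downstream analysis genuinely needs the $1/t^2$ rate (it sums $t\cdot\Pr[(E_t^2)^c]$ over $t$ to get $O(\log T)$, which becomes linear in $T$ under a $1/t$ rate). The exponent in your bound depends on $d$, so no amount of adjusting constants inside the logarithm fixes this.

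The reason the $\sqrt{d}$ appears in $\gamma_t$ is not to boost a scalar Gaussian tail but to absorb the norm of a $d$-dimensional standard Gaussian. The standard route (and the one behind the cited lemma) is: conditionally on $\mathcal{F}_{t-1}$, write $\tilde{\theta}_{k,t}^{(i)}-\hat{\theta}_{k,t}=\beta_t V_{k,t}^{-1/2}\xi^{(i)}_k$ with $\xi^{(i)}_k\sim\mathcal{N}(0,I_d)$, apply $\chi^2_d$ concentration to get $\|\xi^{(i)}_k\|_2\le\sqrt{C\,d\log(MKt)}$ with failure probability at most $(MKt)^{-2}$ per sample (an exponent you can set independently of $d$), and then conclude by Cauchy--Schwarz,
\begin{align*}
|x_n^\top(\tilde{\theta}_{k,t}^{(i)}-\hat{\theta}_{k,t})|\le \|x_n\|_{V_{k,t}^{-1}}\,\beta_t\|\xi_k^{(i)}\|_2\le \gamma_t\|x_n\|_{V_{k,t}^{-1}}.
\end{align*}
This controls all directions $x_n$ simultaneously from a single event on the vector $\xi_k^{(i)}$, so the union bound runs only over the $MK$ samples rather than over $(n,k,i)$, and the total failure probability is $O(MK(MKt)^{-2})=O(1/t^2)$ for every $d\ge 1$. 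Your decomposition into the max of scalar Gaussians and the separate treatment of the lower tail are fine and could be kept; the fix is to replace the scalar tail bound by the vector-norm bound above.
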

\begin{lemma}\label{lem:mu_TS-mu_bd}
     With probability at least $1-\Ocal(1/t^2)$, for all $n\in[N]$ and $k\in[K]$, we have
\[\tilde{\mu}_{t}^{TS}(n|S_{k,t})-\mu(n|S_{k,t},\hat{\theta}_{k,t})\le \gamma_t\|x_n\|_{V_{k,t}^{-1}}.\]
\end{lemma}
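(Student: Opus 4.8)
The plan is to derive Lemma~\ref{lem:mu_TS-mu_bd} directly from Lemma~\ref{lem:h-xtheta_gap}, exploiting the monotonicity and Lipschitz structure of the MNL choice probability, in the same spirit as the bound on the UCB index in Lemma~\ref{lem:ucb_mu_tild_mu_gap}. First I would condition on the event of Lemma~\ref{lem:h-xtheta_gap}: since that lemma holds conditionally on every realization of $\mathcal{F}_{t-1}$ with probability at least $1-\Ocal(1/t^2)$ and already quantifies over all $n\in[N]$ and $k\in[K]$ simultaneously, the same probability guarantee holds unconditionally and no further union bound is needed. On this event, writing $u_{m,k,t}:=x_m^\top\hat{\theta}_{k,t}$, we have the two-sided control $|h_{m,k,t}^{TS}-u_{m,k,t}|\le \gamma_t\|x_m\|_{V_{k,t}^{-1}}$ for every $m\in S_{k,t}$; in particular $h_{n,k,t}^{TS}\le u_{n,k,t}+\gamma_t\|x_n\|_{V_{k,t}^{-1}}$ and $h_{m,k,t}^{TS}\ge u_{m,k,t}-\gamma_t\|x_m\|_{V_{k,t}^{-1}}$ for $m\neq n$.

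Next I would use that the softmax map $g_n(v)=\exp(v_n)/(1+\sum_{m\in S_{k,t}}\exp(v_m))$ is non-decreasing in $v_n$ and non-increasing in $v_m$ for $m\neq n$. Hence $\tilde{\mu}_{t}^{TS}(n|S_{k,t})=g_n\big((h_{m,k,t}^{TS})_m\big)$ can be upper bounded by $g_n$ evaluated at the ``most favorable'' perturbation of $(u_{m,k,t})_m$ within the confidence box (own coordinate raised by $\gamma_t\|x_n\|_{V_{k,t}^{-1}}$, competitor coordinates lowered by $\gamma_t\|x_m\|_{V_{k,t}^{-1}}$). It then remains to bound the gap between $g_n$ at this perturbed point and $g_n$ at $(u_{m,k,t})_m$, which equals $\mu(n|S_{k,t},\hat{\theta}_{k,t})$; if instead $\tilde{\mu}_{t}^{TS}(n|S_{k,t})<\mu(n|S_{k,t},\hat{\theta}_{k,t})$ the claimed inequality is trivial, so only this regime matters.

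For that last step I would move from $(u_{m,k,t})_m$ to the perturbed point along the connecting segment and apply the fundamental theorem of calculus: the relevant partials are $\partial g_n/\partial v_n = g_n(1-g_n)\le \tfrac14$ and $\partial g_n/\partial v_m=-g_ng_m$ for $m\neq n$, with $\sum_{m\neq n}g_ng_m\le g_n(1-g_n)\le \tfrac14$ since $\sum_{m\in S_{k,t}}g_m<1$. Since each coordinate perturbation has magnitude at most $\gamma_t\|x_m\|_{V_{k,t}^{-1}}$, collapsing the competitor terms through $\sum_{m\in S_{k,t}}g_m<1$ leaves a bound of order $\gamma_t\|x_n\|_{V_{k,t}^{-1}}$, which is exactly the claim.

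The main obstacle, and the reason the proof of Lemma~\ref{lem:ucb_mu_tild_mu_gap} does not transfer verbatim, is that the TS index $h_{m,k,t}^{TS}=\max_{i\in[M]}x_m^\top\tilde{\theta}_{k,t}^{(i)}$ can fall on \emph{either} side of $u_{m,k,t}$, whereas the UCB index always satisfies $h_{m,k,t}^{UCB}\ge u_{m,k,t}$ — the one-sidedness is precisely what made increasing the numerator also increase the denominator there, allowing a one-line mean-value estimate. Here one must instead track the sign of each coordinate perturbation, use the monotonicity of $g_n$ in each argument separately, and handle the degenerate regime $\tilde{\mu}_{t}^{TS}(n|S_{k,t})\le \mu(n|S_{k,t},\hat{\theta}_{k,t})$ trivially, before collapsing the competitor terms; this bookkeeping is the only genuinely delicate part.
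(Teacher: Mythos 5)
Your route is the same one the paper takes---invoke Lemma~\ref{lem:h-xtheta_gap} (and you are right that no extra union bound is needed) and then control the softmax difference by a mean-value/derivative argument---but your more careful multivariate version does not actually deliver the stated bound, and the gap sits precisely in your final ``collapsing'' sentence. Moving from $(u_{m,k,t})_m$ to the most favorable perturbation, the own-coordinate change contributes at most $g_n(1-g_n)\,\gamma_t\|x_n\|_{V_{k,t}^{-1}}$, but each competitor coordinate $m\neq n$ is lowered by up to $\gamma_t\|x_m\|_{V_{k,t}^{-1}}$ and hence contributes $g_n g_m\,\gamma_t\|x_m\|_{V_{k,t}^{-1}}$ to the increase of $g_n$. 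Summing these and using $\sum_{m}g_m<1$ collapses the competitor terms to $\gamma_t\max_{m\in S_{k,t}}\|x_m\|_{V_{k,t}^{-1}}$, not to $\gamma_t\|x_n\|_{V_{k,t}^{-1}}$; nothing in the argument lets you trade $\|x_m\|_{V_{k,t}^{-1}}$ for $\|x_n\|_{V_{k,t}^{-1}}$, so what your outline proves is a bound of the form $\gamma_t\bigl(\|x_n\|_{V_{k,t}^{-1}}+\max_{m\in S_{k,t}}\|x_m\|_{V_{k,t}^{-1}}\bigr)$, which is weaker than the lemma as stated.

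For what it is worth, the subtlety you flag is real: the paper's own proof reuses the one-coordinate mean-value computation of Lemma~\ref{lem:ucb_mu_tild_mu_gap}, writing the difference as a single partial derivative in $v_n$ evaluated at an intermediate point times $(h^{TS}_{n,k,t}-u_{n,k,t})$, bounding that partial by one, and never writing down the cross-terms in $m\neq n$. For the UCB index that omission is harmless because $h^{UCB}_{m,k,t}\ge u_{m,k,t}$ makes every cross-term nonpositive; for the TS index it is not, exactly as you observe. So your proposal correctly diagnoses the weak point of the paper's argument but does not repair it: to land on $\gamma_t\|x_n\|_{V_{k,t}^{-1}}$ you would have to justify ignoring the $m\neq n$ perturbations, and otherwise you should state the lemma with $\max_{m\in S_{k,t}}\|x_m\|_{V_{k,t}^{-1}}$ on the right-hand side (which is what the downstream events such as $E^3_{n,t}$ would then need to control over the whole assortment, as is already done for the analogous event $E^2_{n,t}$ in the UCB analysis).
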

\begin{proof}
     From Lemma~\ref{lem:h-xtheta_gap}, with probability at least $1-\Ocal(1/t^2)$, we have $|h_{n,k,t}^{TS}-x_n^\top \hat{\theta}_{k,t}|\le \gamma_t\|x_n\|_{V_{k,t}^{-1}}.$
      Let $u_{n,k,t}=x_n^\top\hat{\theta}_{k,t}$. Then by the mean value theorem, there exists $\bar{u}_{n,k,t}=(1-c)h^{TS}_{n,k,t}+cu_{n,k,t}$ for some $c\in(0,1)$ satisfying, for any $n\in S_{k,t}$ and $k\in[K]$,
         \begin{align*}
    \tilde{\mu}^{TS}_t(n|S_{k,t})-\mu(n|S_k,\hat{\theta}_{k,t})&=\frac{\exp(h_{n,k,t}^{TS})}{1+\sum_{m\in S_{k,t}}\exp(h_{m,k,t}^{TS})}-\frac{\exp(u_{n,k,t})}{1+\sum_{m\in S_{k,t}}\exp(u_{m,k,t})}\cr &= \nabla_{v_n}\left(\frac{\sum_{m\in S_{k,t}}\exp(v_m)}{1+\sum_{m\in S_{k,t}}\exp(v_m)}\right)\Big|_{v_n=\bar{u}_{n,k,t}}(h^{TS}_{n,k,t}-u_{n,k,t})
             \cr &\le  \frac{\exp(\bar{u}_{n,k,t})|h^{TS}_{n,k,t}-u_{n,k,t}|}{1+\sum_{n\in S_{k,t}}\exp(\bar{u}_{n,k,t})}  
             \cr 
             &\le |h^{TS}_{n,k,t}-u_{n,k,t}|\cr &\le  \gamma_t \|x_n\|_{V_{k,t}^{-1}}.\end{align*}\end{proof}
Then we define $E_t^2=\{\tilde{\mu}_{t}^{TS}(n|S_{k,t})-\mu(n|S_{k,t},\hat{\theta}_{k,t})\le \gamma_t\|x_n\|_{V_{k,t}^{-1}}; \forall n\in[N], \forall k\in[K]\}$, which holds with probability at least $1-\Ocal(1/t^2)$ from Lemma~\ref{lem:mu_TS-mu_bd}.
We also
define $E_{n,t}^3=\{\|x_n\|_{V_{k,t}^{-1}}\le \epsilon /C_2(\gamma_t+\beta_t); \forall k\in[K]\}$ for some constant $C_2\ge 17\sqrt{e\pi}$.
Then, for bounding Eq.\eqref{eq:Ly_drift_TS_step1}, we have 
\begin{align}
&\sum_{t\in[T]}\sum_{n\in[N]}\mathbb{E}[(D_n^*(t)-D_n(t)Q_n(t)]\cr&=\sum_{t\in[T]}\sum_{n\in[N]}\mathbb{E}[\mathbb{E}[(D_n^*(t)-D_n(t)Q_n(t)|Q_n(t)]]\cr &=\mathbb{E}[\sum_{t\in[T]}\sum_{n\in[N]}\mu(n|S_{k_{n,t}^*,t},\theta_{k_{n,t}^*})-\mu(n|S_{k_{n,t},t},\theta_{k_{n,t}}))Q_n(t)]\cr &\le \sum_{t\in[T]}\sum_{n\in[N]}\mathbb{E}[(\mu(n|S_{k_{n,t}^*,t},\theta_{k_{n,t}^*})-\mu(n|S_{k_{n,t},t},\theta_{k_{n,t}}))Q_n(t)\mathbbm{1}(E_{t}^1\cap E_{t}^2\cap E_{n,t}^3)]\cr &\quad+\sum_{t\in[T]}\sum_{n\in[N]}\mathbb{E}[(\mu(n|S_{k_{n,t}^*,t},\theta_{k_{n,t}^*})-\mu(n|S_{k_{n,t},t},\theta_{k_{n,t}}))Q_n(t)(\mathbbm{1}(({E_{t}^1})^c)+\mathbbm{1}(({E_{t}^2})^c)+\mathbbm{1}(({E_{n,t}^3})^c))].\cr\label{eq:D*-D_gap_TS_step1}
\end{align}

We provide a bound for the first term of Eq.\eqref{eq:D*-D_gap_TS_step1}. We first have 
\begin{align}
    &\sum_{t\in[T]}\sum_{n\in[N]}\mathbb{E}[(\mu(n|S_{k_{n,t}^*,t},\theta_{k_{n,t}^*})-\mu(n|S_{k_{n,t},t},\theta_{k_{n,t}}))Q_n(t)\mathbbm{1}(E_t^1\cap E_t^2\cap E_{n,t}^3)]\cr &\le
\sum_{t\in[T]}\sum_{n\in[N]}\mathbb{E}[(\mu(n|S_{k_{n,t}^*,t},\theta_{k_{n,t}^*})-\tilde{\mu}_t^{TS}(n|S_{k_{n,t},t})\cr &\qquad\qquad\qquad+\tilde{\mu}_t^{TS}(n|S_{k_{n,t},t})-\mu(n|S_{k_{n,t},t},\theta_{k_{n,t}}))Q_n(t)\mathbbm{1}(E_t^1\cap E_t^2\cap E_{n,t}^3)].\cr \label{eq:D^*-D_gap_TS_Step2}
\end{align}

Recall that  $\tilde{\mu}_{t}^{UCB}(n|S_k,\hat{\theta}_{k,t})=\frac{\exp(h_{n,k,t}^{UCB})}{1+\sum_{m\in S_k}\exp(h_{m,k,t}^{UCB})}$. Then we can show that since $x/(1+x)$ is a non-decreasing function for $x>-1$ and $ x_n^\top \hat{\theta}_{k_{n,t},t}\le x_n^\top \hat{\theta}_{k_{n,t},t}+\beta_{t}\|x_n\|_{V_{k_{n,t},t}^{-1}}$ under $E_t^1$, with Lemma~\ref{lem:ucb_mu_tild_mu_gap}, we have \[\mu(n|S_{k_{n,t},t},\hat{\theta}_{k_{n,t},t})-\mu(n|S_{k_{n,t},t},\theta_{k_{n,t}})\le \tilde{\mu}_t^{UCB}(n|S_{k_{n,t},t},\hat{\theta}_{k_{n,t},t})-\mu(n|S_{k_{n,t},t},\theta_{k_{n,t}})\le 2\beta_t\|x_n\|_{V_{k_{n,t},t}^{-1}}.\] 
 From the above inequality,  the last two terms in Eq.\eqref{eq:D^*-D_gap_TS_Step2} are bounded as
\begin{align*}
&\sum_{t\in[T]}\sum_{n\in[N]}\mathbb{E}\left[\tilde{\mu}_t^{TS}(n|S_{k_{n,t},t})-\mu(n|S_{k_{n,t},t},\theta_{k_{n,t}}))Q_n(t)\mathbbm{1}(E_{t}^1\cap E_t^2\cap E_{n,t}^3)\right]\cr
&\le \sum_{t\in[T]}\sum_{n\in[N]}\mathbb{E}\big[(\tilde{\mu}_t^{TS}(n|S_{k_{n,t},t})-\mu(n|S_{k_{n,t},t},\hat{\theta}_{k_{n,t},t})\cr &\qquad\qquad\qquad\qquad+\mu(n|S_{k_{n,t},t},\hat{\theta}_{k_{n,t},t})-\mu(n|S_{k_{n,t},t},\theta_{k_{n,t}}))Q_n(t)\mathbbm{1}(E_{t}^1\cap E_t^2\cap E_{n,t}^3)\big]\cr 
&\le \sum_{t\in[T]}\sum_{n\in[N]}\mathbb{E}[(\tilde{\mu}_t^{TS}(n|S_{k_{n,t},t})-\mu(n|S_{k_{n,t},t},\hat{\theta}_{k_{n,t},t})+2\beta_t\|x_n\|_{V_{k_{n,t},t}^{-1}})Q_n(t)\mathbbm{1}(E_{t}^1\cap E_t^2\cap E_{n,t}^3)]\cr 
&= \sum_{t\in[T]}\sum_{k\in[K]}\mathbb{E}\left[\sum_{n\in S_{k,t}}(\tilde{\mu}_t^{TS}(n|S_{k,t})-\mu(n|S_{k,t},\hat{\theta}_{k,t})+2\beta_t\|x_n\|_{V_{k,t}^{-1}})Q_n(t)\mathbbm{1}(E_{t}^1\cap E_t^2\cap E_{n,t}^3)\right]
\end{align*}
\begin{align}
&\le  \sum_{t\in[T]}\sum_{k\in[K]}\mathbb{E}\left[\sum_{n\in S_{k,t}}(\gamma_t+2\beta_t)\|x_n\|_{V_{k,t}^{-1}}Q_n(t)\mathbbm{1}(E_{t}^1\cap E_t^2\cap E_{n,t}^3)\right]\cr 
&\le  \sum_{t\in[T]}\sum_{k\in[K]}\mathbb{E}\left[\sum_{n\in S_{k,t}}\frac{(\gamma_t+2\beta_t)\epsilon}{C_2(\gamma_t+\beta_t)}Q_n(t)\mathbbm{1}(E_{t}^1\cap E_t^2\cap E_{n,t}^3)\right]\cr 
&\le  \sum_{t\in[T]}\sum_{k\in[K]}\mathbb{E}\left[\sum_{n\in S_{k,t}}\frac{(\gamma_t+2\beta_t)\epsilon}{C_2(\gamma_t+\beta_t)}Q_n(t)\mathbbm{1}(E_{t}^1)\right]\cr &=  \sum_{t\in[T]}\sum_{k\in[K]}\mathbb{E}\left[\mathbb{E}\left[\sum_{n\in S_{k,t}}\frac{(\gamma_t+2\beta_t)\epsilon}{C_2(\gamma_t+\beta_t)}Q_n(t)|E_{t}^1,\Fcal_{t-1}\right]\mathbb{P}(E_{t}^1|\Fcal_{t-1})\right].
\label{eq:D^*-D_gap_TS_Step3}
\end{align}

 Now we provide a bound for the first two terms in Eq.\eqref{eq:D^*-D_gap_TS_Step2}.

 We define  sets \[\tilde{\Theta}_{t}=\left\{\{{\theta}_k^{(i)}\}_{i\in[M],k\in[K]}:\left|\max_{i\in[M]}x_n^\top {\theta}_k^{(i)}-x_n^\top\hat{\theta}_{k,t}\right|\le \gamma_t\|x_{n}\|_{V_{k,t}^{-1}};\: \forall n\in[N], \forall k\in[K]\right\} \text{ and }\] 
\begin{align*}
\tilde{\Theta}_{t}^{opt}=\bigg\{\{{\theta}_k^{(i)}\}_{i\in[M],k\in[K]}: \sum_{k\in[K]}\sum_{n\in S_{k,t}}&\tilde{\mu}_t^{TS}(n|S_{k,t},\{\theta_k^{(i)}\}_{i\in [M]})Q_n(t) \cr & > \sum_{k\in[K]}\sum_{n\in S_{k,t}^*}\mu(n|S_{k,t}^*,\theta_k)Q_n(t)\bigg\}\cap \tilde{\Theta}_{t}.    
\end{align*}
 Then we define event $E_t(\tilde{\theta})=\{\{\tilde{\theta}_{k,t}^{(i)}\}_{i\in[M],k\in[K]}\in \tilde{\Theta}_{t}^{opt}\}$.
Recall $h_{n,k,t}^{TS}=\max_{i\in[M]} x_n^\top\tilde{\theta}_{k,t}^{(i)}$. Then we have 
\begin{align*}
&\mathbb{E}\left[\mathbb{E}\left[\sum_{k\in[K]}\left(\sum_{n\in S_{k,t}^*}\mu(n|S_{k,t}^*,\theta_k)Q_n(t)-\sum_{n\in S_{k,t}}\tilde{\mu}_{t}^{TS}(n|S_{k,t},\{\tilde{\theta}_{k,t}^{(i)}\}_{i\in[M]})Q_n(t)\right)\mathbbm{1}(E_{t}^1\cap E_{t}^2\cap E_{n,t}^3)|\mathcal{F}_{t-1}\right]\right]\cr
     &\le \mathbb{E}\left[\mathbb{E}\left[\left(\sum_{k\in[K]}\sum_{n\in S_{k,t}^*}\mu(n|S_{k,t}^*,\theta_k)Q_n(t)\right.\right.\right.\cr &\left.\left.\left.-\inf_{\{\theta^{(i)}_l\}_{i\in[M],l\in[K]}\in\tilde{\Theta}_t}\max_{\{S_k\}_{k\in[K]}\in \mathcal{M}(\mathcal{N}_t)}\sum_{k\in[K]}\sum_{n\in S_{k}}\tilde{\mu}_{t}^{TS}(n|S_{k}, \{\theta_k^{(i)}\}_{i\in[M]})Q_n(t)\right)\mathbbm{1}(E_{t}^1\cap E_{t}^2\cap E_{n,t}^3)|\mathcal{F}_{t-1}\right]\right]\cr
          &= \mathbb{E}\left[\mathbb{E}\left[\left(\sum_{k\in[K]}\sum_{n\in S_{k,t}^*}\mu(n|S_{k,t}^*,\theta_k)Q_n(t)\right.\right.\right.\cr & \left.\left.\left.-\inf_{\{\theta^{(i)}_l\}_{i\in[M],l\in[K]}\in\tilde{\Theta}_t}\max_{\{S_k\}_{k\in[K]}\in \mathcal{M}(\mathcal{N}_t)}\sum_{k\in[K]}\sum_{n\in S_{k}}\tilde{\mu}_{t}^{TS}(n|S_{k}, \{\theta_k^{(i)}\}_{i\in[M]})Q_n(t)\right)\mathbbm{1}(E_{t}^1\cap E_{t}^2\cap E_{n,t}^3)|\mathcal{F}_{t-1},   E_t(\tilde{\theta})\right]\right]\cr
&\le\mathbb{E}\left[\mathbb{E}\left[\left(\sum_{k\in[K]}\sum_{n\in S_{k,t}}\tilde{\mu}_t^{TS}(n|S_{k,t},\{\tilde{\theta}_{k,t}^{(i)}\}_{i\in[M]})\right.\right.\right.\cr &\qquad\left.\left.\left.-\inf_{\{\theta^{(i)}_l\}_{i\in[M],l\in[K]}\in\tilde{\Theta}_t}\sum_{k\in[K]}\sum_{n\in S_{k,t}}\tilde{\mu}_{t}^{TS}(n|S_{k,t}, \{\theta_k^{(i)}\}_{i\in[M]})\right)Q_n(t)\mathbbm{1}(E_{t}^1\cap E_{t}^2\cap E_{n,t}^3)|\mathcal{F}_{t-1}, E_t(\tilde{\theta})\right]\right]
     \end{align*}
     \begin{align}&=\mathbb{E}\left[\mathbb{E}\left[\sup_{\{\theta^{(i)}_l\}_{i\in[M],l\in[K]}\in\tilde{\Theta}_t}\sum_{k\in[K]}\sum_{n\in S_{k,t}}\bigg(\tilde{\mu}_t^{TS}(n|S_{k,t},\{\tilde{\theta}_{k,t}^{(i)}\}_{i\in[M]})\right.\right.\cr &\qquad\qquad\qquad\qquad\qquad\qquad\left.\left.-\tilde{\mu}_{t}^{TS}(n|S_{k,t}, \{\theta_k^{(i)}\}_{i\in[M]})\bigg)Q_n(t)\mathbbm{1}(E_{t}^1\cap E_{t}^2\cap E_{n,t}^3)|\mathcal{F}_{t-1},E_t(\tilde{\theta})\right]\right]\cr  
     &\le \mathbb{E}\left[\mathbb{E}\left[\sup_{\{\theta_l^{(i)}\}_{i\in[M],l\in[K]}\in\tilde{\Theta}_{t}}\sum_{k\in[K]}\sum_{n\in S_{k,t}}\left|h_{n,k,t}^{TS}-\max_{j\in[M]}x_n^\top \theta_k^{(j)}\right|Q_n(t)\mathbbm{1}(E_{t}^1\cap E_{t}^2\cap E_{n,t}^3)\big|\mathcal{F}_{t-1},E_t(\tilde{\theta})\right]\right]\cr
          &= \mathbb{E}\left[\mathbb{E}\left[\sup_{\{\theta_l^{(i)}\}_{i\in[M],l\in[K]}\in\tilde{\Theta}_{t}}\sum_{k\in[K]}\sum_{n\in S_{k,t}}\left|h_{n,k,t}^{TS}-x_n^\top \hat{\theta}_{k,t}+x_n^\top \hat{\theta}_{k,t}-\max_{j\in[M]}x_n^\top \theta_k^{(j)}\right|\right.\right.\cr &\qquad\qquad\qquad\qquad\qquad\qquad\qquad\qquad\qquad\left.\left.\times Q_n(t)\mathbbm{1}(E_{t}^1\cap E_{t}^2\cap E_{n,t}^3)\big|\mathcal{F}_{t-1},E_t(\tilde{\theta})\right]\right]\cr 
     &\le 2\gamma_t\EE\left[\sum_{k\in[K]}\sum_{n\in S_{k,t}}\mathbb{E}\left[\|x_n\|_{V_{k,t}^{-1}}Q_n(t)\mathbbm{1}(E_{t}^1\cap E_{t}^2\cap E_{n,t}^3)\big|\mathcal{F}_{t-1},E_t(\tilde{\theta})\right]\right]\cr  
     &\le 2\gamma_t\EE\left[\sum_{k\in[K]}\sum_{n\in S_{k,t}}\mathbb{E}\left[\frac{\epsilon}{C_2(\gamma_t+\beta_t)} 
Q_n(t)\mathbbm{1}(E_{t}^1\cap E_{t}^2\cap E_{n,t}^3)\big|\mathcal{F}_{t-1},E_t(\tilde{\theta})\right]\right]
\cr  &\le 2\gamma_t\EE\left[\sum_{k\in[K]}\sum_{n\in S_{k,t}}\mathbb{E}\left[\frac{\epsilon}{C_2(\gamma_t+\beta_t)} 
Q_n(t)\mathbbm{1}(E_{t}^1)\big|\mathcal{F}_{t-1},E_t(\tilde{\theta})\right]\right]\cr 
&= \frac{2\gamma_t\epsilon}{C_2(\gamma_t+\beta_t)} \EE\left[\sum_{k\in[K]}\sum_{n\in S_{k,t}}\mathbb{E}\left[Q_n(t)\big|\mathcal{F}_{t-1},E_t(\tilde{\theta}),E_{t}^1\right]\times\mathbb{P}(E_{t}^1|E_t(\tilde{\theta}),\Fcal_{t-1})\right]\cr 
     &= \frac{2\gamma_t\epsilon}{C_2(\gamma_t+\beta_t)}\EE\left[\sum_{k\in[K]}\sum_{n\in S_{k,t}}\mathbb{E}\left[Q_n(t)\big|\mathcal{F}_{t-1},E_t(\tilde{\theta}),E_{t}^1\right]\mathbb{P}(E_{t}^1|\Fcal_{t-1})\right],\label{eq:D^*-D_gap_TS_Step4}
\end{align}
where the first inequality is obtained by the event $E_t^2$, the second inequality is obtained from $E_t(\tilde{\theta})$, the third inequality can be easily obtained by following some of the proof steps in Lemma~\ref{lem:ucb_mu_tild_mu_gap},  the third last inequality is obtained from the definition of   $\tilde{\Theta}_{t}$ and event $E_t(\tilde{\theta})$, and the last equality comes from independence between $E_t^1$ and $E_t(\tilde{\theta})$ given $\Fcal_{t-1}$.

We provide a lemma below for further analysis.
\begin{lemma}\label{lem:sum_muQ>sum_muQ} For all $t\in[T]$, we have
    \[\PP\left(\sum_{k\in[K]}\sum_{n\in S_{k,t}}\tilde{\mu}_t^{TS}(n|S_{k,t})Q_n(t) > \sum_{k\in[K]}\sum_{n\in S_{k,t}^*}\mu(n|S_{k,t}^*,\theta_k)Q_n(t)|\mathcal{F}_{t-1},E_t^1\right)\ge 1/4\sqrt{e\pi}.\]
\end{lemma}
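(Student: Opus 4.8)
The plan is to prove the bound in three moves: a reduction using the algorithm's greedy choice, a Gaussian anti-concentration argument showing that the sampled parameters are ``optimistic'' with probability $\tfrac{1}{4\sqrt{e\pi}}$, and a monotonicity step transferring optimism of the utilities to optimism of the MaxWeight objective. First I would invoke that $\{S_{k,t}\}_{k\in[K]}$ is a maximizer of $\sum_{k}\sum_{n\in S_k}\tilde\mu_t^{TS}(n|S_k,\{\tilde\theta_{k,t}^{(i)}\}_{i\in[M]})Q_n(t)$ over $\Mcal(\Ncal_t)$, and that $\{S_{k,t}^*\}_{k\in[K]}\in\Mcal(\Ncal_t)$ is a feasible competitor; it therefore suffices to prove
\[
\PP\!\left(\textstyle\sum_{k}\sum_{n\in S_{k,t}^*}\tilde\mu_t^{TS}(n|S_{k,t}^*)Q_n(t)>\sum_{k}\sum_{n\in S_{k,t}^*}\mu(n|S_{k,t}^*,\theta_k)Q_n(t)\,\Big|\,\mathcal{F}_{t-1},E_t^1\right)\ge\tfrac{1}{4\sqrt{e\pi}}.
\]

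Second, writing $x_n^\top\tilde\theta_{k,t}^{(i)}=x_n^\top\hat\theta_{k,t}+\beta_t\|x_n\|_{V_{k,t}^{-1}}\xi_{n,k}^{(i)}$ with $\xi_{n,k}^{(i)}\sim\mathcal N(0,1)$ (independent across $i$ for each $k$), we have $h_{n,k,t}^{TS}=x_n^\top\hat\theta_{k,t}+\beta_t\|x_n\|_{V_{k,t}^{-1}}\max_{i\in[M]}\xi_{n,k}^{(i)}$. Conditioning on $\mathcal{F}_{t-1}$ and $E_t^1$ (both $\mathcal{F}_{t-1}$-measurable), $x_n^\top\hat\theta_{k,t}+\beta_t\|x_n\|_{V_{k,t}^{-1}}\ge x_n^\top\theta_k$, so on the event $G:=\{\max_{i}\xi_{n,k}^{(i)}\ge1\text{ for every }k\text{ and every }n\in S_{k,t}^*\}$ one has $h_{n,k,t}^{TS}\ge x_n^\top\theta_k$ for all those pairs. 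For a fixed pair, $\PP(\max_i\xi_{n,k}^{(i)}<1)=\Phi(1)^M=(1-p)^M$ with $p=\PP(Z\ge1)\ge\tfrac{1}{4\sqrt{e\pi}}=:q$, and the prescribed $M=\lceil1-\log(KL)/\log(1-q)\rceil$ gives $(1-p)^M\le(1-q)^M\le(1-q)/(KL)$. Since $\{S_{k,t}^*\}$ partitions $\Ncal_t$ there are at most $|\Ncal_t|\le N\le KL$ relevant pairs, so a union bound yields $\PP(G^c\mid\mathcal{F}_{t-1},E_t^1)\le N(1-q)/(KL)\le 1-q$, i.e. $\PP(G\mid\mathcal{F}_{t-1},E_t^1)\ge q$.

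Third, I would transfer this to the objective arm by arm. Fix $k$, set $R:=\sum_{n\in S_{k,t}^*}Q_n(t)\mu(n|S_{k,t}^*,\theta_k)$, and observe that $\sum_{n\in S_{k,t}^*}Q_n(t)\frac{e^{h_{n,k,t}^{TS}}}{1+\sum_{m\in S_{k,t}^*}e^{h_{m,k,t}^{TS}}}\ge R$ is equivalent to $\sum_{n\in S_{k,t}^*}(Q_n(t)-R)\,e^{h_{n,k,t}^{TS}}\ge R$, whereas the same expression with $e^{x_n^\top\theta_k}$ in place of $e^{h_{n,k,t}^{TS}}$ equals $R$ exactly. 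Hence, once one knows $Q_n(t)\ge R$ for every $n\in S_{k,t}^*$ — the classical structural property of revenue-optimal MNL assortments, that every offered item has per-unit reward at least the assortment's revenue rate — the inequality follows from $e^{h_{n,k,t}^{TS}}\ge e^{x_n^\top\theta_k}$ on $G$. Summing over $k$ gives the displayed inequality on $G$, and it is strict almost surely because the Gaussian densities are atomless; combining with the second step finishes the proof.

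The step I expect to be the main obstacle is establishing, in this disjoint-matching setting, that each $S_{k,t}^*$ in the joint MaxWeight optimum satisfies $Q_n(t)\ge\sum_{m\in S_{k,t}^*}Q_m(t)\mu(m|S_{k,t}^*,\theta_k)$ for every $n\in S_{k,t}^*$: the usual proof deletes a low-reward item from the optimal assortment, but the constraint $\bigcup_k S_k=\Ncal_t$ forbids simply dropping an agent, so one must re-route it to another arm and control the resulting change, which is where the matching structure (disjointness, $|S_k|\le L$, $N\le KL$) has to be used carefully. A minor secondary point is to check that the anti-concentration bound $\PP(Z\ge1)\ge\tfrac{1}{4\sqrt{e\pi}}$ together with the union bound indeed reproduces exactly the stated constant for the prescribed $M$.
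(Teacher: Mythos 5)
Your three-step plan is the paper's own proof: (i) reduce to the comparator partition $\{S_{k,t}^*\}_{k\in[K]}$ using that $\{S_{k,t}\}_{k\in[K]}$ maximizes the TS-indexed MaxWeight objective over $\Mcal(\Ncal_t)$ and $\{S_{k,t}^*\}_{k\in[K]}$ is feasible; (ii) lower-bound the probability that $h_{n,k,t}^{TS}=\max_{i\in[M]}x_n^\top\tilde\theta_{k,t}^{(i)}$ exceeds $x_n^\top\theta_k$ simultaneously for all $n\in S_{k,t}^*$ and all $k$ by $1-KL\,\PP(Z\le 1)^M\ge \tfrac{1}{4\sqrt{e\pi}}$, using $E_t^1$ to bound the standardized gap by $1$, the anti-concentration bound $\PP(Z\le 1)\le 1-\tfrac{1}{4\sqrt{e\pi}}$, and the prescribed $M$; and (iii) pass from optimism of the utilities to optimism of the weighted MNL objective. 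Your steps (i) and (ii), including the constant bookkeeping, match the paper line for line.

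The divergence is in step (iii), and your instinct about where the difficulty sits is correct. The paper simply asserts the event inclusion
\begin{align*}
\PP\Bigl(\textstyle\sum_{k}\sum_{n\in S_{k,t}^*}\tilde\mu_t^{TS}(n|S_{k,t}^*)Q_n(t)>\sum_{k}\sum_{n\in S_{k,t}^*}\mu(n|S_{k,t}^*,\theta_k)Q_n(t)\Bigr)
\ \ge\ \PP\bigl(h_{n,k,t}^{TS}>x_n^\top\theta_k\ \forall n\in S_{k,t}^*,\forall k\bigr)
\end{align*}
without justification, i.e., it takes for granted exactly the monotonicity you are trying to establish. As you note, coordinatewise domination of the utilities does not in general imply domination of $\sum_{n\in S}Q_n e^{v_n}/(1+\sum_{m\in S}e^{v_m})$ (take $|S|=2$ with one large and one small $Q_n$ and raise the small item's utility much more than the large one's); your rewriting shows the implication does hold whenever $Q_n\ge R_k$ for every $n\in S_{k,t}^*$, but that revenue-ordered property is proved in the assortment literature by deleting a low-weight item, which the covering constraint $\bigcup_k S_k=\Ncal_t$ forbids, and a short queue forced to share an arm with a long one violates it. So your proposal is, by your own admission, incomplete at precisely the step the paper leaves implicit; neither your argument nor the paper's closes it. Relative to the paper you have added a correct partial justification (the reduction to the structural property plus the atomlessness remark for strictness) and an accurate diagnosis of what remains unproved, but you have not produced a complete proof, and you should not expect to find the missing step supplied in the paper.
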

\begin{proof} 
Given $\Fcal_{t-1}$, $x_n^\top \tilde{\theta}_{k,t}^{(i)}$ follows Gaussian distribution with mean $x_n^\top \hat{\theta}_{k,t}$ and standard deviation $\beta_t\|x_n\|_{V_{k,t}^{-1}}$. Then we have
 \begin{align*}\PP\left(\max_{i\in[M]}x_n^\top\tilde{\theta}_{k,t}^{(i)}>x_n^\top \theta_k|\mathcal{F}_{t-1},E_t^1\right)&=1-\PP\left(x_n^\top \theta_{k,t}^{(i)}\le x_n^\top \theta_k; \forall i\in[M]|\mathcal{F}_{t-1},E_t^1\right)\cr &
     =1-\PP\left(Z_i\le \frac{x_n^\top \theta_k-x_n^\top \hat{\theta}_{k,t}}{\beta_t\|x_n\|_{V_{k,t}^{-1}}};\forall i\in[M]|\mathcal{F}_{t-1},E_t^1\right)\cr &\ge 1-\PP\left(Z\le 1\right)^M,
 \end{align*}
where $Z_i$ and $Z$ are standard normal random variables.
Then we can show that 
\begin{align*}
&\PP\left(\sum_{k\in[K]}\sum_{n\in S_{k,t}}\tilde{\mu}_t^{TS}(n|S_{k,t})Q_n(t) > \sum_{k\in[K]}\sum_{n\in S_{k,t}^*}\mu(n|S_{k,t}^*,\theta_k)Q_n(t)|\mathcal{F}_{t-1},E_t^1\right)\cr &\ge\PP\left(\sum_{k\in[K]}\sum_{n\in S_{k,t}^*}\tilde{\mu}_t^{TS}(n|S_{k,t}^*)Q_n(t) > \sum_{k\in[K]}\sum_{n\in S_{k,t}^*}\mu(n|S_{k,t}^*,\theta_k)Q_n(t)|\mathcal{F}_{t-1},E_t^1\right)\cr &\ge \PP\left(\max_{i\in[M]}x_n^\top\tilde{\theta}_{k,t}^{(i)}>x_n^\top \theta_k; \forall n\in S_{k,t}^*, \forall k\in[K]|\mathcal{F}_{t-1},E_t^1\right)\cr 
&\ge 1-LK\PP(Z\le 1)^M
\cr &\ge 1-LK(1-1/4\sqrt{e\pi})^M \cr &
\ge \frac{1}{4\sqrt{e\pi}},
\end{align*}
where the second last inequality is obtained from $\mathbb{P}(Z\le 1)\le 1-1/4\sqrt{e\pi}$ using the anti-concentration of standard normal distribution, and the last inequality comes from $M=\lceil 1-\frac{\log KL}{\log(1-1/4\sqrt{e\pi})}\rceil$.
\end{proof}

 From Lemmas~\ref{lem:h-xtheta_gap} and \ref{lem:sum_muQ>sum_muQ}, for $t\ge t_0$ for some constant $t_0>0$, we have
\begin{align*}
    &\PP(E_t(\tilde{\theta})|\Fcal_{t-1},E_{t}^1)
\cr&=\PP\left(\sum_{k\in[K]}\sum_{n\in S_{k,t}}\tilde{\mu}_{t}^{TS}(n|S_{k,t})>\sum_{k\in[K]}\sum_{n\in S_{k,t}^*}\mu(n|S_{k,t}^*,\theta_k) \text{ and } \{\tilde{\theta}_{k,t}^{(i)}\}_{i\in[M],k\in[K]}\in \tilde{\Theta}_t|\Fcal_{t-1},E_{t}^1\right)
    \cr &=  \PP\left(\sum_{k\in[K]}\sum_{n\in S_{k,t}}\tilde{\mu}_{t}^{TS}(n|S_{k,t})Q_n(t)>\sum_{k\in[K]}\sum_{n\in S_{k,t}^*}\mu(n|S_{k,t}^*,\theta_k)Q_n(t)|\Fcal_{t-1},E_{t}^1\right) \cr &\qquad - \PP(\{\tilde{\theta}_{k,t}^{(i)}\}_{i\in[M],k\in[K]}\notin \tilde{\Theta}_t|\Fcal_{t-1},E_{t}^1)
    \cr &\ge 1/4\sqrt{e\pi}-\Ocal(1/t^2)\cr &\ge 1/8\sqrt{e\pi}.
\end{align*}
For simplicity of the proof, we ignore the time steps before (constant) $t_0$, which does not affect our final result. Hence, we have 
\begin{align}
    &\mathbb{E}[\sum_{k\in[K]}\sum_{n\in S_{k,t}}Q_n(t)|\mathcal{F}_{t-1},E_{t}^1]\cr &\ge \mathbb{E}[\sum_{k\in[K]}\sum_{n\in S_{k,t}}Q_n(t)|\mathcal{F}_{t-1},E_{t}^1,E_t(\tilde{\theta})]\PP(E_t(\tilde{\theta})|\mathcal{F}_{t-1},E_{t}^1)\cr &\ge 
    \mathbb{E}[\sum_{k\in[K]}\sum_{n\in S_{k,t}}Q_n(t)|\mathcal{F}_{t-1},E_{t}^1,E_t(\tilde{\theta})] 1/8\sqrt{e\pi}.\label{eq:sum_x_norm_lower}
\end{align}
 With \eqref{eq:D^*-D_gap_TS_Step4} and \eqref{eq:sum_x_norm_lower}, we have 
\begin{align}
&\mathbb{E}[(\sum_{k\in[K]}\sum_{n\in S_{k,t}^*}\mu(n|S_{k,t}^*,\theta_k)-\sum_{k\in[K]}\sum_{n\in S_{k,t}}\tilde{\mu}_{t}^{TS}(n|S_{k,t}))Q_n(t)\mathbbm{1}(E_{t}^1\cap E_{t}^2\cap E_{n,t}^3)|\mathcal{F}_{t-1}]\cr 
&\le \frac{2\gamma_t\epsilon}{C_2(\gamma_t+\beta_t)}\mathbb{E}\left[\sum_{k\in[K]}\sum_{n\in S_{k,t}}Q_n(t)\big|\mathcal{F}_{t-1},E_t(\tilde{\theta}),E_{t}^1\right]\PP(E_{t}^1|\Fcal_{t-1})\cr 
&\le \frac{16\sqrt{e\pi}\gamma_t\epsilon}{C_2(\gamma_t+\beta_t)}\mathbb{E}[\sum_{k\in[K]}\sum_{n\in S_{k,t}}Q_n(t)|\mathcal{F}_{t-1},E_{t}^1]\PP(E_{t}^1|\Fcal_{t-1}). \label{eq:D^*-D_gap_TS_Step5}
\end{align}
Then for the first term of Eq.\eqref{eq:D*-D_gap_TS_step1}, from Eqs.\eqref{eq:D^*-D_gap_TS_Step2}, \eqref{eq:D^*-D_gap_TS_Step3}, \eqref{eq:D^*-D_gap_TS_Step5}, for some $C_3>0$  we have
\begin{align}
&\sum_{t\in[T]}\sum_{n\in[N]}\mathbb{E}[(\mu(n|S_{k_{n,t}^*,t},\theta_{k_{n,t}^*})-\mu(n|S_{k_{n,t},t},\theta_{k_{n,t}}))Q_n(t)\mathbbm{1}(E_{t}^1\cap E_{t}^2\cap E_{n,t}^3)]
\cr &\le \sum_{t\in[T]}\sum_{n\in[N]}\mathbb{E}[(\mu(n|S_{k_{n,t}^*,t},\theta_{k_{n,t}^*})-\tilde{\mu}_t^{TS}(n|S_{k_{n,t},t})\cr &\qquad\qquad\qquad+\tilde{\mu}_t^{TS}(n|S_{k_{n,t},t})-\mu(n|S_{k_{n,t},t},\theta_{k_{n,t}}))Q_n(t)\mathbbm{1}(E_{t}^1\cap E_{t}^2\cap E_{n,t}^3)]
\cr &\le \sum_{t\in[T]}\sum_{k\in[K]}\mathbb{E}[\mathbb{E}[\sum_{n\in S_{k,t}}\frac{(17\sqrt{e\pi}\gamma_t+2\beta_t)\epsilon}{C_2(\gamma_t+\beta_t)}Q_n(t)|\Fcal_{t-1},E_{t}^1]\mathbb{P}(E_{t}^1|\Fcal_{t-1})]
\cr &\le \sum_{t\in[T]}\sum_{n\in[N]}C_3\epsilon \EE[\EE[Q_n(t)|E_{t}^1,\Fcal_{t-1}]\mathbb{P}(E_{t}^1|\Fcal_{t-1})]\cr
&= \sum_{t\in[T]}\sum_{n\in[N]}C_3\epsilon \EE[Q_n(t)\mathbbm{1}(E_{t}^1)]\cr
&\le \sum_{t\in[T]}\sum_{n\in[N]}C_3\epsilon \EE[Q_n(t)],\label{eq:D^*-D_gap_TS_Step6}
\end{align}
where the second last inequality comes from $C_2\ge 17\sqrt{e\pi}$.



For the second term of Eq.\eqref{eq:D*-D_gap_TS_step1}, we first have
\begin{align}
&\sum_{t\in[T]}\sum_{n\in[N]}\mathbb{E}\left[(\mu(n|S_{k_{n,t}^*,t},\theta_{k_{n,t}^*})-\mu(n|S_{k_{n,t},t},\theta_{k_{n,t}}))Q_n(t)\mathbbm{1}(({E_{t}^1})^c)\right]\cr
&\le\sum_{t\in[T]}\sum_{n\in[N]}\mathbb{E}\left[Q_n(t)\mathbbm{1}(({E_{t}^1})^c)\right]\cr
&\le\sum_{t\in[T]}\sum_{n\in[N]}t\mathbb{P}(({E_{t}^1})^c)\cr&=\Ocal\left( \sum_{t\in[T]}\sum_{n\in[N]}t(1/t^2)\right)=\Ocal(N\log(T)),  \label{eq:D*-D_gap_TS_step7}  
\end{align}
and
\begin{align}
&\sum_{t\in[T]}\sum_{n\in[N]}\mathbb{E}\left[(\mu(n|S_{k_{n,t}^*,t},\theta_{k_{n,t}^*})-\mu(n|S_{k_{n,t},t},\theta_{k_{n,t}}))Q_n(t)\mathbbm{1}(({E_{t}^2})^c)\right]\cr
&\le\sum_{t\in[T]}\sum_{n\in[N]}\mathbb{E}\left[Q_n(t)\mathbbm{1}(({E_{t}^2})^c)\right]\cr
&\le\sum_{t\in[T]}\sum_{n\in[N]}t\mathbb{P}(({E_{t}^2})^c)\cr&=\Ocal\left( \sum_{t\in[T]}\sum_{n\in[N]}t(1/t^2)\right)=\Ocal(N\log(T)).  \label{eq:D*-D_gap_TS_step7-2}  
\end{align}

 Let $\mathcal{T}_n$ be the set of time steps $t\in[T]$ such that $Q_n(t)\neq 0$ and let $e_T=\sum_{n\in[N]}\sum_{t\in \Tcal_n}\mathbbm{1}(({E_{n,t}^3})^c)$ and $h=\lceil C_4e_T/\epsilon\rceil$ for some constant $C_4>0$. Then if $t\le h$, we have $Q_n(t)\le t\le h$. Otherwise, we have
\begin{align*}
    Q_n(t)\le \sum_{s=t-h+1}^t (1/h)(Q_n(s)+(t-s))\le(1/h)\sum_{s=1}^t Q_n(s)+ (1/h)h^2=(1/h)\sum_{s=1}^t Q_n(s)+ h.
\end{align*}
From the above,  we have
\begin{align}
&\sum_{t\in[T]}\sum_{n\in[N]}\mathbb{E}[(\mu(n|S_{k_{n,t}^*,t},\theta_{k_{n,t}^*})-\mu(n|S_{k_{n,t},t},\theta_{k_{n,t}}))Q_n(t)\mathbbm{1}(({E_{n,t}^3})^c)]\cr
&=\sum_{n\in[N]}\sum_{t\in \Tcal_n}\mathbb{E}[(\mu(n|S_{k_{n,t}^*,t},\theta_{k_{n,t}^*})-\mu(n|S_{k_{n,t},t},\theta_{k_{n,t}}))Q_n(t)\mathbbm{1}(({E_{n,t}^3})^c)]\cr &\le\sum_{n\in[N]}\sum_{t\in \Tcal_n}\mathbb{E}[Q_n(t)\mathbbm{1}(({E_{n,t}^3})^c)]\cr &\le \sum_{n\in[N]}\sum_{t\in \Tcal_n}\mathbb{E}[ ((\epsilon/C_4e_T)\sum_{s=1}^TQ_n(s)+2C_4 e_T/\epsilon)\mathbbm{1}(({E_{n,t}^3})^c)]\cr &\le \mathbb{E}\left[ (\epsilon/C_4)\sum_{n\in[N]}\sum_{s=1}^TQ_n(s)\right]+\EE\left[2C_4 e_T^2/\epsilon\right] \cr &\le \sum_{t\in[T]}\sum_{n\in[N]}(\epsilon/C_4)\mathbb{E}\left[ Q_n(t)\right]+2C_4 \EE[e_T^2]/\epsilon. \label{eq:TS_mu*-mu_gap_E2_step1}    
\end{align}
Now we provide a bound for $\EE[e_T^2]$. Define $N_{n,k}(t)=\sum_{s=1}^{t-1}\mathbbm{1}(n\in S_{k,s})$ and $\tilde{V}_{k,t}=(\kappa/2)\sum_{s=1}^{t-1}\sum_{n\in S_{k,s}}x_nx_n^\top$. Then, we have 
\begin{align*}
    e_T&=\sum_{n\in[N]}\sum_{t\in \Tcal_n}\mathbbm{1}((E_{n,t}^2)^c)\cr &\le \sum_{n\in[N]}\sum_{t\in \Tcal_n}\mathbbm{1}(\|x_n\|_{V_{k_{n,t},t}^{-1} }\ge \epsilon /C_2(\gamma_t+\beta_t))
    \cr &\le \sum_{n\in[N]}\sum_{t\in \Tcal_n}\mathbbm{1}(\|x_n\|_{\tilde{V}_{k_{n,t},t}^{-1} }\ge \epsilon /C_2(\gamma_t+\beta_t))
    \cr &\le \sum_{n\in[N]}\sum_{t\in \Tcal_n}\mathbbm{1}(1/N_{n,k_{n,t}}(t)\ge (\kappa/2)(\epsilon /C_2(\gamma_t+\beta_t))^2)\cr 
    &\le \sum_{n\in[N]}\sum_{t\in \Tcal_n}\mathbbm{1}(  N_{n,k_{n,t}}(t)\le (2/\kappa)(C_2(\gamma_t+\beta_t)/\epsilon )^2)\cr
    &\le \sum_{n\in[N]}\sum_{k\in[K]}\sum_{t\in \Tcal_n}\mathbbm{1}(  N_{n,k}(t)\le (2/\kappa)(C_2(\gamma_t+\beta_t)/\epsilon )^2\text{ and } k_{n,t}=k)\cr
    &\le NK (2/\kappa)(C_2(\gamma_T+\beta_T)/\epsilon )^2.
\end{align*}

From the above we have
$\mathbb{E}[e_T^2]\le  N^2K^2(4/\kappa^2) (C_2(\gamma_T+\beta_T)/\epsilon )^4$. Then from Eq.\eqref{eq:TS_mu*-mu_gap_E2_step1} we have 
\begin{align}
&\sum_{t\in[T]}\sum_{n\in[N]}\mathbb{E}[(\mu(n|S_{k_{n,t}^*,t},\theta_{k_{n,t}^*})-\mu(n|S_{k_{n,t},t},\theta_{k_{n,t}}))Q_n(t)\mathbbm{1}(({E_{n,t}^2})^c)]\cr &\le \sum_{t\in[T]}\sum_{n\in[N]}(\epsilon/C_4)\mathbb{E}\left[ Q_n(t)\right]+\Ocal( N^2K^2(\beta_T+\gamma_T)^4/\kappa^2 \epsilon^5).\label{eq:sum_mu*-mu_E2c}
\end{align}
By putting the results of  Eqs.~\eqref{eq:Ly_drift_TS_step1}, \eqref{eq:D*-D_gap_TS_step1}, \eqref{eq:D^*-D_gap_TS_Step2}, \eqref{eq:D^*-D_gap_TS_Step3}, \eqref{eq:D^*-D_gap_TS_Step6}, \eqref{eq:D*-D_gap_TS_step7}, \eqref{eq:D*-D_gap_TS_step7-2}, \eqref{eq:sum_mu*-mu_E2c} altogether, we can obtain
\begin{align*}
&\mathbb{E}\left[\sum_{t\in[T]}\Vcal(\textbf{Q}(t+1))-\Vcal(\textbf{Q}(t))\right]\cr&\le  7\min\{N,K\}T -2\epsilon \sum_{t\in[T]}\sum_{n\in[N]}\mathbb{E}\left[ Q_n(t)\right]+2\sum_{t\in[T]}\sum_{n\in[N]}\EE\left[(D_n^*(t)-D_n(t))Q_n(t)\right]\cr 
    &\le 7\min\{N,K\}T -2\epsilon\sum_{t\in[T]}\sum_{n\in[N]}\mathbb{E}\left[ Q_n(t)\right]+2C_3\epsilon\sum_{t\in[T]}\sum_{n\in[N]}\mathbb{E}\left[Q_n(t)\right]\cr &\quad+\Ocal(N\log(T))+ (2\epsilon/C_4)\sum_{t\in[T]}\sum_{n\in[N]}\mathbb{E}\left[ Q_n(t)\right]+\Ocal( N^2K^2(\beta_T+\gamma_T)^4/ \epsilon^5)\cr &\le 
    7\min\{N,K\}T +2(C_3+(1/C_4)-1)\epsilon\sum_{t\in[T]}\sum_{n\in[N]}\mathbb{E}\left[ Q_n(t)\right]+\Ocal( N^2K^2(\beta_T+\gamma_T)^4/ \epsilon^5).
\end{align*}
Finally, with positive constants $C_3,C_4>0$ satisfying $C_3+(1/C_4)<1$, from $\Vcal(\textbf{Q}(1))=0$ and $\Vcal(\textbf{Q}(T+1))\ge 0$, by using telescoping for the above inequality, we can conclude the proof by
\begin{align}
\frac{1}{T}\sum_{t\in[T]}\sum_{n\in[N]}\mathbb{E}[Q_n(t)]=\Ocal\left(\frac{\min\{N,K\}}{\epsilon}+\frac{1}{T}\frac{d^4N^2K^2}{\kappa^4\epsilon^6}\text{polylog}(T)\right). \label{eq:thm2_bd}
\end{align}
\subsection{Proof of Theorem~\ref{thm:R_TS}}\label{app:thm_R_TS}

We first provide the proof for the regret bound of $ \tilde{\Ocal}\left(\frac{d^{3/2}}{\kappa}\sqrt{KT}Q_{\max}\right)$. We define event $E_t^1=\{\|\hat{\theta}_{k,t}-\theta_k\|_{V_{k,t}}\le \beta_t ; \forall k\in[K]\}$ which holds at least probability of $1-1/t^2$ from Lemma~\ref{lem:ucb_confi}. We let $\gamma_t=\beta_t\sqrt{d\log(Mt)}$. Then we also define $E_t^2=\{|h_{n,k,t}^{TS}-x_n^\top \hat{\theta}_{k,t}|\le \gamma_t\|x_n\|_{V_{k,t}^{-1}}; \forall n\in[N], \forall k\in[K]\}$, which holds at least probability of $1-1/t^2$ from Lemma~\ref{lem:h-xtheta_gap}.
   
Then we have
\begin{align}
    &\Rcal^\pi(T)=\sum_{t\in[T]}\sum_{n\in[N]}\EE[Q_n(t)(\mu(n|S_{k_{n,t}^*,t},\theta_{k_{n,t}^*})-\mu(n|S_{k_{n,t},t},\theta_{k_{n,t}}))]\cr&=\sum_{t\in[T]}\sum_{n\in[N]}\EE[Q_n(t)(\mu(n|S_{k_{n,t}^*,t},\theta_{k_{n,t}^*})-\tilde{\mu}^{TS}_t(n|S_{k_{n,t},t})+\tilde{\mu}^{TS}_t(n|S_{k_{n,t},t})-\mu(n|S_{k_{n,t},t},\theta_{k_{n,t}}))]\cr 
    &\le \sum_{t\in[T]}\sum_{n\in[N]}\EE[Q_n(t)(\mu(n|S_{k_{n,t}^*,t},\theta_{k_{n,t}^*})-\tilde{\mu}^{TS}_t(n|S_{k_{n,t},t})+\tilde{\mu}^{TS}_t(n|S_{k_{n,t},t})-\mu(n|S_{k_{n,t},t},\theta_{k_{n,t}}))\mathbbm{1}(E_t^1\cap E_t^2)]\cr &\quad+\sum_{t\in[T]}\sum_{n\in[N]}\EE[Q_n(t)(\mu(n|S_{k_{n,t}^*,t},\theta_{k_{n,t}^*})-\tilde{\mu}^{TS}_t(n|S_{k_{n,t},t})+\tilde{\mu}^{TS}_t(n|S_{k_{n,t},t})-\mu(n|S_{k_{n,t},t},\theta_{k_{n,t}}))\mathbbm{1}((E_t^1)^c)]\cr &\quad+\sum_{t\in[T]}\sum_{n\in[N]}\EE[Q_n(t)(\mu(n|S_{k_{n,t}^*,t},\theta_{k_{n,t}^*})-\tilde{\mu}^{TS}_t(n|S_{k_{n,t},t})+\tilde{\mu}^{TS}_t(n|S_{k_{n,t},t})-\mu(n|S_{k_{n,t},t},\theta_{k_{n,t}}))\mathbbm{1}((E_t^2)^c)].\cr \label{eq:R_TS_step1}
\end{align}

\begin{lemma}\label{lem:mu_TS-mu_gap_x_no}
     Under $E_t^2$, for all $k\in[K]$, we have
 \begin{align}
     \left|\sum_{n\in S_{k,t}}(\tilde{\mu}_{t}^{TS}(n|S_{k,t})-\mu(n|S_{k,t},\hat{\theta}_{k,t}))Q_n(t)\right|\le \gamma_t\max_{n\in S_{k,t}}\|x_n\|_{V_{k,t}^{-1}}Q_n(t).
 \end{align}
\end{lemma}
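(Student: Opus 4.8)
The plan is to adapt, almost verbatim, the mean-value-theorem argument used for the UCB counterpart, Lemma~\ref{lem:ucb_mu_tild_mu_gap2}. Write $u_{n,k,t}=x_n^\top\hat{\theta}_{k,t}$ and, for a vector $v=(v_m)_{m\in S_{k,t}}$, set $\psi_n(v)=\exp(v_n)/(1+\sum_{m\in S_{k,t}}\exp(v_m))$, so that $\tilde{\mu}_t^{TS}(n|S_{k,t})=\psi_n(h)$ with $h=(h_{m,k,t}^{TS})_{m\in S_{k,t}}$, and $\mu(n|S_{k,t},\hat{\theta}_{k,t})=\psi_n(u)$. The first step is to record that, on the event $E_t^2$, the perturbation $\delta_n:=h_{n,k,t}^{TS}-u_{n,k,t}$ satisfies $|\delta_n|\le\gamma_t\|x_n\|_{V_{k,t}^{-1}}$ for every $n\in S_{k,t}$; this is the two-sided analogue of the perturbation control $0\le h_{n,k,t}^{UCB}-u_{n,k,t}\le 2\beta_t\|x_n\|_{V_{k,t}^{-1}}$ used in the UCB proof (here with $\hat{\theta}_{k,t}$ as the reference point rather than the true $\theta_k$, and with indefinite sign).

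Next, apply the mean value theorem to the map $v\mapsto\sum_{n\in S_{k,t}}\psi_n(v)Q_n(t)$ along the segment joining $u$ and $h$: there is a point $\bar v$ on this segment, depending only on $\textbf{Q}(t)$, such that
\[\sum_{n\in S_{k,t}}\big(\psi_n(h)-\psi_n(u)\big)Q_n(t)=\sum_{n\in S_{k,t}}\bar\psi_n\,\delta_n\,Q_n(t)-\Big(\sum_{j\in S_{k,t}}\bar\psi_j\,\delta_j\Big)\Big(\sum_{n\in S_{k,t}}\bar\psi_n\,Q_n(t)\Big),\]
where $\bar\psi_n:=\psi_n(\bar v)$; this is just the softmax-Jacobian identity $\partial\psi_n/\partial v_j=\bar\psi_n(\mathbbm{1}(n=j)-\bar\psi_j)$. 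Taking absolute values, one then bounds each of the two pieces using $\sum_{m\in S_{k,t}}\bar\psi_m\le 1$ (the null option $n_0$ carries the remaining mass), the bound $|\delta_n|\le\gamma_t\|x_n\|_{V_{k,t}^{-1}}$, and the fact that $Q_n(t)\ge1$ for every $n\in S_{k,t}$ (queues in $S_{k,t}$ are nonempty and integer valued). This delivers the claimed inequality $\big|\sum_{n\in S_{k,t}}(\tilde{\mu}_t^{TS}(n|S_{k,t})-\mu(n|S_{k,t},\hat{\theta}_{k,t}))Q_n(t)\big|\le\gamma_t\max_{n\in S_{k,t}}\|x_n\|_{V_{k,t}^{-1}}Q_n(t)$ (up to a universal constant, which is immaterial since it is absorbed into the $\tilde{\Ocal}(\cdot)$ of Theorem~\ref{thm:R_TS}).

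The main obstacle is the second, ``coupling'' term $\big(\sum_j\bar\psi_j\delta_j\big)\big(\sum_n\bar\psi_nQ_n(t)\big)$ arising from the off-diagonal part of the softmax Jacobian. In the UCB proof the perturbation $h^{UCB}-u$ is nonnegative, so this cross term has a definite sign and is simply discarded, after which a ``weighted average $\le$ maximum'' step closes the argument. Here $\delta=h^{TS}-x^\top\hat{\theta}$ has indefinite sign, so the cross term must be retained and controlled: the point is that $\big|\sum_j\bar\psi_j\delta_j\big|\le\gamma_t\max_j\|x_j\|_{V_{k,t}^{-1}}$, while $\sum_n\bar\psi_nQ_n(t)$ is comparable to $\max_{n\in S_{k,t}}Q_n(t)$, and combining these with $Q_n(t)\ge1$ on $S_{k,t}$ reabsorbs everything into the single factor $\max_{n\in S_{k,t}}\|x_n\|_{V_{k,t}^{-1}}Q_n(t)$ on the right-hand side. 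Everything else — in particular the queue-dependence of $\bar v$ — is routine, since the whole estimate is deterministic given $\textbf{Q}(t)$.
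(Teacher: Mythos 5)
Your decomposition is the right one, and it is in fact more careful than the paper's own display: the paper applies the mean value theorem and then writes the off-diagonal (cross) contribution of the softmax Jacobian in a form in which it cancels against the diagonal normalization, leaving only the diagonal term $\sum_n\bar\psi_n\delta_nQ_n(t)$ — a simplification that is only legitimate when the perturbation has a definite sign, as in the UCB counterpart (Lemma~\ref{lem:ucb_mu_tild_mu_gap2}), where the nonnegative cross term may simply be dropped because only a one-sided bound is needed. For the TS index the perturbation $\delta_n=h_{n,k,t}^{TS}-x_n^\top\hat{\theta}_{k,t}$ is two-sided and the lemma asserts an absolute-value bound, so the cross term $\bigl(\sum_j\bar\psi_j\delta_j\bigr)\bigl(\sum_n\bar\psi_nQ_n(t)\bigr)$ you isolate genuinely survives. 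The gap in your proposal is the final ``reabsorption'' step: you bound the cross term by $\gamma_t\bigl(\max_j\|x_j\|_{V_{k,t}^{-1}}\bigr)\bigl(\max_nQ_n(t)\bigr)$ and then claim $Q_n(t)\ge1$ converts this into $\gamma_t\max_n\bigl(\|x_n\|_{V_{k,t}^{-1}}Q_n(t)\bigr)$. It does not: a product of maxima can exceed the maximum of products by a factor as large as $\max_nQ_n(t)$, since the two maxima need not be attained at the same agent. Concretely, take $S_{k,t}=\{1,2\}$ with $x_1^\top\hat{\theta}_{k,t}=x_2^\top\hat{\theta}_{k,t}=0$, $h_{1,k,t}^{TS}=\delta$, $h_{2,k,t}^{TS}=0$, $Q_1(t)=1$, $Q_2(t)=M$, $\gamma_t\|x_1\|_{V_{k,t}^{-1}}=\delta$ and $\gamma_t\|x_2\|_{V_{k,t}^{-1}}$ negligible; then the left-hand side of the lemma is of order $\delta M/9$ while $\gamma_t\max_n\bigl(\|x_n\|_{V_{k,t}^{-1}}Q_n(t)\bigr)=\delta$. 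This shows the inequality in the max-of-products form cannot be recovered by patching your last step — it can actually fail on the event $E_t^2$ — which is why the cross term must either be dropped (impossible here) or retained with the weaker product-of-maxima bound.

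What your argument does establish is $\bigl|\sum_{n\in S_{k,t}}(\tilde{\mu}_{t}^{TS}(n|S_{k,t})-\mu(n|S_{k,t},\hat{\theta}_{k,t}))Q_n(t)\bigr|\le 2\gamma_t\bigl(\max_{n\in S_{k,t}}\|x_n\|_{V_{k,t}^{-1}}\bigr)\bigl(\max_{n\in S_{k,t}}Q_n(t)\bigr)$, and that is all that is ever used downstream: in the proof of Theorem~\ref{thm:R_TS} the lemma's conclusion is immediately relaxed to $Q_{\max}\cdot\max_{n\in S_{k,t}}\|x_n\|_{V_{k,t}^{-1}}$ before the elliptical-potential step, and in the stability argument it is paired with the event $E_{n,t}^3$, which controls $\|x_n\|_{V_{k,t}^{-1}}$ uniformly over the assortment. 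So state and prove the product-of-maxima version instead; your decomposition delivers it cleanly, and every application of the lemma goes through unchanged up to an absolute constant.
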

\begin{proof}
        Let $u_{n,k,t}=x_n^\top\hat{\theta}_{k,t}$. Under $E_{t}^2$, for any $k\in[K]$ we have  $ |h_{n,k,t}^{TS}-u_{n,k,t}|\le \gamma_t\|x_n\|_{V_{k,t}^{-1}} $. Then by the mean value theorem, there exists $\bar{u}_{n,k,t}=(1-c)h^{TS}_{n,k,t}+cu_{n,k,t}$ for some $c\in(0,1)$ satisfying,
         \begin{align*}
             &\left|\sum_{n\in S_{k,t}}(\tilde{\mu}^{TS}_t(n|S_k)-\mu(n|S_k,\theta_k))Q_n(t)\right|\cr &=\left|\sum_{n\in S_{k,t}}\left(\frac{\exp(h_{n,k,t}^{TS})}{1+\sum_{m\in S_k}\exp(h_{m,k,t}^{TS})}-\frac{\exp(u_{n,k,t})}{1+\sum_{m\in S_k}\exp(u_{m,k,t})}\right)Q_n(t)\right|\cr &=\left|\sum_{n\in S_{k,t}}\nabla_{v_n}\left(\frac{\sum_{m \in S_{k,t}}\exp(v_m)}{1+\sum_{m\in S_k}\exp(v_m)}\right)\Big|_{v_n=\bar{u}_{n,k,t}}(h^{TS}_{n,k,t}-u_{n,k,t})Q_n(t)\right|\cr 
&= \bigg|\frac{(1+\sum_{n\in S_k}\exp(\bar{u}_{n,k,t}))(\sum_{n\in S_k}\exp(\bar{u}_{n,k,t})(h_{n,k,t}^{TS}-u_{n,k,t})Q_n(t))}{(1+\sum_{n\in S_k}\exp(\bar{u}_{n,k,t}))^2}  
             \cr 
             &\quad -\frac{(\sum_{n\in S_{k,t}}\exp(\bar{u}_{n,k,t}))(\sum_{n\in S_{k,t}}\exp(\bar{u}_{n,k,t})(h_{n,k,t}^{TS}-u_{n,k,t})Q_n(t))}{(1+\sum_{n\in S_k}\exp(\bar{u}_{n,k,t}))^2} \bigg| \cr 
             &\le  \sum_{n\in S_{k,t}}\frac{\exp(\bar{u}_{n,k,t})}{1+\sum_{m\in S_{k,t}}\exp(\bar{u}_{m,k,t})}|h_{n,k,t}^{TS}-u_{n,k,t}|Q_n(t)\cr &\le \max_{n\in S_{k,t}} |h_{n,k,t}^{TS}-u_{n,k,t}|Q_n(t)\le  \gamma_t \max_{n\in S_{k,t}}\|x_n\|_{V_{k,t}^{-1}}Q_n(t).
             \end{align*}
\end{proof}

\begin{lemma}\label{lem:mu_hat-mu_gap_bd} Under $E_t^1$, for all $k\in[K]$
 we have    \[ \sum_{n\in S_{k,t}}(\mu(n|S_{k,t},\hat{\theta}_{k,t})-\mu(n|S_{k,t},\theta_{k}))Q_n(t)\le 2\beta_t\max_{n\in S_{k,t}}\|x_n\|_{V_{k,t}^{-1}}Q_n(t).\]
\end{lemma}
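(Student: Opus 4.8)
The plan is to reproduce the argument of Lemma~\ref{lem:ucb_mu_tild_mu_gap2} almost verbatim, the only difference being that here we compare the MNL choice probabilities at the estimate $\hat\theta_{k,t}$ with those at the truth $\theta_k$, instead of comparing the UCB-inflated probabilities with the truth. First I would record the pointwise utility bound: on $E_t^1$ we have $\|\hat\theta_{k,t}-\theta_k\|_{V_{k,t}}\le\beta_t$, so Cauchy--Schwarz gives $|x_n^\top(\hat\theta_{k,t}-\theta_k)|\le\|x_n\|_{V_{k,t}^{-1}}\|\hat\theta_{k,t}-\theta_k\|_{V_{k,t}}\le\beta_t\|x_n\|_{V_{k,t}^{-1}}$ for every $n\in S_{k,t}$; in particular $x_n^\top\hat\theta_{k,t}-x_n^\top\theta_k\le\beta_t\|x_n\|_{V_{k,t}^{-1}}$, which plays the role of the one-sided inequality $0\le h_{n,k,t}^{UCB}-u_{n,k,t}\le 2\beta_t\|x_n\|_{V_{k,t}^{-1}}$ used in Lemma~\ref{lem:ucb_mu_tild_mu_gap2}.

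Next I would apply the multivariate mean value theorem to the map $v\mapsto\Phi(v):=\sum_{n\in S_{k,t}}\frac{\exp(v_n)}{1+\sum_{m\in S_{k,t}}\exp(v_m)}Q_n(t)$, evaluated between the utility vectors $(x_n^\top\theta_k)_{n}$ and $(x_n^\top\hat\theta_{k,t})_{n}$; the left-hand side of the lemma is exactly $\Phi((x_n^\top\hat\theta_{k,t})_n)-\Phi((x_n^\top\theta_k)_n)$. Differentiating $\Phi$ coordinatewise and collecting terms exactly as in Lemma~\ref{lem:ucb_mu_tild_mu_gap2}, there is an intermediate utility vector $\bar v$ with
\[
\Phi((x_n^\top\hat\theta_{k,t})_n)-\Phi((x_n^\top\theta_k)_n)=\sum_{n\in S_{k,t}}\bar p_n\big(Q_n(t)-\Phi(\bar v)\big)\,x_n^\top(\hat\theta_{k,t}-\theta_k),
\]
where $\bar p_n=\exp(\bar v_n)/(1+\sum_{m}\exp(\bar v_m))\in[0,1]$ and $\sum_n\bar p_n\le 1$. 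Discarding the $-\Phi(\bar v)\sum_n\bar p_n\,x_n^\top(\hat\theta_{k,t}-\theta_k)$ contribution and invoking the one-sided utility bound from the previous step, this is at most $\sum_{n\in S_{k,t}}\bar p_n\,\big(x_n^\top(\hat\theta_{k,t}-\theta_k)\big)\,Q_n(t)\le\max_{n\in S_{k,t}}\big(x_n^\top(\hat\theta_{k,t}-\theta_k)\,Q_n(t)\big)\le\beta_t\max_{n\in S_{k,t}}\|x_n\|_{V_{k,t}^{-1}}Q_n(t)\le 2\beta_t\max_{n\in S_{k,t}}\|x_n\|_{V_{k,t}^{-1}}Q_n(t)$, which is the claimed bound with room to spare.

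The one place that genuinely differs from Lemma~\ref{lem:ucb_mu_tild_mu_gap2}, and which I expect to be the main technical point, is controlling the cross term $\Phi(\bar v)\sum_n\bar p_n\,x_n^\top(\hat\theta_{k,t}-\theta_k)$: in the UCB lemma the analogous perturbation is nonnegative, so this term has a favorable sign and is dropped outright, whereas here $x_n^\top(\hat\theta_{k,t}-\theta_k)$ can be negative. I would handle it using the two-sided magnitude bound $|x_n^\top(\hat\theta_{k,t}-\theta_k)|\le\beta_t\|x_n\|_{V_{k,t}^{-1}}$ together with $\Phi(\bar v)=\sum_n\bar p_nQ_n(t)\le\max_n Q_n(t)$ and $\sum_n\bar p_n\le1$; the extra factor of $2$ on the right-hand side is the slack meant to absorb it. The delicate point is squeezing this term into the stated $\max_{n\in S_{k,t}}\|x_n\|_{V_{k,t}^{-1}}Q_n(t)$ form rather than the cruder $\big(\max_n\|x_n\|_{V_{k,t}^{-1}}\big)\big(\max_nQ_n(t)\big)$; alternatively one can try routing through the dominating optimistic utilities $x_n^\top\theta_k+\beta_t\|x_n\|_{V_{k,t}^{-1}}$ to restore the sign structure of Lemma~\ref{lem:ucb_mu_tild_mu_gap2}. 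Everything else is routine book-keeping identical to the proofs already given.
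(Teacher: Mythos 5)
Your main route has a genuine gap exactly where you suspect it: the cross term. Writing $\delta_n = x_n^\top(\hat\theta_{k,t}-\theta_k)$ and $\bar p_n = \exp(\bar v_n)/(1+\sum_{m}\exp(\bar v_m))$, the mean-value decomposition is $\sum_n\bar p_nQ_n(t)\delta_n-\bigl(\sum_m\bar p_mQ_m(t)\bigr)\bigl(\sum_n\bar p_n\delta_n\bigr)$, and the second term can be of order $Q_{n_1}(t)\cdot\beta_t\|x_{n_2}\|_{V_{k,t}^{-1}}$ with $n_1\neq n_2$: take $|S_{k,t}|=2$, $\bar p_1=\bar p_2=1/3$, $\delta_1=0$, $\delta_2=-\beta_t\|x_2\|_{V_{k,t}^{-1}}$, $Q_1(t)$ large and $Q_2(t)=1$; the cross term is then about $\tfrac{1}{9}Q_1(t)\,\beta_t\|x_2\|_{V_{k,t}^{-1}}$, while $\max_n\|x_n\|_{V_{k,t}^{-1}}Q_n(t)$ stays bounded once $\|x_1\|_{V_{k,t}^{-1}}\le\|x_2\|_{V_{k,t}^{-1}}/Q_1(t)$. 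So no amount of slack in the factor $2$ absorbs it, and the term cannot be squeezed into the max-of-products form; your two-sided argument only delivers the cruder $2\beta_t\bigl(\max_n\|x_n\|_{V_{k,t}^{-1}}\bigr)\bigl(\max_nQ_n(t)\bigr)$. (That weaker inequality is in fact all the downstream regret argument uses, since it immediately relaxes $Q_n(t)$ to $Q_{\max}$, but it is not the displayed statement.)

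The paper does not take your main route; it takes your fallback. It first bounds $\sum_{n}Q_n(t)\mu(n|S_{k,t},\hat\theta_{k,t})\le\sum_nQ_n(t)\tilde\mu_t^{UCB}(n|S_{k,t},\hat\theta_{k,t})$ --- the UCB utilities $x_n^\top\hat\theta_{k,t}+\beta_t\|x_n\|_{V_{k,t}^{-1}}$ dominate $x_n^\top\hat\theta_{k,t}$ and, on $E_t^1$, exceed $x_n^\top\theta_k$ by an amount in $[0,2\beta_t\|x_n\|_{V_{k,t}^{-1}}]$ --- and then reruns the one-sided mean-value argument of Lemma~\ref{lem:ucb_mu_tild_mu_gap2}, where the favorable sign lets the cross term be discarded. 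If you carry out this version, be aware that the domination step is itself a weighted-sum comparison: the monotonicity of $x\mapsto x/(1+x)$ that the paper invokes justifies $\sum_n\mu(n|S,\hat\theta_{k,t})\le\sum_n\tilde\mu_t^{UCB}(n|S,\hat\theta_{k,t})$ only for the unweighted sums, and with the weights $Q_n(t)$ the same two-agent configuration (negligible bonus on the heavy queue, large bonus on the light one) makes that inequality go the wrong way. So the sign-restoration route is the one to write up, but it needs its own justification of the weighted domination --- or, more simply, restate the lemma with $\bigl(\max_n\|x_n\|_{V_{k,t}^{-1}}\bigr)\bigl(\max_nQ_n(t)\bigr)$ on the right, which your direct argument does prove and which is all that every later application requires.
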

\begin{proof}
     Since $x/(1+x)$ is a non-decreasing function for $x>-1$ and $ x_n^\top \hat{\theta}_{k_{n,t},t}\le x_n^\top \hat{\theta}_{k_{n,t},t}+\beta_{t}\|x_n\|_{V_{k_{n,t},t}^{-1}}$ under $E_t^1$, with the definition of $\tilde{\mu}_t^{UCB}(n|S_{k,t})$, we have 
      \[\sum_{n\in S_{k,t}}(\mu(n|S_{k,t},\hat{\theta}_{k,t})-\mu(n|S_{k,t},\theta_{k}))Q_n(t)\le\sum_{n\in S_{k,t}} (\tilde{\mu}_t^{UCB}(n|S_{k,t})-\mu(n|S_{k,t},\theta_{k}))Q_n(t).\]
      Then let $u_{n,k,t}=x_n^\top\theta_k$. Under $E_{t}^1$, for any $n\in[N]$ and $k\in[K]$ we have $ x_n^\top \hat{\theta}_{k,t}-\beta_{t}\|x_n\|_{V_{k,t}^{-1}} \le  x_n^\top \theta_{k}\le x_n^\top \hat{\theta}_{k,t}+\beta_{t}\|x_n\|_{V_{k,t}^{-1}}$, which implies $0\le h_{n,k,t}^{UCB}-u_{n,k,t}\le 2\beta_t\|x_n\|_{V_{k,t}^{-1}} $. Then by the mean value theorem, there exists $\bar{u}_{n,k,t}=(1-c)h^{UCB}_{n,k,t}+cu_{n,k,t}$ for some $c\in(0,1)$ satisfying,
         \begin{align*}
             &\sum_{n\in S_{k,t}}(\tilde{\mu}^{UCB}_t(n|S_{k,t},\hat{\theta}_{k,t})-\mu(n|S_{k,t},\theta_k))Q_n(t)\cr&=\sum_{n\in S_k}\left(\frac{\exp(h_{n,k,t}^{UCB})}{1+\sum_{m\in S_{k,t}}\exp(h_{m,k,t}^{UCB})}-\frac{\exp(u_{n,k,t})}{1+\sum_{m\in S_{k,t}}\exp(u_{m,k,t})}\right)Q_n(t)\cr &=\sum_{n\in S_{k,t}}\nabla_{v_n}\left(\frac{\exp(v_n)}{1+\sum_{m\in S_{k,t}}\exp(v_m)}\right)\Big|_{v_n=\bar{u}_{n,k,t}}(h^{UCB}_{n,k,t}-u_{n,k,t})Q_n(t)\cr 
&= \frac{(1+\sum_{n\in S_k}\exp(\bar{u}_{n,k,t}))(\sum_{n\in S_k}\exp(\bar{u}_{n,k,t})(h_{n,k,t}^{UCB}-u_{n,k,t})Q_n(t))}{(1+\sum_{n\in S_k}\exp(\bar{u}_{n,k,t}))^2}  
             \cr 
             &\quad -\frac{(\sum_{n\in S_{k,t}}\exp(\bar{u}_{n,k,t}))(\sum_{n\in S_{k,t}}\exp(\bar{u}_{n,k,t})(h_{n,k,t}^{UCB}-u_{n,k,t})Q_n(t))}{(1+\sum_{n\in S_{k,t}}\exp(\bar{u}_{n,k,t}))^2}  \cr 
             &\le  \sum_{n\in S_{k,t}}\frac{\exp(\bar{u}_{n,k,t})}{1+\sum_{m\in S_{k,t}}\exp(\bar{u}_{m,k,t})}(h_{n,k,t}^{UCB}-u_{n,k,t})Q_n(t)\cr &\le \max_{n\in S_k} (h_{n,k,t}^{UCB}-u_{n,k,t})Q_n(t)\cr
            &\le  2\beta_t \max_{n\in S_{k,t}}\|x_n\|_{V_{k,t}^{-1}}Q_n(t).
             \end{align*}
\end{proof}

Then we first focus on the first term in the above. From Lemmas~\ref{lem:mu_TS-mu_gap_x_no} and \ref{lem:mu_hat-mu_gap_bd},
we have
\begin{align}
&\sum_{t\in[T]}\sum_{n\in[N]}\mathbb{E}[\tilde{\mu}_t^{TS}(n|S_{k_{n,t},t})-\mu(n|S_{k_{n,t},t},\theta_{k_{n,t}}))Q_n(t)\mathbbm{1}(E_{t}^1\cap E_t^2)]\cr
&\le \sum_{t\in[T]}\sum_{n\in[N]}\mathbb{E}[(\tilde{\mu}_t^{TS}(n|S_{k_{n,t},t})-\mu(n|S_{k_{n,t},t},\hat{\theta}_{k_{n,t},t})\cr &\qquad\qquad\qquad\qquad+\mu(n|S_{k_{n,t},t},\hat{\theta}_{k_{n,t},t})-\mu(n|S_{k_{n,t},t},\theta_{k_{n,t}}))Q_n(t)\mathbbm{1}(E_{t}^1\cap E_t^2)]\cr
&\le \sum_{t\in[T]}\mathbb{E}[\sum_{k\in [K]}\sum_{n\in S_{k,t}}(\tilde{\mu}_t^{TS}(n|S_{k,t})-\mu(n|S_{k,t},\hat{\theta}_{k,t})\cr &\qquad\qquad\qquad\qquad+\mu(n|S_{k,t},\hat{\theta}_{k,t})-\mu(n|S_{k,t},\theta_{k}))Q_n(t)\mathbbm{1}(E_{t}^1\cap E_t^2)]\cr 
&\le  \sum_{t\in[T]}\sum_{k\in[K]}\mathbb{E}[(\gamma_t+2\beta_t)\max_{n\in S_{k,t}}\|x_n\|_{V_{k,t}^{-1}}Q_n(t)\mathbbm{1}(E_{t}^1)]\cr 
&= \sum_{t\in[T]}\sum_{k\in[K]}\mathbb{E}[\mathbb{E}[(\gamma_t+2\beta_t)\max_{n\in S_{k,t}}\|x_n\|_{V_{k,t}^{-1}}Q_n(t)|E_{t}^1,\Fcal_{t-1}]\mathbb{P}(E_{t}^1|\Fcal_{t-1})].
\label{eq:mu_TS-mu*_gap}
\end{align}

 We define sets
 \begin{align*}
     \tilde{\Theta}_{t}=\bigg\{\{{\theta}_k^{(i)}\}_{i\in[M],k\in[K]}: &\bigg|\sum_{n\in S_{k,t}}(\tilde{\mu}_t^{TS}(n|S_{k,t},\{\theta_k^{(i)}\}_{i\in[M]})-\mu(n|S_{k,t},\hat{\theta}_{k,t}))Q_n(t)\bigg|\cr &\le \gamma_t\max_{n\in S_{k,t}}\|x_{n}\|_{V_{k,t}^{-1}}Q_n(t);\: \forall k\in[K]\bigg\} 
 \end{align*}
 and 
\begin{align*}\tilde{\Theta}_{t}^{opt}=\bigg\{\{{\theta}_k^{(i)}\}_{i\in[M],k\in[K]}: \sum_{k\in[K]}\sum_{n\in S_{k,t}}&\tilde{\mu}_t^{TS}(n|S_{k,t},\{\theta_k^{(i)}\}_{i\in [M]})Q_n(t) \cr &> \sum_{k\in[K]}\sum_{n\in S_{k,t}^*}\mu(n|S_{k,t}^*,\theta_k)Q_n(t)\bigg\}\cap \tilde{\Theta}_{t}.\end{align*}
 We define event $E_t(\tilde{\theta})=\{\{\tilde{\theta}_{k,t}^{(i)}\}_{i\in[M],k\in[K]}\in \tilde{\Theta}_{t}^{opt}\}$. Then by following the proof steps in Eq.\eqref{eq:D^*-D_gap_TS_Step4}, we have
\begin{align}
    &\mathbb{E}\left[\mathbb{E}\left[\left(\sum_{k\in[K]}\sum_{n\in S_{k,t}^*}\mu(n|S_{k,t}^*,\theta_k)Q_n(t)-\sum_{k\in[K]}\sum_{n\in S_{k,t}}\tilde{\mu}_{t}^{TS}(n|S_{k,t},\{\tilde{\theta}_{k,t}^{(i)}\}_{i\in[M]})Q_n(t)\right)\mathbbm{1}(E_{t}^1\cap E_{t}^2)|\mathcal{F}_{t-1}\right]\right]\cr
     &\le \mathbb{E}\bigg[\mathbb{E}\bigg[\bigg(\sum_{k\in[K]}\sum_{n\in S_{k,t}^*}\mu(n|S_{k,t}^*,\theta_k)Q_n(t)\cr &\qquad\qquad-\inf_{\{\theta^{(i)}_l\}_{i\in[M],l\in[K]}\in\tilde{\Theta}_t}\sum_{k\in[K]}\sum_{n\in S_{k,t}}\tilde{\mu}_{t}^{TS}(n|S_{k,t}, \{\theta_k^{(i)}\}_{i\in[M]})Q_n(t)\bigg)\mathbbm{1}(E_{t}^1\cap E_{t}^2)|\mathcal{F}_{t-1},E_t(\tilde{\theta})\bigg]\bigg]\cr 
    &\le \mathbb{E}\bigg[\mathbb{E}\bigg[\sup_{\{\theta^{(i)}_l\}_{i\in[M],l\in[K]}\in\tilde{\Theta}_t}\sum_{k\in[K]}\sum_{n\in S_{k,t}}\bigg(\tilde{\mu}_t^{TS}(n|S_{k,t},\{\tilde{\theta}_{k,t}^{(i)}\}_{i\in[M]})\cr &\qquad\qquad\qquad\qquad\qquad\qquad-\tilde{\mu}_{t}^{TS}(n|S_{k,t}, \{\theta_k^{(i)}\}_{i\in[M]})\bigg)Q_n(t)\mathbbm{1}(E_{t}^1\cap E_{t}^2)|\mathcal{F}_{t-1},E_t(\tilde{\theta})\bigg]\bigg]\cr 
     &\le \mathbb{E}\bigg[\mathbb{E}\bigg[\sup_{\{\theta^{(i)}_l\}_{i\in[M],l\in[K]}\in\tilde{\Theta}_t}\sum_{k\in[K]}\sum_{n\in S_{k,t}}\bigg(\tilde{\mu}_t^{TS}(n|S_{k,t},\{\tilde{\theta}_{k,t}^{(i)}\}_{i\in[M]})-\mu(n|S_{k,t},\hat{\theta}_{k,t})\cr &\qquad\qquad\qquad\qquad+\mu(n|S_{k,t},\hat{\theta}_{k,t})-\tilde{\mu}_{t}^{TS}(n|S_{k,t}, \{\theta_k^{(i)}\}_{i\in[M]})\bigg)Q_n(t)\mathbbm{1}(E_{t}^1\cap E_{t}^2)|\mathcal{F}_{t-1},E_t(\tilde{\theta})\bigg]\bigg]\cr 
     &\le 2\gamma_t\EE\left[\sum_{k\in[K]}\mathbb{E}\left[\max_{n\in S_{k,t}}\|x_n\|_{V_{k,t}^{-1}}Q_n(t)\mathbbm{1}(E_{t}^1)\big|\mathcal{F}_{t-1},E_t(\tilde{\theta})\right]\right]\cr 
     &= 2\gamma_t\EE\left[\sum_{k\in[K]}\mathbb{E}\left[\max_{n\in S_{k,t}}\|x_n\|_{V_{k,t}^{-1}}Q_n(t)\big|\mathcal{F}_{t-1},E_t(\tilde{\theta}),E_{t}^1\right]\times\mathbb{P}(E_{t}^1|E_t(\tilde{\theta}),\Fcal_{t-1})\right]\cr 
     &= 2\gamma_t\EE\left[\sum_{k\in[K]}\mathbb{E}\left[\max_{n\in S_{k,t}}\|x_n\|_{V_{k,t}^{-1}}Q_n(t)\big|\mathcal{F}_{t-1},E_t(\tilde{\theta}),E_{t}^1\right]\mathbb{P}(E_{t}^1|\Fcal_{t-1})\right]\cr 
&\le 32\sqrt{e\pi}\gamma_t\mathbb{E}\left[\mathbb{E}\left[\sum_{k\in[K]}\max_{n\in S_{k,t}}\|x_n\|_{V_{k,t}^{-1}}Q_n(t)|\mathcal{F}_{t-1},E_{t}^1\right]\PP(E_{t}^1|\Fcal_{t-1})\right],
\label{eq:mu*-mu_TS_gap}
\end{align}
where the last inequality is obtained from \eqref{eq:sum_x_norm_lower}. Then from Eqs.\eqref{eq:mu_TS-mu*_gap} and \eqref{eq:mu*-mu_TS_gap},  for some constant $C_2>0$, we have
\begin{align}\label{eq:mu*-mu_E1_E2}
&\sum_{t\in[T]}\sum_{n\in[N]}\EE[Q_n(t)(\mu(n|S_{k_{n,t}^*,t},\theta_{k_{n,t}^*})-\tilde{\mu}^{TS}_t(n|S_{k_{n,t},t})+\tilde{\mu}^{TS}_t(n|S_{k_{n,t},t})-\mu(n|S_{k_{n,t},t},\theta_{k_{n,t}}))\mathbbm{1}(E_t^1\cap E_t^2)]\cr
    &\le   \sum_{t\in[T]}\sum_{k\in[K]}\mathbb{E}[\mathbb{E}[\max_{n\in S_{k,t}}(33\sqrt{e\pi}\gamma_t+2\beta_t)\|x_n\|_{V_{k,t}^{-1}}Q_n(t)|E_{t}^1,\Fcal_{t-1}]\mathbb{P}(E_{t}^1|\Fcal_{t-1})]\cr 
 &\le   C_2\sum_{t\in[T]}\sum_{k\in[K]}\mathbb{E}\left[\max_{n\in S_{k,t}}(\gamma_t+\beta_t)\|x_n\|_{V_{k,t}^{-1}}Q_n(t)\mathbbm{1}(E_t^1)\right]\cr 
 &\le   C_2\sum_{t\in[T]}\sum_{k\in[K]}\mathbb{E}\left[\max_{n\in S_{k,t}}(\gamma_t+\beta_t)\|x_n\|_{V_{k,t}^{-1}}Q_n(t)\right].  
\end{align}

For the second and third terms of Eq.\eqref{eq:R_TS_step1}, we have
\begin{align}
&\sum_{t\in[T]}\sum_{n\in[N]}\mathbb{E}\left[(\mu(n|S_{k_{n,t}^*,t},\theta_{k_{n,t}^*})-\mu(n|S_{k_{n,t},t},\theta_{k_{n,t}}))Q_n(t)\mathbbm{1}(({E_{t}^1})^c)\right]\cr
&\le\sum_{t\in[T]}\sum_{n\in[N]}\mathbb{E}\left[Q_n(t)\mathbbm{1}(({E_{t}^1})^c)\right]\cr
&\le\sum_{t\in[T]}\sum_{n\in[N]}t\mathbb{P}(({E_{t}^1})^c)\cr&=\Ocal\left( \sum_{t\in[T]}\sum_{n\in[N]}t(1/t^2)\right)=\Ocal(N\log(T)),  \label{eq:mu*-mu_E_1_c}  
\end{align}
and
\begin{align}
&\sum_{t\in[T]}\sum_{n\in[N]}\mathbb{E}\left[(\mu(n|S_{k_{n,t}^*,t},\theta_{k_{n,t}^*})-\mu(n|S_{k_{n,t},t},\theta_{k_{n,t}}))Q_n(t)\mathbbm{1}(({E_{t}^2})^c)\right]\cr
&\le\sum_{t\in[T]}\sum_{n\in[N]}\mathbb{E}\left[Q_n(t)\mathbbm{1}(({E_{t}^2})^c)\right]\cr
&\le\sum_{t\in[T]}\sum_{n\in[N]}t\mathbb{P}(({E_{t}^2})^c)\cr&=\Ocal\left( \sum_{t\in[T]}\sum_{n\in[N]}t(1/t^2)\right)=\Ocal(N\log(T)).  \label{eq:mu*-mu_E_2_c}   
\end{align}

Finally from Eqs.\eqref{eq:R_TS_step1}, \eqref{eq:mu*-mu_E1_E2}, \eqref{eq:mu*-mu_E_1_c}, and \eqref{eq:mu*-mu_E_2_c}, we can conclude that
\begin{align*}
    \Rcal^\pi(T)&=\Ocal\left(\sum_{t\in[T]}\sum_{k\in[K]}\mathbb{E}[\max_{n\in S_{k,t}}(\gamma_t+\beta_t)\|x_n\|_{V_{k,t}^{-1}}Q_n(t)]+N\log(T)\right)\cr 
&=\Ocal\left(\gamma_T\mathbb{E}[\max_{t\in[T],n\in[N]}Q_n(t)\sum_{t\in[T]}\sum_{k\in[K]}\max_{n\in S_{k,t}}\|x_n\|_{V_{k,t}^{-1}}]+N\log(T)\right)&\cr
&=\Ocal\left(\gamma_T\mathbb{E}[\max_{t\in[T],n\in[N]}Q_n(t)\sqrt{KT\sum_{t\in[T]}\sum_{k\in[K]}\max_{n\in S_{k,t}}\|x_n\|_{V_{k,t}^{-1}}^2}]+N\log(T)\right)&\cr
& =\tilde{\Ocal}\left(\frac{d^{3/2}}{\kappa}\sqrt{KT}Q_{\max}\right),
\end{align*}
where the last equality is obtained from 
Lemma~\ref{lem:sum_max_x_norm}.

Here, we provide the proof for the worst-case regret bound of $ \tilde{\Ocal}\left(\left(\frac{d^2NK\min\{N,K\}^3}{\kappa^2\epsilon^{3}}\right)^{1/4}T^{3/4}\right)$.

We define event $E_{t}^1=\{\|\hat{\theta}_{k,t}-\theta_k\|_{V_{k,t}}\le \beta_{t} $ for all $ k\in[K] \}$ where $\beta_t=C_1\sqrt{\lambda+\frac{d}{\kappa}\log(1+tLK/d\lambda)}$, which holds with high probability as $\mathbb{P}(E_{t}^1)\ge 1-1/t^2$ from Lemma~\ref{lem:ucb_confi}. We also define $E_t^2=\{\tilde{\mu}_{t}^{TS}(n|S_{k,t})-\mu(n|S_{k,t},\hat{\theta}_{k,t})\le \gamma_t\|x_n\|_{V_{k,t}^{-1}}; \forall n\in[N], \forall k\in[K]\}$, which holds with high probability as $\mathbb{P}(E_{t}^2)\ge 1-\Ocal(1/t^2)$, and
define $E_{n,t}^3=\{\|x_n\|_{V_{k,t}^{-1}}\le \zeta; \forall k\in[K]\}$.

Then we have
\begin{align}
    &\Rcal^\pi(T)\cr &=\sum_{t\in[T]}\sum_{n\in[N]}\EE[Q_n(t)(\mu(n|S_{k_{n,t}^*,t},\theta_{k_{n,t}^*})-\mu(n|S_{k_{n,t},t},\theta_{k_{n,t}}))]\cr&\le \sum_{t\in[T]}\sum_{n\in[N]}\mathbb{E}[(\mu(n|S_{k_{n,t}^*,t},\theta_{k_{n,t}^*})-\mu(n|S_{k_{n,t},t},\theta_{k_{n,t}}))Q_n(t)\mathbbm{1}(E_{t}^1\cap E_{t}^2\cap E_{n,t}^3)]\cr &\quad+\sum_{t\in[T]}\sum_{n\in[N]}\mathbb{E}[(\mu(n|S_{k_{n,t}^*,t},\theta_{k_{n,t}^*})-\mu(n|S_{k_{n,t},t},\theta_{k_{n,t}}))Q_n(t)(\mathbbm{1}(({E_{t}^1})^c)+\mathbbm{1}(({E_{t}^2})^c)+\mathbbm{1}(({E_{n,t}^3})^c))].\cr \label{eq:D*-D_gap_TS_step1_reg}
\end{align}

We provide a bound for the first term of Eq.\eqref{eq:D*-D_gap_TS_step1_reg}. We first have 
\begin{align}
    &\sum_{t\in[T]}\sum_{n\in[N]}\mathbb{E}[(\mu(n|S_{k_{n,t}^*,t},\theta_{k_{n,t}^*})-\mu(n|S_{k_{n,t},t},\theta_{k_{n,t}}))Q_n(t)\mathbbm{1}(E_t^1\cap E_t^2\cap E_{n,t}^3)]\cr &\le
\sum_{t\in[T]}\sum_{n\in[N]}\mathbb{E}[(\mu(n|S_{k_{n,t}^*,t},\theta_{k_{n,t}^*})-\tilde{\mu}_t^{TS}(n|S_{k_{n,t},t})\cr &\qquad\qquad\qquad+\tilde{\mu}_t^{TS}(n|S_{k_{n,t},t})-\mu(n|S_{k_{n,t},t},\theta_{k_{n,t}}))Q_n(t)\mathbbm{1}(E_t^1\cap E_t^2\cap E_{n,t}^3)].\cr \label{eq:D^*-D_gap_TS_Step2_reg}
\end{align}

By following the steps in Eq.\eqref{eq:D^*-D_gap_TS_Step3}, the last two terms in Eq.\eqref{eq:D^*-D_gap_TS_Step2_reg} are bounded as

\begin{align}
&\sum_{t\in[T]}\sum_{n\in[N]}\mathbb{E}[\tilde{\mu}_t^{TS}(n|S_{k_{n,t},t})-\mu(n|S_{k_{n,t},t},\theta_{k_{n,t}}))Q_n(t)\mathbbm{1}(E_{t}^1\cap E_t^2\cap E_{n,t}^3)]\cr &\le  \sum_{t\in[T]}\sum_{k\in[K]}\mathbb{E}\left[\mathbb{E}\left[\sum_{n\in S_{k,t}}(\gamma_t+2\beta_t)\zeta Q_n(t)|E_{t}^1,\Fcal_{t-1}\right]\mathbb{P}(E_{t}^1|\Fcal_{t-1})\right].
\label{eq:D^*-D_gap_TS_Step3_reg}
\end{align}
 Now we provide a bound for the first two terms in Eq.\eqref{eq:D^*-D_gap_TS_Step2_reg}.

 We define sets $$\tilde{\Theta}_{t}=\left\{\{{\theta}_k^{(i)}\}_{i\in[M],k\in[K]}:\left|\max_{i\in[M]}x_n^\top {\theta}_k^{(i)}-x_n^\top\hat{\theta}_{k,t}\right|\le \gamma_t\|x_{n}\|_{V_{k,t}^{-1}};\: \forall n\in[N], \forall k\in[K]\right\} \text{ and }$$ 
$$\tilde{\Theta}_{t}^{opt}=\left\{\{{\theta}_k^{(i)}\}_{i\in[M],k\in[K]}: \sum_{k\in[K]}\sum_{n\in S_{k,t}}\tilde{\mu}_t^{TS}(n|S_{k,t},\{\theta_k^{(i)}\}_{i\in [M]})Q_n(t) > \sum_{k\in[K]}\sum_{n\in S_{k,t}^*}\mu(n|S_{k,t}^*,\theta_k)Q_n(t)\right\}\cap \tilde{\Theta}_{t}.$$
 By following the steps in Eq.\eqref{eq:D^*-D_gap_TS_Step4} with Eq. \eqref{eq:sum_x_norm_lower}, we have 
 \begin{align}
    &\mathbb{E}[\mathbb{E}[(\sum_{k\in[K]}\sum_{n\in S_{k,t}^*}\mu(n|S_{k,t}^*,\theta_k)-\sum_{k\in[K]}\sum_{n\in S_{k,t}}\tilde{\mu}_{t}^{TS}(n|S_{k,t},\{\tilde{\theta}_{k,t}^{(i)}\}_{i\in[M]}))Q_n(t)\mathbbm{1}(E_{t}^1\cap E_{t}^2\cap E_{n,t}^3)|\mathcal{F}_{t-1}]]\cr
     &= 2\gamma_t\zeta\EE\left[\sum_{k\in[K]}\sum_{n\in S_{k,t}}\mathbb{E}\left[Q_n(t)\big|\mathcal{F}_{t-1},\{\tilde{\theta}_{k,t}^{(i)}\}_{i\in[M],k\in[K]}\in \tilde{\Theta}_{t}^{opt},E_{t}^1\right]\mathbb{P}(E_{t}^1|\Fcal_{t-1})\right]\cr 
     &\le 16\sqrt{e\pi }\gamma_t \zeta \mathbb{E}\left[\sum_{k\in[K]}\sum_{n\in S_{k,t}}Q_n(t)|\mathcal{F}_{t-1},E_{t}^1\right]\PP(E_{t}^1|\Fcal_{t-1})
     .\label{eq:D^*-D_gap_TS_Step4_reg}
\end{align}

Then for the first term of Eq.\eqref{eq:D*-D_gap_TS_step1_reg}, from Eqs.\eqref{eq:D^*-D_gap_TS_Step2_reg}, \eqref{eq:D^*-D_gap_TS_Step3_reg}, for some $C_3>0$  we have
\begin{align}
&\sum_{t\in[T]}\sum_{n\in[N]}\mathbb{E}[(\mu(n|S_{k_{n,t}^*,t},\theta_{k_{n,t}^*})-\mu(n|S_{k_{n,t},t},\theta_{k_{n,t}}))Q_n(t)\mathbbm{1}(E_{t}^1\cap E_{t}^2\cap E_{n,t}^3)]
\cr &\le \sum_{t\in[T]}\sum_{n\in[N]}\mathbb{E}[(\mu(n|S_{k_{n,t}^*,t},\theta_{k_{n,t}^*})-\tilde{\mu}_t^{TS}(n|S_{k_{n,t},t})\cr&\qquad\qquad\qquad+\tilde{\mu}_t^{TS}(n|S_{k_{n,t},t})-\mu(n|S_{k_{n,t},t},\theta_{k_{n,t}}))Q_n(t)\mathbbm{1}(E_{t}^1\cap E_{t}^2\cap E_{n,t}^3)]
\cr &\le \sum_{t\in[T]}\sum_{k\in[K]}\mathbb{E}\left[\mathbb{E}\left[\sum_{n\in S_{k,t}}(17\sqrt{e\pi}\gamma_t+2\beta_t)\zeta Q_n(t)|\Fcal_{t-1},E_{t}^1\right]\mathbb{P}(E_{t}^1|\Fcal_{t-1})\right]
\cr &\le \sum_{t\in[T]}\sum_{n\in[N]}(17\sqrt{e\pi}\gamma_t+2\beta_t)\zeta \EE[\EE[Q_n(t)|E_{t}^1,\Fcal_{t-1}]\mathbb{P}(E_{t}^1|\Fcal_{t-1})]\cr
&= \sum_{t\in[T]}\sum_{n\in[N]}(17\sqrt{e\pi}\gamma_t+2\beta_t)\zeta \EE[Q_n(t)\mathbbm{1}(E_{t}^1)]\cr
&\le (17\sqrt{e\pi}\gamma_T+2\beta_T)\zeta \sum_{t\in[T]}\sum_{n\in[N]} \EE[Q_n(t)].\label{eq:D^*-D_gap_TS_Step6_reg}
\end{align}

For the second term of Eq.\eqref{eq:D*-D_gap_TS_step1_reg}, by following the steps in Eqs.\eqref{eq:D*-D_gap_TS_step7}, \eqref{eq:D*-D_gap_TS_step7-2} we have
\begin{align}
&\sum_{t\in[T]}\sum_{n\in[N]}\mathbb{E}\left[(\mu(n|S_{k_{n,t}^*,t},\theta_{k_{n,t}^*})-\mu(n|S_{k_{n,t},t},\theta_{k_{n,t}}))Q_n(t)\mathbbm{1}(({E_{t}^1})^c)\right]=\Ocal(N\log(T)),  \label{eq:D*-D_gap_TS_step7_reg}  
\end{align}
and
\begin{align}
&\sum_{t\in[T]}\sum_{n\in[N]}\mathbb{E}\left[(\mu(n|S_{k_{n,t}^*,t},\theta_{k_{n,t}^*})-\mu(n|S_{k_{n,t},t},\theta_{k_{n,t}}))Q_n(t)\mathbbm{1}(({E_{t}^2})^c)\right]=\Ocal(N\log(T)).  \label{eq:D*-D_gap_TS_step7-2_reg}  
\end{align}

 Let $\mathcal{T}_n$ be the set of time steps $t\in[T]$ such that $Q_n(t)\neq 0$ and let $e_T=\sum_{n\in[N]}\sum_{t\in \Tcal_n}\mathbbm{1}(({E_{n,t}^3})^c)$ and $h=\lceil1/(17\sqrt{e\pi}\gamma_T+2\beta_T)\zeta\rceil$. Then if $t\le h$, we have $Q_n(t)\le t\le h$. Otherwise, we have
\begin{align*}
    Q_n(t)\le \sum_{s=t-h+1}^t (1/h)(Q_n(s)+(t-s))\le(1/h)\sum_{s=1}^t Q_n(s)+ (1/h)h^2=(1/h)\sum_{s=1}^t Q_n(s)+ h.
\end{align*}
Then, by following the steps in Eq.\eqref{eq:TS_mu*-mu_gap_E2_step1}, we have 
\begin{align}
&\sum_{t\in[T]}\sum_{n\in[N]}\mathbb{E}[(\mu(n|S_{k_{n,t}^*,t},\theta_{k_{n,t}^*})-\mu(n|S_{k_{n,t},t},\theta_{k_{n,t}}))Q_n(t)\mathbbm{1}(({E_{n,t}^2})^c)]\cr&\le \sum_{t\in[T]}\sum_{n\in[N]}(17\sqrt{e\pi}\gamma_T+2\beta_T)\zeta\mathbb{E}\left[ Q_n(t)\right]+2 \EE[e_T]/(17\sqrt{e\pi}\gamma_T+2\beta_T)\zeta. \label{eq:TS_mu*-mu_gap_E2_step1_reg}    
\end{align}

Now we provide a bound for $\EE[e_T]$. Define $N_{n,k}(t)=\sum_{s=1}^{t-1}\mathbbm{1}(n\in S_{k,s})$ and $\tilde{V}_{k,t}=(\kappa/2)\sum_{s=1}^{t-1}\sum_{n\in S_{k,s}}x_nx_n^\top$. Then, we have 
\begin{align}
    e_T&=\sum_{n\in[N]}\sum_{t\in \Tcal_n}\mathbbm{1}((E_{n,t}^2)^c)\cr &\le \sum_{n\in[N]}\sum_{t\in \Tcal_n}\mathbbm{1}(\|x_n\|_{V_{k_{n,t},t}^{-1} }\ge \zeta)\cr &\le \sum_{n\in[N]}\sum_{t\in \Tcal_n}\mathbbm{1}(\|x_n\|_{\tilde{V}_{k_{n,t},t}^{-1} }\ge \zeta)\cr &\le \sum_{n\in[N]}\sum_{t\in \Tcal_n}\mathbbm{1}(1/N_{n,k_{n,t}}(t)\ge (\kappa/2)\zeta^2)\cr 
    &\le \sum_{n\in[N]}\sum_{t\in \Tcal_n}\mathbbm{1}(  N_{n,k_{n,t}}(t)\le 2/\kappa\zeta^2)\cr
    &\le \sum_{n\in[N]}\sum_{k\in[K]}\sum_{t\in \Tcal_n}\mathbbm{1}(  N_{n,k}(t)\le 2/\kappa\zeta^2\text{ and } k_{n,t}=k)\cr
    &\le 2NK /\kappa\zeta^2. \label{eq:TS_mu*-mu_gap_E2_step1-2_reg} 
\end{align}

Then from Eqs.\eqref{eq:TS_mu*-mu_gap_E2_step1_reg}, \eqref{eq:TS_mu*-mu_gap_E2_step1-2_reg}, we have 
\begin{align}
&\sum_{t\in[T]}\sum_{n\in[N]}\mathbb{E}[(\mu(n|S_{k_{n,t}^*,t},\theta_{k_{n,t}^*})-\mu(n|S_{k_{n,t},t},\theta_{k_{n,t}}))Q_n(t)\mathbbm{1}(({E_{n,t}^2})^c)]\cr &\le \Ocal\left( \sum_{t\in[T]}\sum_{n\in[N]}\zeta (\beta_T+\gamma_T)\mathbb{E}\left[ Q_n(t)\right]+ \frac{NK}{\kappa\zeta^3 (\beta_T+\gamma_T)}\right). \label{eq:TS_mu*-mu_E2_bd_reg}
\end{align}

By putting the results of  Eqs.~\eqref{eq:D*-D_gap_TS_step1_reg}, \eqref{eq:D^*-D_gap_TS_Step2_reg}, \eqref{eq:D^*-D_gap_TS_Step3_reg},  \eqref{eq:D^*-D_gap_TS_Step6_reg}, \eqref{eq:D*-D_gap_TS_step7_reg}, \eqref{eq:D*-D_gap_TS_step7-2_reg}, \eqref{eq:TS_mu*-mu_E2_bd_reg}, and Theorem~\ref{thm:Q_TS}, by setting $\zeta=(\epsilon NK/\min\{N,K\}\kappa T(\beta_T+\gamma_T)^2)^{1/4}$, for large enough $T$, we can obtain

\begin{align*}
    \Rcal^\pi(T)&= \Ocal\left(\zeta(\beta_T+\gamma_T)\sum_{t\in[T]}\sum_{n\in[N]}\mathbb{E}[Q_n(t)]+\frac{NK}{\kappa\zeta^3(\beta_T+\gamma_T)}+N\log(T)\right)\cr &= \Ocal\left(\frac{\zeta(\beta_T+\gamma_T)\min\{N,K\}T}{\epsilon}+\frac{NK}{\kappa\zeta^3(\beta_T+\gamma_T)}+N\log(T)\right)\cr &= \Ocal\left(\frac{(\beta_T+\gamma_T)^{1/2}T^{3/4}(NK\min\{N,K\}^3)^{1/4}}{\kappa^{1/4}\epsilon^{3/4}}+N\log(T)\right)\cr &= \tilde{\Ocal}\left(\left(\frac{d^2NK\min\{N,K\}^3}{\kappa^2\epsilon^{3}}\right)^{1/4}T^{3/4}\right).
\end{align*}

\subsection{Proof of Lemma~\ref{lem:ucb_confi}}\label{app:proof_ucb_confi}
    We first define
$$\bar{f}_{k,t}(\theta)=\EE_y[f_{k,t}(\theta)|\Fcal_{t-1}] \text{ and }\bar{g}_{k,t}(\theta)=\EE_y[g_{k,t}(\theta)|\Fcal_{t-1}],$$
 where $\Fcal_{t-1}$ is the filtration contains outcomes for time $s$ such that $s\le t-1$ and $y=\{y_{n,t}: n\in S_{k,t}\}$.    
    From Lemma 10 in \citet{oh2021multinomial}, by taking expectation over $y$ gives
$$\bar{f}_{k,t}(\hat{\theta}_{k,t})\le \bar{f}_{k,t}(\theta_k)+\bar{g}_{k,t}(\hat{\theta}_{k,t})^\top (\hat{\theta}_{k,t}-\theta_k)-\frac{\kappa}{2}(\theta_k-\hat{\theta}_{k,t})^\top W_{k,t}(\theta_k-\hat{\theta}_{k,t}),$$
where $W_{k,t}=\sum_{n\in S_{k,t}}x_nx_n^\top$.
Then with $\bar{f}_{k,t}(\theta_k)\le \bar{f}_{k,t}(\hat{\theta}_{k,t})$ from Lemma 12 in \citet{oh2021multinomial}, we have
\begin{align}
    0&\le \bar{f}_{k,t}(\hat{\theta}_{k,t})-\bar{f}_{k,t}(\theta_k)
    \cr &\le \bar{g}_{k,t}(\hat{\theta}_{k,t})^\top (\hat{\theta}_{k,t}-\theta_k)-\frac{\kappa}{2}\|\theta_k-\hat{\theta}_{k,t}\|_{W_{k,t}}^2
    \cr &= g_{k,t}(\hat{\theta}_{k,t})^\top (\hat{\theta}_{k,t}-\theta_k)-\frac{\kappa}{2}\|\theta_k-\hat{\theta}_{k,t}\|_{W_{k,t}}^2+(\bar{g}_{k,t}(\hat{\theta}_{k,t})-g_{k,t}(\hat{\theta}_{k,t}))^\top (\hat{\theta}_{k,t}-\theta_k)\cr 
    &\le \frac{1}{2}\|g_{k,t}(\hat{\theta}_{k,t})\|_{V_{k,t+1}^{-1}}^2+\frac{1}{2}\|\hat{\theta}_{k,t}-\theta_k\|_{V_{k,t+1}}^2-\frac{1}{2}\|\hat{\theta}_{k,t+1}-\theta_k\|_{V_{k,t+1}}^2\cr &\quad -\frac{\kappa}{2}\|\theta_k-\hat{\theta}_{k,t}\|_{W_{k,t}}^2+(\bar{g}_{k,t}(\hat{\theta}_{k,t})-g_{k,t}(\hat{\theta}_{k,t}))^\top (\hat{\theta}_{k,t}-\theta_k)\cr 
    &\le 2\max_{n\in S_{k,t}}\|x_n\|_{V_{k,t+1}^{-1}}^2+\frac{1}{2}\|\hat{\theta}_{k,t}-\theta_k\|_{V_{k,t+1}}^2-\frac{1}{2}\|\hat{\theta}_{k,t+1}-\theta_k\|_{V_{k,t+1}}^2\cr &\quad -\frac{\kappa}{2}\|\theta_k-\hat{\theta}_{k,t}\|_{W_{k,t}}^2+(\bar{g}_{k,t}(\hat{\theta}_{k,t})-g_{k,t}(\hat{\theta}_{k,t}))^\top (\hat{\theta}_{k,t}-\theta_k)\cr 
    &\le 2\max_{n\in S_{k,t}}\|x_n\|_{V_{k,t+1}^{-1}}^2+\frac{1}{2}\|\hat{\theta}_{k,t}-\theta_k\|_{V_{k,t}}^2+\frac{\kappa}{4}\|\hat{\theta}_{k,t}-\theta_k\|_{W_{k,t}}^2-\frac{1}{2}\|\hat{\theta}_{k,t+1}-\theta_k\|_{V_{k,t+1}}^2\cr &\quad -\frac{\kappa}{2}\|\theta_k-\hat{\theta}_{k,t}\|_{W_{k,t}}^2+(\bar{g}_{k,t}(\hat{\theta}_{k,t})-g_{k,t}(\hat{\theta}_{k,t}))^\top (\hat{\theta}_{k,t}-\theta_k)\cr 
    &\le 2\max_{n\in S_{k,t}}\|x_n\|_{V_{k,t+1}^{-1}}^2+\frac{1}{2}\|\hat{\theta}_{k,t}-\theta_k\|_{V_{k,t}}^2-\frac{\kappa}{4}\|\hat{\theta}_{k,t}-\theta_k\|_{W_{k,t}}^2-\frac{1}{2}\|\hat{\theta}_{k,t+1}-\theta_k\|_{V_{k,t+1}}^2\cr &\quad +(\bar{g}_{k,t}(\hat{\theta}_{k,t})-g_{k,t}(\hat{\theta}_{k,t}))^\top (\hat{\theta}_{k,t}-\theta_k),
\end{align}
where the third inequality comes from Lemma 11 in \citet{oh2021multinomial} and  the fourth inequality comes from Lemma 13 in \citet{oh2021multinomial}. We note that, although our estimator lies in $\Theta$, we can still utilize Lemma 11 from \citet{oh2021multinomial} by following the same proof steps.

Hence, from the above, we have 
\[\|\hat{\theta}_{k,t+1}-\theta_k\|_{V_{k,t+1}}^2\le 4\max_{n\in S_{k,t}}\|x_n\|_{V_{k,t+1}^{-1}}^2+\|\hat{\theta}_{k,t}-\theta_k\|_{V_{k,t}}^2-\frac{\kappa}{2}\|\hat{\theta}_{k,t}-\theta_k\|_{W_{k,t}}^2+2(\bar{g}_{k,t}(\hat{\theta}_{k,t})-g_{k,t}(\hat{\theta}_{k,t}))^\top (\hat{\theta}_{k,t}-\theta_k).\]
Then using telescoping by summing the above over $t$, with at least probability $1-\delta$, we have
\begin{align}
    \|\hat{\theta}_{k,t+1}-\theta_k\|_{V_{k,t+1}}^2&\le  \lambda+ 4\sum_{s=1}^t\max_{n\in S_{k,s}}\|x_n\|_{V_{k,s+1}^{-1}}^2-\frac{\kappa}{2}\sum_{s=1}^t\|\hat{\theta}_{k,s}-\theta_k\|_{W_{k,s}}^2\cr &\quad+2\sum_{s=1}^t(\bar{g}_{k,s}(\hat{\theta}_{k,s})-g_{k,s}(\hat{\theta}_{k,s}))^\top (\hat{\theta}_{k,s}-\theta_k)\cr 
    &\le \lambda+ 4\sum_{s=1}^t\max_{n\in S_{k,s}}\|x_n\|_{V_{k,s+1}^{-1}}^2-\frac{\kappa}{2}\sum_{s=1}^t\|\hat{\theta}_{k,s}-\theta_k\|_{W_{k,s}}^2\cr &\quad+\frac{\kappa}{2} \sum_{s=1}^t\|\theta_k-\hat{\theta}_{k,t}\|_{W_{k,s}}^2+\frac{2C_1}{\kappa}\log((\log tL)t^2K/\delta ) \cr 
    &\le \lambda+ 4\sum_{s=1}^t\max_{n\in S_{k,s}}\|x_n\|_{V_{k,s+1}^{-1}}^2+\frac{2C_1}{\kappa}\log((\log tL)t^2K/\delta ) \cr 
    &\le \lambda+ 16(d/\kappa)\log(1+(tL/d\lambda))+\frac{2C_1}{\kappa}\log((\log tL)t^2K/\delta ), 
\end{align}
where the second inequality is obtained from Lemma 14 in \citet{oh2021multinomial} and the last one is obtained from Lemma~\ref{lem:sum_max_x_norm}

Then with $\delta=1/t^2$, we can conclude that 
\[\|\hat{\theta}_{k,t}-\theta_k\|_{V_{k,t}}\le C_1\sqrt{\lambda+\frac{d}{\kappa}\log(1+tLK/d\lambda)}.\]

\subsection{$\alpha$-approximation Oracle}\label{app:alpha}
In this section, we provide a detailed explanation for $\alpha$-approxiamtion oracle to reduce the computation.
Instead of obtaining the exact solution, the $\alpha$-approximation oracle, denoted by $\mathbb{O}^\alpha$, outputs $\{S_k^\alpha\}_{k\in[K]}$ satisfying $\sum_{k\in[K]}f_k(S_k^\alpha)\ge  \max_{\{S_k\}_{k\in[K]}\in \mathcal{M}(\Ncal_t)}\sum_{k\in[K]}\alpha f_k(S_k).$ Such an oracle can be constructed using a straightforward greedy policy as outlined in prior work \citep{kapralov2013online,calinescu2011maximizing}.   
 Then for assortments $\{S_{k,t}^\alpha\}_{k\in[K]}$ for $t\in[T]$ from Algorithms~\ref{alg:ucb} and \ref{alg:ts}  using $\mathbb{O}^\alpha$, we can obtain the same queue length bounds and regret bounds for $\alpha$-regret in Theorems~\ref{thm:Q_ucb}, \ref{thm:R_UCB}, \ref{thm:Q_TS}, and \ref{thm:R_TS} under an $\alpha$-slackness assumption, respectively.

 We first consider the following traffic slackness assumption with $0<\alpha<1$ instead of Assumption~\ref{ass:stable}.
 
 \begin{assumption}\label{ass:stable2}
         For some traffic slackness $0<\epsilon<1$, for each $t\in[T]$, there exists $\{S_{k,t}\}_{k\in[K]}\in \Mcal(\Ncal)$ which satisfies $\lambda_n+\epsilon\le \alpha\mu(n|S_{k,t},\theta_k)$ for all $n\in S_{k,t}$ and $k\in[K]$.
 \end{assumption}
\subsubsection{$\alpha$-approximation Oracle for Algorithm~\ref{alg:ucb}}
 Here we introduce an algorithm (Algorithm~\ref{alg:alpha_ucb}) by modifying Algorithm~\ref{alg:ucb} using an $\alpha$-approximation oracle. We explain the distinct parts of the algorithm as follows. We define an oracle $\mathbb{O}^{\alpha}$, which outputs $\{S_{k,t}^{\alpha}\}_{k\in[K]}$ satisfying  
\begin{align}
\max_{\{S_{k}\}_{k\in[K]}\in\mathcal{M}(\Ncal_t)}\sum_{k\in[K]}\sum_{n\in S_k}\alpha Q_n(t) \tilde{\mu}^{UCB}_t(n|S_k,\hat{\theta}_{k,t}) &\le \sum_{k\in[K]}\sum_{n\in S_{k,t}^{\alpha}}Q_n(t)\tilde{\mu}^{UCB}_t(n|S_{k,t}^{\alpha},\hat{\theta}_{k,t}).\label{eq:ucb_oracle_alpha}
\end{align}

\begin{algorithm}[H]
\LinesNotNumbered
  \caption{$\alpha$-approximated UCB-Queueing Matching Bandit }\label{alg:alpha_ucb}
  \KwIn{$\lambda$, $\kappa$, $C_1>0$}

 \For{$t=1,\dots,T$}{
 \For{$k\in[K]$}{
$\hat{\theta}_{k,t}\leftarrow \argmin_{\theta\in\Theta}g_{k,t}(\hat{\theta}_{k,t-1})^\top (\theta-\hat{\theta}_{k,t-1})+\frac{1}{2}\|\theta-\hat{\theta}_{k,t-1}\|_{V_{k,t}}^2$ }

  $\displaystyle\{S_{k,t}^\alpha\}_{k\in[K]}\leftarrow \mathbb{O}^\alpha$ from \eqref{eq:ucb_oracle_alpha} 
  
Offer $\{S_{k,t}^\alpha\}_{k\in[K]}$ and observe preference feedback $y_{n,t}\in\{0,1\}$ for all $n\in S_{k,t}$, $k\in[K]$
 }
\end{algorithm}

For the stability analysis, we provide the following theorem.
\begin{theorem}\label{thm:Q_ucb_alpha}
    The time average expected queue length of Algorithm~\ref{alg:alpha_ucb} is bounded as
    \[\Qcal(T)=\Ocal\left(\frac{\min\{N,K\}}{\epsilon}+\frac{1}{T}\frac{d^2N^2K^2}{\kappa^4\epsilon^6}\text{\normalfont polylog}(T)\right),\]
   which implies that the algorithm achieves stability as \[\lim_{T\rightarrow \infty}\Qcal(T)=\Ocal\left( \frac{\min\{N,K\}}{\epsilon}\right).\]
\end{theorem}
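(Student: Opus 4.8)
\textbf{Proof proposal for Theorem~\ref{thm:Q_ucb_alpha}.}

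The plan is to mirror the proof of Theorem~\ref{thm:Q_ucb} essentially verbatim, adjusting only the two places where exact optimality of MaxWeight was invoked: the comparison against the oracle drift and the UCB-selection inequality. First I would set up the same Lyapunov function $\Vcal(\textbf{Q}(t))=\sum_{n\in[N]}Q_n(t)^2$ and decompose the cumulative drift as in Eq.~\eqref{eq:UCB_Lya_gap}, splitting it into the oracle part $\sum_t\sum_n (Q_n(t)+A_n(t)-D_n^*(t))^2-Q_n(t)^2$ and the residual $\sum_t\sum_n (Q_n(t)+A_n(t)-D_n(t))^{+2}-(Q_n(t)+A_n(t)-D_n^*(t))^2$. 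Here, however, the ``oracle'' reference policy $\pi^*$ should be taken to be the $\alpha$-scaled feasible schedule guaranteed by Assumption~\ref{ass:stable2}: the schedule $\{S_{k,t}\}$ with $\lambda_n+\epsilon\le \alpha\mu(n|S_{k,t},\theta_k)$. Running the argument of Proposition~\ref{prop:MW}/Eqs.~\eqref{eq:genie_V_gap_bd}--\eqref{eq:genie_QD_lower_bd} against this reference gives the same bound as Eq.~\eqref{eq:V(t+1)*-V(t)}, namely $-2\epsilon\sum_t\sum_n\EE[Q_n(t)]+2\min\{N,K\}T$, because the $\alpha$ is absorbed into the slackness condition; the $\min\{N,K\}$ bounds on $\sum_n A_n(t)$ and $\sum_n D_n(t)$ are unchanged.

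Next I would handle the residual term exactly as in Eq.~\eqref{eq:V-V*_bd}, reducing it to $2\sum_t\sum_n\EE[(D_n^*(t)-D_n(t))Q_n(t)]+5\min\{N,K\}T$ where now $D_n^*$ refers to the $\alpha$-feasible reference schedule. Conditioning on queue lengths, this is $2\sum_t\sum_n\EE[(\mu(n|S_{k_{n,t}^*,t},\theta_{k_{n,t}^*})-\mu(n|S_{k_{n,t},t},\theta_{k_{n,t}}))Q_n(t)]$ plus the additive $\min\{N,K\}T$ terms. The key modification enters when bounding this weighted-service gap: under the good event $E_t^1$ we still have $\mu(n|S_{k_{n,t}^*,t},\theta_{k_{n,t}^*})\le\tilde\mu_t^{UCB}(n|S_{k_{n,t}^*},\hat\theta_{k_{n,t}^*,t})$ by monotonicity, so $\sum_{k,n}\mu(n|S_{k,t}^*,\theta_k)Q_n(t)\le\sum_{k,n}\tilde\mu_t^{UCB}(n|S_{k,t}^*,\hat\theta)Q_n(t)\le\frac1\alpha\sum_{k,n\in S_{k,t}^\alpha}\tilde\mu_t^{UCB}(n|S_{k,t}^\alpha,\hat\theta)Q_n(t)$ by the $\alpha$-oracle guarantee~\eqref{eq:ucb_oracle_alpha}; but note the residual term compares against $\alpha$-regret, so the factor $1/\alpha$ drops and the inequality $\sum_{k,n}\mu(n|S_{k,t}^*)Q_n(t)\le\sum_{k,n\in S_{k,t}^\alpha}\tilde\mu_t^{UCB}(n|S_{k,t}^\alpha,\hat\theta)Q_n(t)$ holds when $S^*$ is interpreted as the $\alpha$-slack feasible schedule. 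From there, Lemma~\ref{lem:ucb_mu_tild_mu_gap} gives the pointwise bound $\tilde\mu_t^{UCB}(n|S_{k_{n,t},t},\hat\theta)-\mu(n|S_{k_{n,t},t},\theta)\le 2\beta_t\|x_n\|_{V_{k_{n,t},t}^{-1}}$, and the rest of the argument — splitting on $E_{n,t}^2=\{\max_m\|x_m\|_{V_{k,t}^{-1}}\le C_2\epsilon/2\beta_t\}$, using Eq.~\eqref{eq:UCB_mu*-mu_E_1E_2} on the good event, Eq.~\eqref{eq:UCB_mu*-mu_E_1} for $(E_t^1)^c$, and the self-bounding counting argument of Eqs.~\eqref{eq:UCB_mu*-mu_gap_E2_step1}--\eqref{eq:UCB_mu*-mu_E2_bd} for $(E_{n,t}^2)^c$ — goes through unchanged, yielding the $\Ocal(N^2K^2\beta_T^4/\kappa^2\epsilon^5)$ term and the $\Ocal(N\log T)$ term.

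Assembling, with constants $C_2,C_3$ chosen so that $C_2+1/C_3<1$, telescoping the drift from $\Vcal(\textbf{Q}(1))=0$ and $\Vcal(\textbf{Q}(T+1))\ge0$ gives $\frac1T\sum_t\sum_n\EE[Q_n(t)]=\Ocal\big(\frac{\min\{N,K\}}{\epsilon}+\frac1T\frac{d^2N^2K^2}{\kappa^4\epsilon^6}\text{polylog}(T)\big)$ after substituting $\beta_T=\Ocal(\sqrt{(d/\kappa)\log T})$, which is exactly the claimed bound. The main obstacle — and the only genuinely new point relative to Theorem~\ref{thm:Q_ucb} — is verifying that the $\alpha$-oracle inequality~\eqref{eq:ucb_oracle_alpha} interacts correctly with the $\alpha$-slackness of Assumption~\ref{ass:stable2}: one must be careful that the factor of $\alpha$ lost in the selection step is precisely the factor of $\alpha$ gained in the stability condition, so that no $1/\alpha$ blow-up survives in the final queue-length bound. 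Everything else is a routine transcription of the existing proof.
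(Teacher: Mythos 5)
Your overall strategy matches the paper's: transcribe the proof of Theorem~\ref{thm:Q_ucb} and modify only the oracle-comparison and the selection step. However, the one step you yourself flag as ``the only genuinely new point'' is exactly where your bookkeeping breaks. You decompose the drift around the \emph{unscaled} reference departures $D_n^*(t)$, bound the first half by $-2\epsilon\sum_{t,n}\EE[Q_n(t)]+2\min\{N,K\}T$ (discarding the extra slack), and then for the residual assert $\sum_{k,n}\mu(n|S_{k,t}^*)Q_n(t)\le\sum_{k}\sum_{n\in S_{k,t}^\alpha}\tilde{\mu}_t^{UCB}(n|S_{k,t}^\alpha,\hat{\theta}_{k,t})Q_n(t)$. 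That inequality is not justified: what $E_t^1$ and the oracle guarantee~\eqref{eq:ucb_oracle_alpha} actually give is $\alpha\sum_{k,n}\mu(n|S_{k,t}^*)Q_n(t)\le\alpha\sum_{k,n}\tilde{\mu}_t^{UCB}(n|S_{k,t}^*)Q_n(t)\le\sum_{k}\sum_{n\in S_{k,t}^\alpha}\tilde{\mu}_t^{UCB}(n|S_{k,t}^\alpha)Q_n(t)$, i.e.\ you are missing a factor of $\alpha$ on the left, and ``the factor $1/\alpha$ drops because the residual compares against $\alpha$-regret'' is not an argument when your own residual is the unscaled $(D_n^*-D_n)Q_n$. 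With your decomposition the residual expands as $(\alpha\mu(n|S^*)-\mu(n|S^\alpha))Q_n+(1-\alpha)\mu(n|S^*)Q_n$, and the second piece is $\Theta(\sum_n Q_n(t))$ with a constant coefficient that cannot be absorbed into $-2\epsilon\sum_n Q_n(t)$; it only disappears if you retain the matching $-2(1-\alpha)\mu(n|S^*)Q_n(t)$ slack you threw away in the first half and let the two cancel.

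The clean fix --- and what the paper does --- is to insert $\alpha D_n^*(t)$ into the Lyapunov decomposition itself (Eq.~\eqref{eq:UCB_Lya_gap_alpha}). Then the first half compares $\lambda_n$ against $\alpha\mu(n|S_{k,t}^*,\theta_k)$, which Assumption~\ref{ass:stable2} handles directly (Eq.~\eqref{eq:V(t+1)*-V(t)_alpha}), and the residual's service gap is $\alpha\mu(n|S^*)-\mu(n|S^\alpha)$, which chains as $\alpha\mu(n|S^*)\le\alpha\tilde{\mu}^{UCB}(n|S^*)\le\tilde{\mu}^{UCB}(n|S^\alpha)$ with no leftover $1/\alpha$ or $(1-\alpha)$ terms, after which Lemma~\ref{lem:ucb_mu_tild_mu_gap} and the $E_{n,t}^2$ splitting proceed exactly as you describe. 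So your proposal is repairable and the remaining steps are indeed routine transcription, but as written the central inequality is off by a factor of $\alpha$ and the cancellation that rescues it is neither stated nor performed.
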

\begin{proof}
    Here we provide only the proof parts which are different from Theorem~\ref{thm:Q_ucb}.
    We analyze the Lyapunov drift as follows.
\begin{align}
    &\sum_{t\in[T]}\Vcal(\textbf{Q}(t+1))-\Vcal(\textbf{Q}(t))\cr &=\sum_{t\in[T]}\sum_{n\in[N]}{(Q_n(t)+A_n(t)-D_n(t))^+}^2-Q_n(t)^2\cr &=\sum_{t\in[T]}\sum_{n\in[N]}{(Q_n(t)+A_n(t)-\alpha D_n^*(t))}^2-Q_n(t)^2\cr &\quad+\sum_{t\in[T]}\sum_{n\in[N]}{(Q_n(t)+A_n(t)-D_n(t))^+}^2-\sum_{t\in[T]}\sum_{n\in[N]}{(Q_n(t)+A_n(t)-\alpha D_n^*(t))}^2. \label{eq:UCB_Lya_gap_alpha}
\end{align}

For the first two terms in Eq.\eqref{eq:UCB_Lya_gap_alpha}, by following the same procedure of Eqs.\eqref{eq:genie_V_gap_bd} and  \eqref{eq:genie_QD_lower_bd}, under Assumption~\ref{ass:stable2}, we can obtain
\begin{align}
    &\sum_{t\in[T]}\sum_{n\in[N]}\EE[{(Q_n(t)+A_n(t)-\alpha D_n^*(t))}^2-Q_n(t)^2]\cr  &\le   \sum_{t\in[T]}\sum_{n\in[N]}2\EE[(\lambda_n-\alpha \mu(n|S_{k,t}^*,\theta_k))Q_n(t)] +2NT\cr 
    &\le  -\sum_{t\in[T]}\sum_{n\in[N]}2\epsilon\EE[Q_n(t)] +2\min\{N,K\}T,\label{eq:V(t+1)*-V(t)_alpha}
\end{align}

For the last two terms in Eq.\eqref{eq:UCB_Lya_gap}, by following the steps for Eq.\eqref{eq:V-V*_bd}, we  have
\begin{align}
    &\sum_{t\in[T]}\sum_{n\in[N]}{(Q_n(t)+A_n(t)-D_n(t))^+}^2-\sum_{t\in[T]}\sum_{n\in[N]}{(Q_n(t)+A_n(t)-\alpha D_n^*(t))}^2
    \cr 
    &\le 4\sum_{t\in[T]}\sum_{n\in[N]}A_n(t) +2\sum_{t\in[T]}\sum_{n\in[N]}(\alpha D_n^*(t)-D_n(t))Q_n(t)+\min\{N,K\}T\cr&\le 2\sum_{t\in[T]}\sum_{n\in[N]}(\alpha D_n^*(t)-D_n(t))Q_n(t)+5\min\{N,K\}T.\label{eq:V-V*_bd_alpha} 
\end{align}

We also have
\begin{align}
&\sum_{t\in[T]}\sum_{n\in[N]}\mathbb{E}[(\alpha D_n^*(t)-D_n(t)Q_n(t)]\cr&\le \sum_{t\in[T]}\sum_{n\in[N]}\mathbb{E}[(\alpha \mu(n|S_{k_{n,t}^*,t},\theta_{k_{n,t}^*})-\mu(n|S_{k_{n,t},t}^\alpha,\theta_{k_{n,t}}))Q_n(t)\mathbbm{1}(E_{t}^1\cap E_{n,t}^2)]\cr &\quad+\sum_{t\in[T]}\sum_{n\in[N]}\mathbb{E}[(\alpha \mu(n|S_{k_{n,t}^*,t},\theta_{k_{n,t}^*})-\mu(n|S_{k_{n,t},t}^\alpha,\theta_{k_{n,t}}))Q_n(t)(\mathbbm{1}(({E_{t}^1})^c)+\mathbbm{1}(({E_{n,t}^2})^c))].\cr \label{eq:UCB_D*-D_gap_alpha}
\end{align}
Then for the first term of Eq.\eqref{eq:UCB_D*-D_gap_alpha}, we have 
\begin{align}
&\sum_{t\in[T]}\mathbb{E}\left[\sum_{n\in[N]}(\alpha\mu(n|S_{k_{n,t}^*,t},\theta_{k_{n,t}^*})-\mu(n|S_{k_{n,t},t}^\alpha,\theta_{k_{n,t}}))Q_n(t)\mathbbm{1}(E_{t}^1\cap E_{n,t}^2)\right]\cr 
&\le \sum_{t\in[T]}\mathbb{E}\left[\sum_{n\in[N]}(\alpha\tilde{\mu}_{t}^{UCB}(n|S_{k_{n,t}^*},\hat{\theta}_{k_{n,t}^*,t})-\mu(n|S_{k_{n,t},t}^\alpha,\theta_{k_{n,t}}))Q_n(t)\mathbbm{1}(E_{t}^1\cap E_{n,t}^2)\right]\cr 
&\le \sum_{t\in[T]}\mathbb{E}\left[\sum_{n\in[N]}(\tilde{\mu}_{t}^{UCB}(n|S_{k_{n,t},t}^\alpha,\hat{\theta}_{k_{n,t},t})-\mu(n|S_{k_{n,t},t}^\alpha,\theta_{k_{n,t}}))Q_n(t)\mathbbm{1}(E_{t}^1\cap E_{n,t}^2)\right]\cr 
&\le \sum_{t\in[T]}\mathbb{E}\left[\sum_{n\in[N]}2\beta_{t}\|x_n\|_{V_{k_{n,t},t}^{-1}}Q_n(t)\mathbbm{1}(E_{t}^1\cap E_{n,t}^2)\right]\cr
&\le C_2\epsilon\sum_{t\in[T]}\sum_{n\in[N]}\mathbb{E}\left[Q_n(t)\right].\label{eq:UCB_mu*-mu_E_1E_2_alpha}
\end{align}
The rest of the proofs can be easily obtained from the proof steps in Theorem~\ref{thm:Q_ucb}.
\end{proof}

Now, we investigate the regret of Algorithm~\ref{alg:alpha_ucb}. The $\alpha$-regret regarding policy $\pi$ is defined as 
\[\Rcal^{\alpha,\pi}(T)=\sum_{t\in[T]}\sum_{n\in[N]}\EE\left[(\alpha\mu(n|S_{k_{n,t}^*,t}^*,\theta_{k_{n,t}^*})-\mu(n|S_{k_{n,t},t},\theta_{k_{n,t}}))Q_n(t)\right].\]
 The algorithm achieves the following regret bound.

\begin{theorem}
The policy $\pi$ of Algorithm~\ref{alg:alpha_ucb} achieves a regret bound of 
    \[\Rcal^{\alpha,\pi}(T) = \tilde{\Ocal}\left(\min\left\{\frac{d}{\kappa}\sqrt{KT}Q_{\max},\left(\frac{dNK\min\{N,K\}^3}{\kappa^2\epsilon^{3}}\right)^{1/4}T^{3/4}\right\}\right).\]
\end{theorem}
\begin{proof}
    In this proof, we provide only the parts that are different from the proof of Theorem~\ref{thm:R_UCB}.

     We first provide the proof for regret bound of $\Rcal^{\alpha,\pi}(T)=\tilde{\Ocal}\left(\frac{d}{\kappa}\sqrt{KT}Q_{\max}\right)$.

We can show that 
\begin{align*}
    \Rcal^{\alpha,\pi}(T)&=\sum_{t\in[T]}\sum_{n\in[N]}\EE[Q_n(t)(\alpha\mu(n|S_{k_{n,t}^*,t},\theta_{k_{n,t}^*})-\mu(n|S_{k_{n,t},t}^\alpha,\theta_{k_{n,t}}))]\cr &=\sum_{t\in[T]}\sum_{n\in[N]}\EE[Q_n(t)(\alpha\mu(n|S_{k_{n,t}^*,t},\theta_{k_{n,t}^*})-\mu(n|S_{k_{n,t},t}^\alpha,\theta_{k_{n,t}}))\mathbbm{1}(E_t^1)]\cr&\quad+\sum_{t\in[T]}\sum_{n\in[N]}\EE[Q_n(t)(\alpha\mu(n|S_{k_{n,t}^*,t},\theta_{k_{n,t}^*})-\mu(n|S_{k_{n,t},t}^\alpha,\theta_{k_{n,t}}))\mathbbm{1}((E_t^1)^c)]\cr &=\sum_{t\in[T]}\sum_{n\in[N]}\EE[Q_n(t)(\alpha\mu(n|S_{k_{n,t}^*,t},\theta_{k_{n,t}^*})-\mu(n|S_{k_{n,t},t}^\alpha,\theta_{k_{n,t}}))\mathbbm{1}(E_t^1)]+\sum_{t\in[T]}\sum_{n\in[N]}t\mathbb{P}((E_t^1)^c)]  \cr &\le 
\sum_{t\in[T]}\sum_{n\in[N]}\EE[Q_n(t)(\alpha\tilde{\mu}^{UCB}_t(n|S_{k_{n,t}^*,t},\hat{\theta}_{k_{n,t}^*,t})-\mu(n|S_{k_{n,t},t}^\alpha,\theta_{k_{n,t}}))\mathbbm{1}(E_t^1)]+\Ocal(N/T)\cr 
&\le \sum_{t\in[T]}\sum_{n\in[N]}\EE[Q_n(t)((\tilde{\mu}_{t}^{UCB}(n|S_{k_{n,t},t}^\alpha,\hat{\theta}_{k_{n,t},t})-\mu(n|S_{k_{n,t},t}^\alpha,\theta_{k_{n,t}}))\mathbbm{1}(E_t^1)]+\Ocal(N/T)\cr 
&\le \sum_{t\in[T]}\EE[\sum_{k\in[K]}\sum_{n\in S_{k,t}}Q_n(t)(\tilde{\mu}^{UCB}_t(n|S_{k,t}^\alpha)-\mu_t(n|S_{k,t}^\alpha))\mathbbm{1}(E_t^1)]+\Ocal(N/T)\cr 
&\le 2\EE\left[\beta_{T}\sum_{t\in[T]}\sum_{k\in[K]}\max_{n\in S_{k,t}^\alpha}\|x_{n}\|_{V_{k,t}^{-1}}Q_n(t)\right]+\Ocal(N/T)\cr
&\le 2\EE\left[\max_{t\in[T],n\in[N]}Q_n(t)\beta_{T}\sum_{t\in[T]}\sum_{k\in[K]}\max_{n\in S_{k,t}^\alpha}\|x_{n}\|_{V_{k,t}^{-1}}\right]+\Ocal(N/T)\cr
&\le 2\EE\left[\max_{t\in[T],n\in[N]}Q_n(t)\beta_{T}\sqrt{KT\sum_{t\in[T]}\sum_{k\in[K]}\max_{n\in S_{k,t}^\alpha}\|x_{n}\|_{V_{k,t}^{-1}}^2}\right]+\Ocal(N/T)\cr
&=\tilde{\Ocal}\left(\frac{d}{\kappa}\sqrt{KT}Q_{\max}\right).
\end{align*}

By following the similar steps above and the proofs for Theorem~\ref{thm:R_UCB}, we can easily obtain the worst-case regret bound of $\Rcal^{\alpha,\pi}(T)=\tilde{\Ocal}\left(\left(\frac{dNK\min\{N,K\}^3}{\kappa^2\epsilon^{3}}\right)^{1/4}T^{3/4}\right)$, which conclude the proof.
\end{proof}

\subsubsection{$\alpha$-approximation Oracle for Algorithm~\ref{alg:ts}}

We can obtain similar results for Algorithm~\ref{alg:ts} as in the case of Algorithm~\ref{alg:alpha_ucb}.  Here we introduce an algorithm (Algorithm~\ref{alg:ts_alpha}) by modifying Algorithm~\ref{alg:ts} using an $\alpha$-approximation oracle. We explain the distinct parts of the algorithm as follows. We define an oracle $\mathbb{O}^{\alpha}$, which outputs $\{S_{k,t}^{\alpha}\}_{k\in[K]}$ satisfying  
\begin{align}
\max_{\{S_{k}\}_{k\in[K]}\in\mathcal{M}(\Ncal_t)}\sum_{k\in[K]}\sum_{n\in S_k}\alpha Q_n(t) \tilde{\mu}_{t}^{TS}(n|S_k,\{\tilde{\theta}_{k,t}^{(i)}\}_{i\in[M]})&\le \sum_{k\in[K]}\sum_{n\in S_{k,t}^{\alpha}}Q_n(t)\tilde{\mu}_{t}^{TS}(n|S_{k,t}^\alpha,\{\tilde{\theta}_{k,t}^{(i)}\}_{i\in[M]}).\label{eq:ts_oracle_alpha}
\end{align}

\begin{algorithm}[H]
\LinesNotNumbered
  \caption{$\alpha$-approximated Thompson Sampling-Queueing Matching Bandit}\label{alg:ts_alpha}
  \KwIn{$\lambda$, $M$, $\kappa$, $C_1>0$}

 \For{$t=1,\dots,T$}{
 \For{$k\in[K]$}{

$\hat{\theta}_{k,t}\leftarrow \argmin_{\theta\in\Theta}g_{k,t}(\hat{\theta}_{k,t-1})^\top (\theta-\hat{\theta}_{k,t-1})+\frac{1}{2}\|\theta-\hat{\theta}_{k,t-1}\|_{V_{k,t}}^2$

Sample $\{\tilde{\theta}_{k,t}^{(i)}\}_{i\in[M]}$ independently from $\Ncal(\hat{\theta}_{k,t},\beta_t^2V_{k,t}^{-1})$
}

  $\displaystyle\{S_{k,t}^\alpha\}_{k\in[K]}\leftarrow \mathbb{O}^\alpha$ from \eqref{eq:ts_oracle_alpha}  

Offer $\{S_{k,t}^\alpha\}_{k\in[K]}$ and observe preference feedback $y_{n,t}\in\{0,1\}$ for all $n\in S_{k,t}$, $k\in[K]$
 }
\end{algorithm}

For the stability analysis, we provide the following theorem.
\begin{theorem}\label{thm:Q_TS_alpha}
    The time average expected queue length of Algorithm~\ref{alg:ts_alpha} is bounded as
    \[\Qcal(T)=\Ocal\left(\frac{\min\{N,K\}}{\epsilon}+\frac{1}{T}\frac{d^4N^2K^2}{\kappa^4\epsilon^6}\text{\normalfont polylog}(T)\right),\]
   which implies that the algorithm achieves stability as \[\lim_{T\rightarrow \infty}\Qcal(T)=\Ocal\left( \frac{\min\{N,K\}}{\epsilon}\right).\]
\end{theorem}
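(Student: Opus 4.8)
The plan is to follow the proof of Theorem~\ref{thm:Q_TS} essentially verbatim, inserting the $\alpha$-approximation modifications already employed in the proof of Theorem~\ref{thm:Q_ucb_alpha}. First I would take the Lyapunov function $\Vcal(\textbf{Q}(t))=\sum_{n\in[N]}Q_n(t)^2$ and decompose the drift exactly as in Eq.~\eqref{eq:UCB_Lya_gap_alpha}, writing $\sum_{t\in[T]}\Vcal(\textbf{Q}(t+1))-\Vcal(\textbf{Q}(t))$ as the sum of a term comparing $Q_n(t)$ to the virtual update using $\alpha D_n^*(t)$ and a term comparing the actual update to that virtual update. Under the $\alpha$-slackness Assumption~\ref{ass:stable2}, the first group is bounded as in Eq.~\eqref{eq:V(t+1)*-V(t)_alpha} by $-2\epsilon\sum_{t\in[T]}\sum_{n\in[N]}\EE[Q_n(t)]+2\min\{N,K\}T$, and the second group is bounded as in Eq.~\eqref{eq:V-V*_bd_alpha} by $2\sum_{t\in[T]}\sum_{n\in[N]}\EE[(\alpha D_n^*(t)-D_n(t))Q_n(t)]+5\min\{N,K\}T$, so the whole analysis reduces to bounding $\sum_{t\in[T]}\sum_{n\in[N]}\EE[(\alpha\mu(n|S_{k_{n,t}^*,t}^*,\theta_{k_{n,t}^*})-\mu(n|S_{k_{n,t},t}^\alpha,\theta_{k_{n,t}}))Q_n(t)]$.

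For this quantity I would reuse the event decomposition of the proof of Theorem~\ref{thm:Q_TS}: $E_t^1$ (confidence set, Lemma~\ref{lem:ucb_confi}), $E_t^2$ (TS index vs.\ plug-in, Lemma~\ref{lem:mu_TS-mu_bd}), and $E_{n,t}^3=\{\|x_n\|_{V_{k,t}^{-1}}\le\epsilon/C_2(\gamma_t+\beta_t)\ \forall k\}$. On the good event I split $\alpha\mu(n|S^*)-\mu(n|S^\alpha,\theta)=[\alpha\mu(n|S^*)-\tilde\mu_t^{TS}(n|S^\alpha)]+[\tilde\mu_t^{TS}(n|S^\alpha)-\mu(n|S^\alpha,\theta)]$; the second bracket is controlled by $(\gamma_t+2\beta_t)\|x_n\|_{V_{k,t}^{-1}}$ via Lemmas~\ref{lem:mu_TS-mu_bd} and \ref{lem:ucb_mu_tild_mu_gap}, hence by $\epsilon/C_2$ after summing on $E_{n,t}^3$, exactly as in Eq.~\eqref{eq:D^*-D_gap_TS_Step3}. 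The first bracket is where the $\alpha$ factor enters nontrivially, and this is the main obstacle: the oracle $\mathbb{O}^\alpha$ only guarantees $\sum_{k}\sum_{n\in S_{k,t}^\alpha}\tilde\mu_t^{TS}(n|S_{k,t}^\alpha)Q_n(t)\ge\alpha\max_{\{S_k\}}\sum_{k}\sum_{n\in S_k}\tilde\mu_t^{TS}(n|S_k)Q_n(t)\ge\alpha\sum_{k}\sum_{n\in S_{k,t}^*}\tilde\mu_t^{TS}(n|S_{k,t}^*)Q_n(t)$. I would therefore redefine the optimistic set as $\tilde\Theta_t^{opt}=\{\{\theta_k^{(i)}\}:\sum_{k}\sum_{n\in S_{k,t}^\alpha}\tilde\mu_t^{TS}(n|S_{k,t}^\alpha,\{\theta_k^{(i)}\})Q_n(t)>\alpha\sum_{k}\sum_{n\in S_{k,t}^*}\mu(n|S_{k,t}^*,\theta_k)Q_n(t)\}\cap\tilde\Theta_t$; the $\alpha$-analogue of Lemma~\ref{lem:sum_muQ>sum_muQ} then follows because, with probability at least $1/4\sqrt{e\pi}$, $\sum_{k}\sum_{n\in S_{k,t}^*}\tilde\mu_t^{TS}(n|S_{k,t}^*)Q_n(t)>\sum_{k}\sum_{n\in S_{k,t}^*}\mu(n|S_{k,t}^*,\theta_k)Q_n(t)$, so multiplying by $\alpha$ and invoking the oracle bound gives $\PP(E_t(\tilde\theta)\mid\Fcal_{t-1},E_t^1)\ge1/8\sqrt{e\pi}$ for $t\ge t_0$. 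On $E_t(\tilde\theta)$ the first bracket is at most a supremum over $\tilde\Theta_t$ of a difference of TS indices at two parameter choices, bounded by $2\gamma_t\|x_n\|_{V_{k,t}^{-1}}$ and then by $2\gamma_t\epsilon/C_2(\gamma_t+\beta_t)$ on $E_{n,t}^3$; combining with the lower bound $\EE[\sum_k\sum_{n\in S_{k,t}}Q_n(t)\mid\Fcal_{t-1},E_t^1]\ge(1/8\sqrt{e\pi})\EE[\sum_k\sum_{n\in S_{k,t}}Q_n(t)\mid\Fcal_{t-1},E_t^1,E_t(\tilde\theta)]$ as in Eqs.~\eqref{eq:D^*-D_gap_TS_Step4}--\eqref{eq:D^*-D_gap_TS_Step6} yields a bound of $C_3\epsilon\sum_{t\in[T]}\sum_{n\in[N]}\EE[Q_n(t)]$ for the good-event contribution.

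The complement events are handled exactly as in Eqs.~\eqref{eq:D*-D_gap_TS_step7}--\eqref{eq:sum_mu*-mu_E2c}: $(E_t^1)^c$ and $(E_t^2)^c$ each contribute $\Ocal(N\log T)$ via $Q_n(t)\le t$ and $\PP(\cdot)\le\Ocal(1/t^2)$, while $(E_{n,t}^3)^c$ is handled by the averaging trick $Q_n(t)\le(1/h)\sum_{s\le t}Q_n(s)+h$ with $h=\lceil C_4e_T/\epsilon\rceil$ where $e_T=\sum_{n}\sum_{t\in\Tcal_n}\mathbbm{1}((E_{n,t}^3)^c)$, together with the counting bound $e_T\le NK(2/\kappa)(C_2(\gamma_T+\beta_T)/\epsilon)^2$; this gives $(\epsilon/C_4)\sum_{t\in[T]}\sum_{n\in[N]}\EE[Q_n(t)]+\Ocal(N^2K^2(\beta_T+\gamma_T)^4/\kappa^2\epsilon^5)$, and since $\gamma_T=\beta_T\sqrt{d\log(MKT)}$ and $\beta_T^2=\Ocal((d/\kappa)\,\text{polylog}(T))$ this last term is $\Ocal(d^4N^2K^2\,\text{polylog}(T)/\kappa^4\epsilon^5)$. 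Summing all pieces, the drift is at most $7\min\{N,K\}T+2(C_3+1/C_4-1)\epsilon\sum_{t\in[T]}\sum_{n\in[N]}\EE[Q_n(t)]+\Ocal(N\log T)+\Ocal(d^4N^2K^2\,\text{polylog}(T)/\kappa^4\epsilon^5)$; choosing $C_3,C_4$ with $C_3+1/C_4<1$, telescoping with $\Vcal(\textbf{Q}(1))=0$ and $\Vcal(\textbf{Q}(T+1))\ge0$, and dividing by $2\epsilon T$ yields $\Qcal(T)=\Ocal(\min\{N,K\}/\epsilon+d^4N^2K^2\,\text{polylog}(T)/(\kappa^4\epsilon^6 T))$, which gives stability as $T\to\infty$. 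The only genuinely new work is verifying that the $\alpha$ factor passes cleanly through the Thompson Sampling anti-concentration step by applying it to the true optimal assortment $S_{k,t}^*$ before invoking the $\alpha$-guarantee of $\mathbb{O}^\alpha$; with the redefined $\tilde\Theta_t^{opt}$ in place, every remaining estimate is identical to the corresponding step in the proof of Theorem~\ref{thm:Q_TS}.
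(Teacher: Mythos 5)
Your proposal matches the paper's proof of Theorem~\ref{thm:Q_TS_alpha} essentially step for step: the same Lyapunov decomposition against the virtual $\alpha D_n^*(t)$ update under Assumption~\ref{ass:stable2}, the same event decomposition $E_t^1,E_t^2,E_{n,t}^3$, the same redefinition of $\tilde{\Theta}_t^{opt}$ with the comparison against $\alpha\sum_{k}\sum_{n\in S_{k,t}^*}\mu(n|S_{k,t}^*,\theta_k)Q_n(t)$, and the same $\alpha$-analogue of the anti-concentration lemma (the paper's Lemma~\ref{lem:sum_muQ>sum_muQ_alpha}) obtained by applying the Gaussian anti-concentration to $S_{k,t}^*$ and then invoking the oracle guarantee. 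The complement-event and telescoping arguments you describe are identical to the paper's, so the proposal is correct and takes the same route.
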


\begin{proof}
    Here we provide only the proof parts which are different from Theorem~\ref{thm:Q_TS}.
    We analyze the Lyapunov drift as follows with Assumption~\ref{ass:stable2}.
    \begin{align}
    &\sum_{t\in[T]}\EE\left[\Vcal(\textbf{Q}(t+1))-\Vcal(\textbf{Q}(t))\right]\cr &=\sum_{t\in[T]}\sum_{n\in[N]}\EE\left[{(Q_n(t)+A_n(t)-D_n(t))^+}^2-Q_n(t)^2\right]\cr &=\sum_{t\in[T]}\sum_{n\in[N]}\EE\left[(Q_n(t)+A_n(t)-\alpha D_n^*(t))^2-Q_n(t)^2\right]\cr &\quad+\sum_{t\in[T]}\sum_{n\in[N]}\EE\left[{(Q_n(t)+A_n(t)-D_n(t))^+}^2-(Q_n(t)+A_n(t)-\alpha D_n^*(t))^2\right]\cr & \le 7\min\{N,K\}T-\sum_{t\in[T]}\sum_{n\in[N]}2\epsilon\EE[Q_n(t)] +2\sum_{t\in[T]}\sum_{n\in[N]}\EE\left[(\alpha D_n^*(t)-D_n(t))Q_n(t)\right],\label{eq:Ly_drift_TS_step1_alpha}
\end{align}

Then, for bounding Eq.\eqref{eq:Ly_drift_TS_step1_alpha}, we have 
\begin{align}
&\sum_{t\in[T]}\sum_{n\in[N]}\mathbb{E}[(\alpha D_n^*(t)-D_n(t)Q_n(t)]\cr &\le \sum_{t\in[T]}\sum_{n\in[N]}\mathbb{E}[(\alpha\mu(n|S_{k_{n,t}^*,t},\theta_{k_{n,t}^*})-\mu(n|S_{k_{n,t},t}^\alpha,\theta_{k_{n,t}}))Q_n(t)\mathbbm{1}(E_{t}^1\cap E_{t}^2\cap E_{n,t}^3)]\cr &\quad+\sum_{t\in[T]}\sum_{n\in[N]}\mathbb{E}[(\alpha\mu(n|S_{k_{n,t}^*,t},\theta_{k_{n,t}^*})-\mu(n|S_{k_{n,t},t}^\alpha,\theta_{k_{n,t}}))Q_n(t)(\mathbbm{1}(({E_{t}^1})^c)+\mathbbm{1}(({E_{t}^2})^c)+\mathbbm{1}(({E_{n,t}^3})^c))].\cr\label{eq:D*-D_gap_TS_step1_alpha}
\end{align}

We provide a bound for the first term of Eq.\eqref{eq:D*-D_gap_TS_step1_alpha}. We first have 
\begin{align}
    &\sum_{t\in[T]}\sum_{n\in[N]}\mathbb{E}[(\alpha\mu(n|S_{k_{n,t}^*,t},\theta_{k_{n,t}^*})-\mu(n|S_{k_{n,t},t}^\alpha,\theta_{k_{n,t}}))Q_n(t)\mathbbm{1}(E_t^1\cap E_t^2\cap E_{n,t}^3)]\cr &\le
\sum_{t\in[T]}\sum_{n\in[N]}\mathbb{E}[(\alpha\mu(n|S_{k_{n,t}^*,t},\theta_{k_{n,t}^*})-\tilde{\mu}_t^{TS}(n|S_{k_{n,t},t}^\alpha)\cr &\qquad\qquad\qquad+\tilde{\mu}_t^{TS}(n|S_{k_{n,t},t}^\alpha)-\mu(n|S_{k_{n,t},t}^\alpha,\theta_{k_{n,t}}))Q_n(t)\mathbbm{1}(E_t^1\cap E_t^2\cap E_{n,t}^3)].\cr \label{eq:D^*-D_gap_TS_Step2_alpha}
\end{align}

 Now we provide a bound for the first two terms in Eq.\eqref{eq:D^*-D_gap_TS_Step2_alpha}.

 We define  sets \[\tilde{\Theta}_{t}=\left\{\{{\theta}_k^{(i)}\}_{i\in[M],k\in[K]}:\left|\max_{i\in[M]}x_n^\top {\theta}_k^{(i)}-x_n^\top\hat{\theta}_{k,t}\right|\le \gamma_t\|x_{n}\|_{V_{k,t}^{-1}};\: \forall n\in[N], \forall k\in[K]\right\} \text{ and }\] 
\begin{align*}
\tilde{\Theta}_{t}^{opt}=\bigg\{\{{\theta}_k^{(i)}\}_{i\in[M],k\in[K]}: \sum_{k\in[K]}\sum_{n\in S_{k,t}^\alpha}&\tilde{\mu}_t^{TS}(n|S_{k,t}^\alpha,\{\theta_k^{(i)}\}_{i\in [M]})Q_n(t) \cr & > \sum_{k\in[K]}\sum_{n\in S_{k,t}^*}\alpha\mu(n|S_{k,t}^*,\theta_k)Q_n(t)\bigg\}\cap \tilde{\Theta}_{t}.    
\end{align*}
 Then we define event $E_t(\tilde{\theta})=\{\{\tilde{\theta}_{k,t}^{(i)}\}_{i\in[M],k\in[K]}\in \tilde{\Theta}_{t}^{opt}\}$.
Recall $h_{n,k,t}^{TS}=\max_{i\in[M]} x_n^\top\tilde{\theta}_{k,t}^{(i)}$. Then we have 
\begin{align}
&\mathbb{E}\left[\mathbb{E}\left[\sum_{k\in[K]}\left(\sum_{n\in S_{k,t}^*}\alpha\mu(n|S_{k,t}^*,\theta_k)Q_n(t)-\sum_{n\in S_{k,t}^\alpha}\tilde{\mu}_{t}^{TS}(n|S_{k,t}^\alpha,\{\tilde{\theta}_{k,t}^{(i)}\}_{i\in[M]})Q_n(t)\right)\mathbbm{1}(E_{t}^1\cap E_{t}^2\cap E_{n,t}^3)|\mathcal{F}_{t-1},E_t(\tilde{\theta})\right]\right]\cr
     &\le \mathbb{E}\left[\mathbb{E}\left[\left(\sum_{k\in[K]}\sum_{n\in S_{k,t}^*}\alpha\mu(n|S_{k,t}^*,\theta_k)Q_n(t)\right.\right.\right.\cr &\qquad\qquad \left.\left.\left.-\inf_{\{\theta^{(i)}_l\}_{i\in[M],l\in[K]}\in\tilde{\Theta}_t}\sum_{k\in[K]}\sum_{n\in S_{k,t}^\alpha}\tilde{\mu}_{t}^{TS}(n|S_{k,t}^\alpha, \{\theta_k^{(i)}\}_{i\in[M]})Q_n(t)\right)\mathbbm{1}(E_{t}^1\cap E_{t}^2\cap E_{n,t}^3)|\mathcal{F}_{t-1},E_t(\tilde{\theta})\right]\right]\cr
&\le\mathbb{E}\left[\mathbb{E}\left[\left(\sum_{k\in[K]}\sum_{n\in S_{k,t}^\alpha}\tilde{\mu}_t^{TS}(n|S_{k,t}^\alpha,\{\tilde{\theta}_{k,t}^{(i)}\}_{i\in[M]})\right.\right.\right.\cr &\qquad\left.\left.\left.-\inf_{\{\theta^{(i)}_l\}_{i\in[M],l\in[K]}\in\tilde{\Theta}_t}\sum_{k\in[K]}\sum_{n\in S_{k,t}^\alpha}\tilde{\mu}_{t}^{TS}(n|S_{k,t}^\alpha, \{\theta_k^{(i)}\}_{i\in[M]})\right)Q_n(t)\mathbbm{1}(E_{t}^1\cap E_{t}^2\cap E_{n,t}^3)|\mathcal{F}_{t-1}, E_t(\tilde{\theta})\right]\right]\cr 
     &= \frac{2\gamma_t\epsilon}{C_2(\gamma_t+\beta_t)}\EE\left[\sum_{k\in[K]}\sum_{n\in S_{k,t}^\alpha}\mathbb{E}\left[Q_n(t)\big|\mathcal{F}_{t-1},E_t(\tilde{\theta}),E_{t}^1\right]\mathbb{P}(E_{t}^1|\Fcal_{t-1})\right],\label{eq:D^*-D_gap_TS_Step4_alpha}
\end{align}

We provide a lemma below for further analysis.
\begin{lemma}\label{lem:sum_muQ>sum_muQ_alpha} For all $t\in[T]$, we have
    \[\PP\left(\sum_{k\in[K]}\sum_{n\in S_{k,t}^\alpha}\tilde{\mu}_t^{TS}(n|S_{k,t})Q_n(t) > \sum_{k\in[K]}\sum_{n\in S_{k,t}^*}\alpha\mu(n|S_{k,t}^*,\theta_k)Q_n(t)|\mathcal{F}_{t-1},E_t^1\right)\ge 1/4\sqrt{e\pi}\]
\end{lemma}
\begin{proof} 
Given $\Fcal_{t-1}$, $x_n^\top \tilde{\theta}_{k,t}^{(i)}$ follows Gaussian distribution with mean $x_n^\top \hat{\theta}_{k,t}$ and standard deviation $\beta_t\|x_n\|_{V_{k,t}^{-1}}$. Then we have
 \begin{align*}\PP\left(\max_{i\in[M]}x_n^\top\tilde{\theta}_{k,t}^{(i)}>x_n^\top \theta_k|\mathcal{F}_{t-1},E_t^1\right)&=1-\PP\left(x_n^\top \theta_{k,t}^{(i)}\le x_n^\top \theta_k; \forall i\in[M]|\mathcal{F}_{t-1},E_t^1\right)\cr &
     =1-\PP\left(Z_i\le \frac{x_n^\top \theta_k-x_n^\top \hat{\theta}_{k,t}}{\beta_t\|x_n\|_{V_{k,t}^{-1}}};\forall i\in[M]|\mathcal{F}_{t-1},E_t^1\right)\cr &\ge 1-\PP\left(Z\le 1\right)^M,
 \end{align*}
where $Z_i$ and $Z$ are standard normal random variables.
Then we can show that 
\begin{align*}
&\PP\left(\sum_{k\in[K]}\sum_{n\in S_{k,t}^\alpha}\tilde{\mu}_t^{TS}(n|S_{k,t}^\alpha)Q_n(t) > \sum_{k\in[K]}\sum_{n\in S_{k,t}^*}\alpha\mu(n|S_{k,t}^*,\theta_k)Q_n(t)|\mathcal{F}_{t-1},E_t^1\right)\cr &\ge \PP\left(\sum_{k\in[K]}\sum_{n\in S_{k,t}^\alpha}\alpha\tilde{\mu}_t^{TS}(n|S_{k,t}^\alpha)Q_n(t) > \sum_{k\in[K]}\sum_{n\in S_{k,t}^*}\alpha\mu(n|S_{k,t}^*,\theta_k)Q_n(t)|\mathcal{F}_{t-1},E_t^1\right)\cr&\ge\PP\left(\sum_{k\in[K]}\sum_{n\in S_{k,t}^*}\alpha\tilde{\mu}_t^{TS}(n|S_{k,t}^*)Q_n(t) > \sum_{k\in[K]}\sum_{n\in S_{k,t}^*}\alpha\mu(n|S_{k,t}^*,\theta_k)Q_n(t)|\mathcal{F}_{t-1},E_t^1\right)\cr&=\PP\left(\sum_{k\in[K]}\sum_{n\in S_{k,t}^*}\tilde{\mu}_t^{TS}(n|S_{k,t}^*)Q_n(t) > \sum_{k\in[K]}\sum_{n\in S_{k,t}^*}\mu(n|S_{k,t}^*,\theta_k)Q_n(t)|\mathcal{F}_{t-1},E_t^1\right)\cr  &\ge \PP\left(\max_{i\in[M]}x_n^\top\tilde{\theta}_{k,t}^{(i)}>x_n^\top \theta_k; \forall n\in S_{k,t}^*, \forall k\in[K]|\mathcal{F}_{t-1},E_t^1\right)\cr 
&\ge 1-LK\PP(Z\le 1)^M
\cr &\ge 1-LK(1-1/4\sqrt{e\pi})^M \cr &
\ge \frac{1}{4\sqrt{e\pi}},
\end{align*}
where the second last inequality is obtained from $\mathbb{P}(Z\le 1)\le 1-1/4\sqrt{e\pi}$ using the anti-concentration of standard normal distribution, and the last inequality comes from $M=\lceil 1-\frac{\log KL}{\log(1-1/4\sqrt{e\pi})}\rceil$.
\end{proof}
The rest of the proof can be easily obtained by following the proof steps in Theorem~\ref{thm:Q_TS}.
\end{proof}

Now, we investigate the regret of Algorithm~\ref{alg:ts_alpha}. The $\alpha$-regret regarding policy $\pi$ is defined as 
\[\Rcal^{\alpha,\pi}(T)=\sum_{t\in[T]}\sum_{n\in[N]}\EE\left[(\alpha\mu(n|S_{k_{n,t}^*,t}^*,\theta_{k_{n,t}^*})-\mu(n|S_{k_{n,t},t},\theta_{k_{n,t}}))Q_n(t)\right].\]
 The algorithm achieves the following regret bound.
\begin{theorem} The policy $\pi$ of Algorithm~\ref{alg:ts_alpha} achieves a regret bound of 
\[\Rcal^{\alpha,\pi}(T)= \tilde{\Ocal}\left(\min\left\{\frac{d^{3/2}}{\kappa}\sqrt{KT}Q_{\max},\left(\frac{d^2NK\min\{N,K\}^3}{\kappa^2\epsilon^{3}}\right)^{1/4}T^{3/4}\right\}\right).\]
\end{theorem}
\begin{proof}
    Here we provide only the proof parts which are different from Theorem~\ref{thm:R_TS}.

    We first provide the proof for the regret bound of $ \tilde{\Ocal}\left(\frac{d^{3/2}}{\kappa}\sqrt{KT}Q_{\max}\right)$.
     We have
\begin{align*}
&\Rcal^\pi(T)=\sum_{t\in[T]}\sum_{n\in[N]}\EE[Q_n(t)(\alpha\mu(n|S_{k_{n,t}^*,t},\theta_{k_{n,t}^*})-\mu(n|S_{k_{n,t},t}^\alpha,\theta_{k_{n,t}}))]\cr
    &\le \sum_{t\in[T]}\sum_{n\in[N]}\EE[Q_n(t)(\alpha\mu(n|S_{k_{n,t}^*,t},\theta_{k_{n,t}^*})-\tilde{\mu}^{TS}_t(n|S_{k_{n,t},t}^\alpha)+\tilde{\mu}^{TS}_t(n|S_{k_{n,t},t}^\alpha)-\mu(n|S_{k_{n,t},t}^\alpha,\theta_{k_{n,t}}))\mathbbm{1}(E_t^1\cap E_t^2)]\cr &\quad+\sum_{t\in[T]}\sum_{n\in[N]}\EE[Q_n(t)(\alpha\mu(n|S_{k_{n,t}^*,t},\theta_{k_{n,t}^*})-\tilde{\mu}^{TS}_t(n|S_{k_{n,t},t}^\alpha)+\tilde{\mu}^{TS}_t(n|S_{k_{n,t},t}^\alpha)-\mu(n|S_{k_{n,t},t}^\alpha,\theta_{k_{n,t}}))\mathbbm{1}((E_t^1)^c)]\cr &\quad+\sum_{t\in[T]}\sum_{n\in[N]}\EE[Q_n(t)(\alpha\mu(n|S_{k_{n,t}^*,t},\theta_{k_{n,t}^*})-\tilde{\mu}^{TS}_t(n|S_{k_{n,t},t}^\alpha)+\tilde{\mu}^{TS}_t(n|S_{k_{n,t},t}^\alpha)-\mu(n|S_{k_{n,t},t}^\alpha,\theta_{k_{n,t}}))\mathbbm{1}((E_t^2)^c)].
\end{align*}
By following the similar proof steps in Theorem~\ref{thm:Q_TS_alpha}, we can obtain
\begin{align*}
    &\sum_{t\in[T]}\sum_{n\in[N]}\EE[Q_n(t)(\alpha\mu(n|S_{k_{n,t}^*,t},\theta_{k_{n,t}^*})-\tilde{\mu}^{TS}_t(n|S_{k_{n,t},t}^\alpha)\cr &\qquad\qquad\qquad+\tilde{\mu}^{TS}_t(n|S_{k_{n,t},t}^\alpha)-\mu(n|S_{k_{n,t},t}^\alpha,\theta_{k_{n,t}}))\mathbbm{1}(E_t^1\cap E_t^2)]\cr
    &\le   C_2\sum_{t\in[T]}\sum_{k\in[K]}\mathbb{E}\left[\max_{n\in S_{k,t}^\alpha}(\gamma_t+\beta_t)\|x_n\|_{V_{k,t}^{-1}}Q_n(t)\right].
\end{align*}

The rest of the proof can be easily obtained from the proof steps in Theorem~\ref{thm:R_TS} by incorporating techniques in the proof of Theorem~\ref{thm:Q_TS_alpha}. 
\end{proof}

\subsection{Additional Experiments}\label{app:exp}
\begin{figure*}[ht]
\centering
\includegraphics[width=0.7\linewidth]{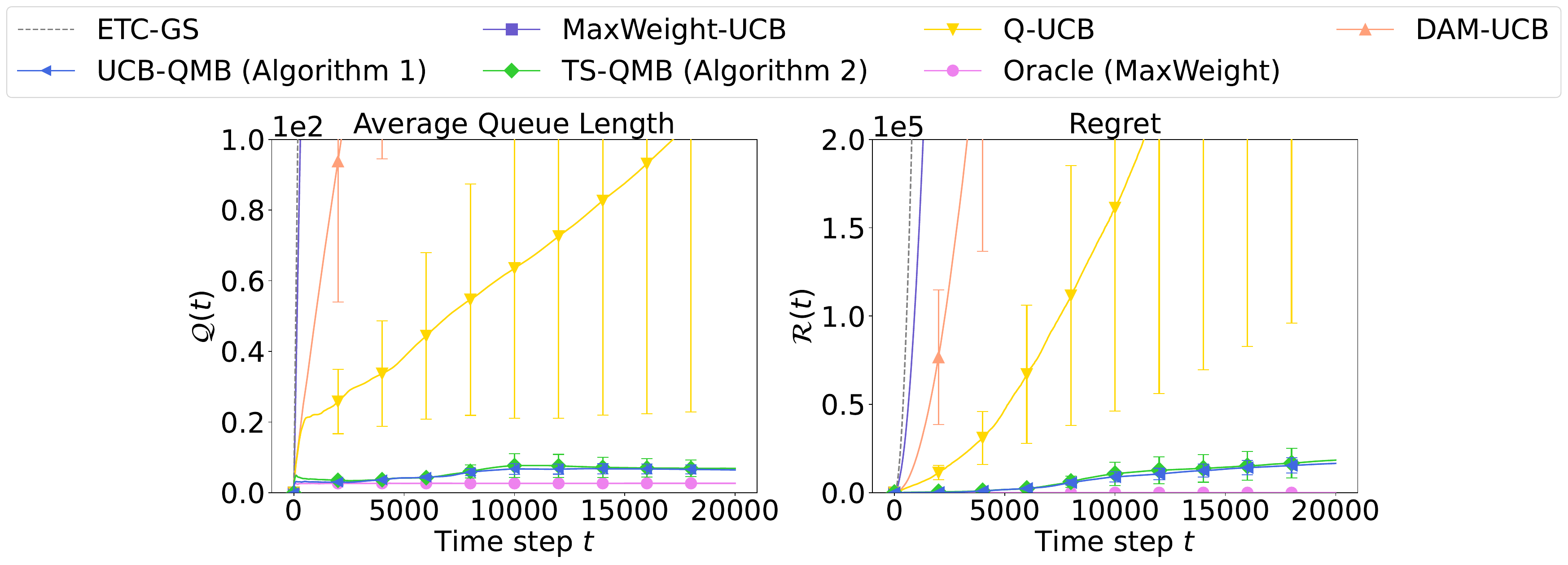}
\vspace{-3mm}
\caption{ Experimental results with $N=4, K=3, L=2, d=2$ for (left) average queue length and (right) regret}
\label{fig:alg2}
\end{figure*}

\subsection{Notation Table}
\begin{table}[H]
\caption{Notation Table.}{
\begin{tabularx}{\textwidth}{p{0.28\textwidth}X}
\toprule
  $T$ & Time horizon \\
  $N$ & Number of agents (queues)\\
  $K$   & Number of arms (servers) \\
    $d$   & Dimension of model parameters and features \\
  $x_n$ & Feature vector of agent $n$\\
  $\theta_k$ & Model parameter vector of arm $k$\\
  $Q_n(t)$ & Length of agent $n$ at time $t$\\
  $\epsilon$ & Traffic slackness parameter\\
  $\mu(n|S_k,\theta_k)$ & Service rate of arm $k$ with $\theta_k$ for a job in agent $n$ given assortment $S_k$\\
  $n_0$ & Null agent\\
  $\lambda_n$ & Job arrival rate for agent $n$ \\
  $k_{n,t}^\pi(=k_{n,t})$ & A server that agent $n$ is assigned at time $t$ according to $\pi$ (simply $k_{n,t}$) \\ 
  $A_n(t)$ & The number of arrival jobs in agent $n$ at time $t$; Random variable with mean $\lambda_n$\\
  $D_n(t|S_k)$  & The number of departure job in agent $n$ by arm $k$ given $S_k$; Random variable with mean $\mu(n|S_k,\theta_k)$\\
  $\mathcal{Q}(T)$ & Average queue lengths over horizon time $T$\\
  $\mathcal{R}^\pi(T)$ & Cumulative regret under $\pi$ over $T$\\
  $\kappa$ & Regularity parameter for MNL \\
  $\mathcal{N}_t$ &Set of non-empty agents at time $t$ \\ 
  $\mathcal{M}()$& Set of feasible disjoint assortments given a set of agents\\
  $S_{k,t}$ & Set of agents assigned to arm $k$ by policy $\pi$\\
    $S_{k,t}^*$ & Set of agents assigned to arm $k$ by the oracle $\pi^*$\\
     $k_{n,t}^{\pi^*}(=k_{n,t}^*)$ & A server that agent $n$ is assigned by $\pi^*$ at time $t$\\   
     $y_{n,t}$& Preference feedback, $\{0,1\}$, for assigned agent $n$ at time $t$\\
\bottomrule
\end{tabularx}
}
\end{table}

 

\newpage

\newpage
\section*{NeurIPS Paper Checklist}

\begin{enumerate}

\item {\bf Claims}
    \item[] Question: Do the main claims made in the abstract and introduction accurately reflect the paper's contributions and scope?
    \item[] Answer: \answerYes{} 
    \item[] Justification:
    Through the abstract and introduction, we explain our setting with providing motivation examples and summarize our contributions.
    
    \item[] Guidelines:
    \begin{itemize}
        \item The answer NA means that the abstract and introduction do not include the claims made in the paper.
        \item The abstract and/or introduction should clearly state the claims made, including the contributions made in the paper and important assumptions and limitations. A No or NA answer to this question will not be perceived well by the reviewers. 
        \item The claims made should match theoretical and experimental results, and reflect how much the results can be expected to generalize to other settings. 
        \item It is fine to include aspirational goals as motivation as long as it is clear that these goals are not attained by the paper. 
    \end{itemize}

\item {\bf Limitations}
    \item[] Question: Does the paper discuss the limitations of the work performed by the authors?
    \item[] Answer: 
    \answerYes{}
    \item[] Justification: In the limitations section (Section~\ref{sec:lim}), we provide several avenues for interesting future work. We also discuss the computational efficiency of our algorithms.
    \item[] Guidelines:
    \begin{itemize}
        \item The answer NA means that the paper has no limitation while the answer No means that the paper has limitations, but those are not discussed in the paper. 
        \item The authors are encouraged to create a separate "Limitations" section in their paper.
        \item The paper should point out any strong assumptions and how robust the results are to violations of these assumptions (e.g., independence assumptions, noiseless settings, model well-specification, asymptotic approximations only holding locally). The authors should reflect on how these assumptions might be violated in practice and what the implications would be.
        \item The authors should reflect on the scope of the claims made, e.g., if the approach was only tested on a few datasets or with a few runs. In general, empirical results often depend on implicit assumptions, which should be articulated.
        \item The authors should reflect on the factors that influence the performance of the approach. For example, a facial recognition algorithm may perform poorly when image resolution is low or images are taken in low lighting. Or a speech-to-text system might not be used reliably to provide closed captions for online lectures because it fails to handle technical jargon.
        \item The authors should discuss the computational efficiency of the proposed algorithms and how they scale with dataset size.
        \item If applicable, the authors should discuss possible limitations of their approach to address problems of privacy and fairness.
        \item While the authors might fear that complete honesty about limitations might be used by reviewers as grounds for rejection, a worse outcome might be that reviewers discover limitations that aren't acknowledged in the paper. The authors should use their best judgment and recognize that individual actions in favor of transparency play an important role in developing norms that preserve the integrity of the community. Reviewers will be specifically instructed to not penalize honesty concerning limitations.
    \end{itemize}

\item {\bf Theory Assumptions and Proofs}
    \item[] Question: For each theoretical result, does the paper provide the full set of assumptions and a complete (and correct) proof?
    \item[] Answer: \answerYes{} 
    \item[] Justification: We provide assumptions in Section~\ref{sec:prob} and proof sketches of some of the main theorems in the main text, with complete proofs for theorems in the appendix.
    \item[] Guidelines:
    \begin{itemize}
        \item The answer NA means that the paper does not include theoretical results. 
        \item All the theorems, formulas, and proofs in the paper should be numbered and cross-referenced.
        \item All assumptions should be clearly stated or referenced in the statement of any theorems.
        \item The proofs can either appear in the main paper or the supplemental material, but if they appear in the supplemental material, the authors are encouraged to provide a short proof sketch to provide intuition. 
        \item Inversely, any informal proof provided in the core of the paper should be complemented by formal proofs provided in appendix or supplemental material.
        \item Theorems and Lemmas that the proof relies upon should be properly referenced. 
    \end{itemize}

    \item {\bf Experimental Result Reproducibility}
    \item[] Question: Does the paper fully disclose all the information needed to reproduce the main experimental results of the paper to the extent that it affects the main claims and/or conclusions of the paper (regardless of whether the code and data are provided or not)?
    \item[] Answer: \answerYes{} 
    \item[] Justification: 
    In Section~\ref{sec:exp}, we provide all the information necessary for conducting the synthetic experiments.
    \item[] Guidelines:
    \begin{itemize}
        \item The answer NA means that the paper does not include experiments.
        \item If the paper includes experiments, a No answer to this question will not be perceived well by the reviewers: Making the paper reproducible is important, regardless of whether the code and data are provided or not.
        \item If the contribution is a dataset and/or model, the authors should describe the steps taken to make their results reproducible or verifiable. 
        \item Depending on the contribution, reproducibility can be accomplished in various ways. For example, if the contribution is a novel architecture, describing the architecture fully might suffice, or if the contribution is a specific model and empirical evaluation, it may be necessary to either make it possible for others to replicate the model with the same dataset, or provide access to the model. In general. releasing code and data is often one good way to accomplish this, but reproducibility can also be provided via detailed instructions for how to replicate the results, access to a hosted model (e.g., in the case of a large language model), releasing of a model checkpoint, or other means that are appropriate to the research performed.
        \item While NeurIPS does not require releasing code, the conference does require all submissions to provide some reasonable avenue for reproducibility, which may depend on the nature of the contribution. For example
        \begin{enumerate}
            \item If the contribution is primarily a new algorithm, the paper should make it clear how to reproduce that algorithm.
            \item If the contribution is primarily a new model architecture, the paper should describe the architecture clearly and fully.
            \item If the contribution is a new model (e.g., a large language model), then there should either be a way to access this model for reproducing the results or a way to reproduce the model (e.g., with an open-source dataset or instructions for how to construct the dataset).
            \item We recognize that reproducibility may be tricky in some cases, in which case authors are welcome to describe the particular way they provide for reproducibility. In the case of closed-source models, it may be that access to the model is limited in some way (e.g., to registered users), but it should be possible for other researchers to have some path to reproducing or verifying the results.
        \end{enumerate}
    \end{itemize}

\item {\bf Open access to data and code}
    \item[] Question: Does the paper provide open access to the data and code, with sufficient instructions to faithfully reproduce the main experimental results, as described in supplemental material?
    \item[] Answer: \answerYes{} 
    \item[] Justification: There is a link to our code in Section~\ref{sec:exp}.
    \item[] Guidelines:
    \begin{itemize}
        \item The answer NA means that paper does not include experiments requiring code.
        \item Please see the NeurIPS code and data submission guidelines (\url{https://nips.cc/public/guides/CodeSubmissionPolicy}) for more details.
        \item While we encourage the release of code and data, we understand that this might not be possible, so “No” is an acceptable answer. Papers cannot be rejected simply for not including code, unless this is central to the contribution (e.g., for a new open-source benchmark).
        \item The instructions should contain the exact command and environment needed to run to reproduce the results. See the NeurIPS code and data submission guidelines (\url{https://nips.cc/public/guides/CodeSubmissionPolicy}) for more details.
        \item The authors should provide instructions on data access and preparation, including how to access the raw data, preprocessed data, intermediate data, and generated data, etc.
        \item The authors should provide scripts to reproduce all experimental results for the new proposed method and baselines. If only a subset of experiments are reproducible, they should state which ones are omitted from the script and why.
        \item At submission time, to preserve anonymity, the authors should release anonymized versions (if applicable).
        \item Providing as much information as possible in supplemental material (appended to the paper) is recommended, but including URLs to data and code is permitted.
    \end{itemize}

\item {\bf Experimental Setting/Details}
    \item[] Question: Does the paper specify all the training and test details (e.g., data splits, hyperparameters, how they were chosen, type of optimizer, etc.) necessary to understand the results?
    \item[] Answer: \answerYes{} 
    \item[] Justification: We use synthetic datasets and provide details on how to generate them in Section~\ref{sec:exp}.
    \item[] Guidelines:
    \begin{itemize}
        \item The answer NA means that the paper does not include experiments.
        \item The experimental setting should be presented in the core of the paper to a level of detail that is necessary to appreciate the results and make sense of them.
        \item The full details can be provided either with the code, in appendix, or as supplemental material.
    \end{itemize}

\item {\bf Experiment Statistical Significance}
    \item[] Question: Does the paper report error bars suitably and correctly defined or other appropriate information about the statistical significance of the experiments?
    \item[] Answer: \answerYes{} 
    \item[] Justification: In our experimental results in Section~\ref{sec:exp}, we include error bars of standard deviation along with expectation values. 
    \item[] Guidelines:
    \begin{itemize}
        \item The answer NA means that the paper does not include experiments.
        \item The authors should answer "Yes" if the results are accompanied by error bars, confidence intervals, or statistical significance tests, at least for the experiments that support the main claims of the paper.
        \item The factors of variability that the error bars are capturing should be clearly stated (for example, train/test split, initialization, random drawing of some parameter, or overall run with given experimental conditions).
        \item The method for calculating the error bars should be explained (closed form formula, call to a library function, bootstrap, etc.)
        \item The assumptions made should be given (e.g., Normally distributed errors).
        \item It should be clear whether the error bar is the standard deviation or the standard error of the mean.
        \item It is OK to report 1-sigma error bars, but one should state it. The authors should preferably report a 2-sigma error bar than state that they have a 96\% CI, if the hypothesis of Normality of errors is not verified.
        \item For asymmetric distributions, the authors should be careful not to show in tables or figures symmetric error bars that would yield results that are out of range (e.g. negative error rates).
        \item If error bars are reported in tables or plots, The authors should explain in the text how they were calculated and reference the corresponding figures or tables in the text.
    \end{itemize}

\item {\bf Experiments Compute Resources}
    \item[] Question: For each experiment, does the paper provide sufficient information on the computer resources (type of compute workers, memory, time of execution) needed to reproduce the experiments?
    \item[] Answer: \answerNo{} 
    \item[] Justification: The conducted experiments do not require significant computing power.  
    \item[] Guidelines:
    \begin{itemize}
        \item The answer NA means that the paper does not include experiments.
        \item The paper should indicate the type of compute workers CPU or GPU, internal cluster, or cloud provider, including relevant memory and storage.
        \item The paper should provide the amount of compute required for each of the individual experimental runs as well as estimate the total compute. 
        \item The paper should disclose whether the full research project required more compute than the experiments reported in the paper (e.g., preliminary or failed experiments that didn't make it into the paper). 
    \end{itemize}
    
\item {\bf Code Of Ethics}
    \item[] Question: Does the research conducted in the paper conform, in every respect, with the NeurIPS Code of Ethics \url{https://neurips.cc/public/EthicsGuidelines}?
    \item[] Answer: \answerYes{}
    \item[] Justification: We have reviewed the NeurIPS Code of Ethics.
    
    \item[] Guidelines:
    \begin{itemize}
        \item The answer NA means that the authors have not reviewed the NeurIPS Code of Ethics.
        \item If the authors answer No, they should explain the special circumstances that require a deviation from the Code of Ethics.
        \item The authors should make sure to preserve anonymity (e.g., if there is a special consideration due to laws or regulations in their jurisdiction).
    \end{itemize}

\item {\bf Broader Impacts}
    \item[] Question: Does the paper discuss both potential positive societal impacts and negative societal impacts of the work performed?
    \item[] Answer: \answerNA{} 
    \item[] Justification: Given that this study primarily focuses on theoretical analysis, we do not foresee any negative social consequences.
    \item[] Guidelines:
    \begin{itemize}
        \item The answer NA means that there is no societal impact of the work performed.
        \item If the authors answer NA or No, they should explain why their work has no societal impact or why the paper does not address societal impact.
        \item Examples of negative societal impacts include potential malicious or unintended uses (e.g., disinformation, generating fake profiles, surveillance), fairness considerations (e.g., deployment of technologies that could make decisions that unfairly impact specific groups), privacy considerations, and security considerations.
        \item The conference expects that many papers will be foundational research and not tied to particular applications, let alone deployments. However, if there is a direct path to any negative applications, the authors should point it out. For example, it is legitimate to point out that an improvement in the quality of generative models could be used to generate deepfakes for disinformation. On the other hand, it is not needed to point out that a generic algorithm for optimizing neural networks could enable people to train models that generate Deepfakes faster.
        \item The authors should consider possible harms that could arise when the technology is being used as intended and functioning correctly, harms that could arise when the technology is being used as intended but gives incorrect results, and harms following from (intentional or unintentional) misuse of the technology.
        \item If there are negative societal impacts, the authors could also discuss possible mitigation strategies (e.g., gated release of models, providing defenses in addition to attacks, mechanisms for monitoring misuse, mechanisms to monitor how a system learns from feedback over time, improving the efficiency and accessibility of ML).
    \end{itemize}
    
\item {\bf Safeguards}
    \item[] Question: Does the paper describe safeguards that have been put in place for responsible release of data or models that have a high risk for misuse (e.g., pretrained language models, image generators, or scraped datasets)?
   \item[] Answer: \answerNA{} 
    \item[] Justification: This paper poses no such risks.
    \item[] Guidelines:
    \begin{itemize}
        \item The answer NA means that the paper poses no such risks.
        \item Released models that have a high risk for misuse or dual-use should be released with necessary safeguards to allow for controlled use of the model, for example by requiring that users adhere to usage guidelines or restrictions to access the model or implementing safety filters. 
        \item Datasets that have been scraped from the Internet could pose safety risks. The authors should describe how they avoided releasing unsafe images.
        \item We recognize that providing effective safeguards is challenging, and many papers do not require this, but we encourage authors to take this into account and make a best faith effort.
    \end{itemize}

\item {\bf Licenses for existing assets}
    \item[] Question: Are the creators or original owners of assets (e.g., code, data, models), used in the paper, properly credited and are the license and terms of use explicitly mentioned and properly respected?
    \item[] Answer: \answerNA{} 
    \item[] Justification: This paper does not use existing assets.
    \item[] Guidelines:
    \begin{itemize}
        \item The answer NA means that the paper does not use existing assets.
        \item The authors should cite the original paper that produced the code package or dataset.
        \item The authors should state which version of the asset is used and, if possible, include a URL.
        \item The name of the license (e.g., CC-BY 4.0) should be included for each asset.
        \item For scraped data from a particular source (e.g., website), the copyright and terms of service of that source should be provided.
        \item If assets are released, the license, copyright information, and terms of use in the package should be provided. For popular datasets, \url{paperswithcode.com/datasets} has curated licenses for some datasets. Their licensing guide can help determine the license of a dataset.
        \item For existing datasets that are re-packaged, both the original license and the license of the derived asset (if it has changed) should be provided.
        \item If this information is not available online, the authors are encouraged to reach out to the asset's creators.
    \end{itemize}

\item {\bf New Assets}
    \item[] Question: Are new assets introduced in the paper well documented and is the documentation provided alongside the assets?
    \item[] Answer: \answerNA{} 
    \item[] Justification:  This paper does not release new assets.
    \item[] Guidelines:
    \begin{itemize}
        \item The answer NA means that the paper does not release new assets.
        \item Researchers should communicate the details of the dataset/code/model as part of their submissions via structured templates. This includes details about training, license, limitations, etc. 
        \item The paper should discuss whether and how consent was obtained from people whose asset is used.
        \item At submission time, remember to anonymize your assets (if applicable). You can either create an anonymized URL or include an anonymized zip file.
    \end{itemize}

\item {\bf Crowdsourcing and Research with Human Subjects}
    \item[] Question: For crowdsourcing experiments and research with human subjects, does the paper include the full text of instructions given to participants and screenshots, if applicable, as well as details about compensation (if any)? 
    \item[] Answer: \answerNA{} 
    \item[] Justification:  This paper does not involve crowdsourcing nor research with human subjects.
    \item[] Guidelines:
    \begin{itemize}
        \item The answer NA means that the paper does not involve crowdsourcing nor research with human subjects.
        \item Including this information in the supplemental material is fine, but if the main contribution of the paper involves human subjects, then as much detail as possible should be included in the main paper. 
        \item According to the NeurIPS Code of Ethics, workers involved in data collection, curation, or other labor should be paid at least the minimum wage in the country of the data collector. 
    \end{itemize}

\item {\bf Institutional Review Board (IRB) Approvals or Equivalent for Research with Human Subjects}
    \item[] Question: Does the paper describe potential risks incurred by study participants, whether such risks were disclosed to the subjects, and whether Institutional Review Board (IRB) approvals (or an equivalent approval/review based on the requirements of your country or institution) were obtained?
    \item[] Answer: \answerNA{} 
    \item[] Justification: This paper does not involve crowdsourcing nor research with human subjects. 
    \item[] Guidelines:
    \begin{itemize}
        \item The answer NA means that the paper does not involve crowdsourcing nor research with human subjects.
        \item Depending on the country in which research is conducted, IRB approval (or equivalent) may be required for any human subjects research. If you obtained IRB approval, you should clearly state this in the paper. 
        \item We recognize that the procedures for this may vary significantly between institutions and locations, and we expect authors to adhere to the NeurIPS Code of Ethics and the guidelines for their institution. 
        \item For initial submissions, do not include any information that would break anonymity (if applicable), such as the institution conducting the review.
    \end{itemize}

\end{enumerate}


\end{document}